\setlist[itemize]{leftmargin=*}
\setlist[enumerate]{leftmargin=*}
\title{Cluster-aware Semi-supervised Learning: Relational Knowledge Distillation Provably Learns Clustering}
\author{%
  Yijun Dong\thanks{Equal contribution.} \\
  Courant Institute of Mathematical Sciences\\
  New York University\\
  New York, NY\\
  \texttt{yd1319@nyu.edu} \\
  \And 
  Kevin Miller$^*$ \\
  Oden Institute for Computational \\
  Engineering \& Science\\
  University of Texas at Austin\\
  Austin, TX  \\
  \texttt{ksmiller@utexas.edu} \\
  \And
  Qi Lei \\
  Courant Institute of Mathematical Sciences\\
  \& Center of Data Science \\
  New York University \\
  New York, NY \\
  \texttt{ql518@nyu.edu} \\
  \And
  Rachel Ward \\
  Oden Institute for Computational \\
  Engineering \& Science\\
  University of Texas at Austin\\
  Austin, TX  \\
  \texttt{rward@math.utexas.edu} \\
  % \And
  % Coauthor \\
  % Affiliation \\
  % Address \\
  % \texttt{email} \\
}
\begin{document}

\maketitle

\begingroup
\let\clearpage\relax
\begin{abstract} 
    Despite the empirical success and practical significance of (relational) knowledge distillation that matches (the relations of) features between teacher and student models, the corresponding theoretical interpretations remain limited for various knowledge distillation paradigms. 
    In this work, we take an initial step toward a theoretical understanding of relational knowledge distillation (RKD), with a focus on semi-supervised classification problems. 
    We start by casting RKD as spectral clustering on a population-induced graph unveiled by a teacher model.
    Via a notion of clustering error that quantifies the discrepancy between the predicted and ground truth clusterings, we illustrate that RKD over the population provably leads to low clustering error. Moreover, we provide a sample complexity bound for RKD with limited unlabeled samples.
    For semi-supervised learning, we further demonstrate the label efficiency of RKD through a general framework of cluster-aware semi-supervised learning that assumes low clustering errors.
    Finally, by unifying data augmentation consistency regularization into this cluster-aware framework, we show that despite the common effect of learning accurate clusterings, RKD facilitates a ``global'' perspective through spectral clustering, whereas consistency regularization focuses on a ``local'' perspective via expansion.
\end{abstract}

\section{Introduction}\label{sec:intro}
The immense volume of training data is a vital driving force behind the unparalleled power of modern deep neural networks. 
Nevertheless, data collection can be quite resource-intensive; in particular, data labeling may be prohibitively costly. 
With an aim to lessen the workload associated with data labeling, semi-supervised learning capitalizes on more easily accessible unlabeled data via diverse approaches of pseudo-labeling. For instance, data augmentation consistency regularization~\citep{sohn2020fixmatch,berthelot2019mixmatch} generates pseudo-labels from the current model predictions on carefully designed data augmentations. Alternatively, knowledge distillation~\citep{hinton2015distilling} can be leveraged to gauge representations of unlabeled samples via pretrained teacher models.

Meanwhile, as the scale of training state-of-the-art models explodes, computational costs and limited data are becoming significant obstacles to learning from scratch. 
In this context, (data-free) knowledge distillation~\citep{liu2021data,wang2021knowledge} (where pretrained models are taken as teacher oracles without access to their associated training data) is taking an increasingly crucial role in learning efficiently from powerful existing models.

Despite the substantial progresses in knowledge distillation (KD) in practice, its theoretical understanding remains limited, possibly due to the abundant diversity of KD strategies. Most existing theoretical analyses of KD~\citep{ji2020knowledge,allen2020towards,harutyunyan2023supervision,hsu2021generalization,wangmakes} focus on feature matching between the teacher and student models~\citep{hinton2015distilling}, while a much broader spectrum of KD algorithms~\citep{liu2019knowledge,park2019relational,chen2021deep,qian2022improved} demonstrates appealing performance in practice. 
Recently, \emph{relational knowledge distillation} (RKD)~\citep{park2019relational,liu2019knowledge} has achieved remarkable empirical successes by matching the inter-feature relationships (instead of features themselves) between the teacher and student models. This drives us to focus on RKD and motivates our analysis with the following question:
\begin{center}
    \emph{What perspective of the data does relational knowledge distillation learn, and how efficiently?}
\end{center}

\paragraph{(RKD learns spectral clustering.)} 
In light of the connection between RKD and instance relationship graphs (IRG)~\citep{liu2019knowledge}\footnote{
    An IRG is a graph with samples as vertices and sample relationships, characterized by their mutual distances in a feature space, as edges. RKD encourages the student model to learn such IRGs from the teacher model.
}, intuitively, RKD learns the ground truth geometry through a graph induced by the teacher model. We formalize this intuition from a spectral clustering perspective (\Cref{sec:clustering}). 
Specifically, for a multi-class classification problem, we introduce a notion of \emph{clustering error} (\Cref{def:clustering_error}) that measures the difference between the predicted and ground truth class partitions. 
Through low clustering error guarantees, we illustrate that RKD over the population learns the ground truth partition (presumably unveiled by the teacher model) via \emph{spectral clustering} (\Cref{sec:clustering_population_laplacian}). 
Moreover, RKD on sufficiently many unlabeled samples learns a similar partition with low clustering error (\Cref{sec:clustering_empirical_laplacian}). 
By appealing to standard generalization analysis~\citep{wainwright2019,bartlett2003,wei2019data} for unbiased estimates of the parameterized RKD loss, we show that the \emph{unlabeled sample complexity of RKD} is dominated by a polynomial term in the model complexity (\Cref{thm:unlabeled_sample_complexity}, \Cref{thm:laplacian_learning_cluster_guarantee_empirical}). 

As one of the most influential and classical unsupervised learning strategies, clustering remains an essential aspect of modern learning algorithms with weak/no supervision. 
For example, contrastive learning~\citep{caron2020unsupervised,grill2020bootstrap,chen2021exploring} has proven to learn good representations without any labels by encouraging (spectral) clustering~\citep{haochen2021provable,lee2021predicting,parulekar2023infonce}. 
However, the effects of clustering in the related schemes of semi-supervised learning (SSL) are less well investigated.
Therefore, we pose a follow-up question: 
\begin{center}
    \emph{How does cluster-aware semi-supervised learning improve label efficiency and generalization?}
\end{center}

\begin{figure}
    \centering
    \includegraphics[width=0.5\textwidth]{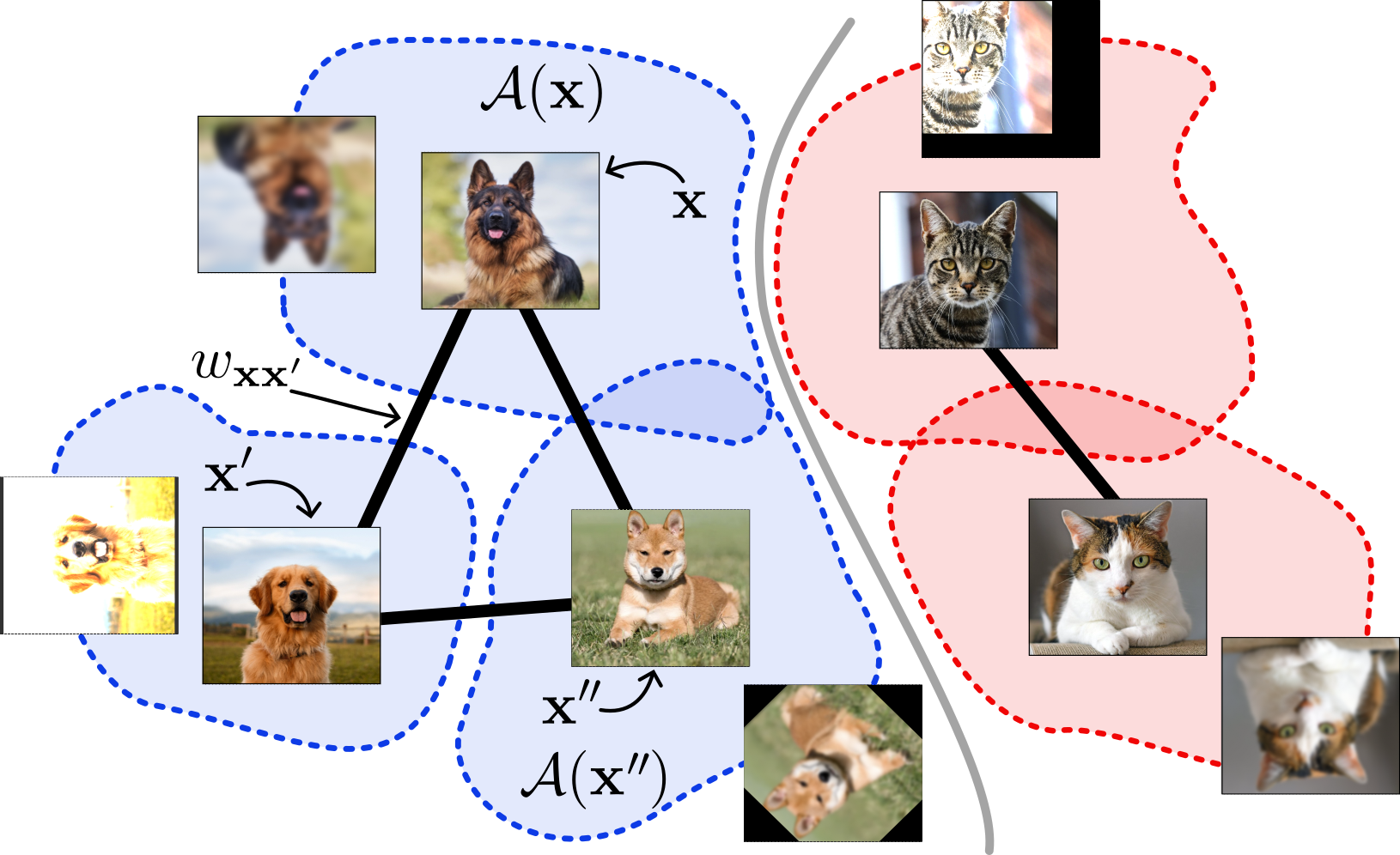}
    \caption{
        The complementary perspectives of RKD and DAC regarding the data. RKD learns the pairwise relations among data (\eg, edge $w_{\xb\xb'}$ that characterizes the similarity between $\xb$ and $\xb'$) over the population ``globally'' via spectral clustering, as illustrated in \Cref{sec:clustering}. Alternatively, DAC discovers the ``local'' clustering structure through the expansion of neighborhoods based on overlapping augmentation sets (\eg, $\Acal(\xb) \cap \Acal\rbr{\xb''} \ne \emptyset$), as elaborated in \Cref{sec:dac}. 
    }
    \label{fig:loc_vs_global}
\end{figure}

\paragraph{(Cluster-aware SSL is label-efficient.)} 
We formulate a cluster-aware SSL framework (\Cref{sec:label_complexity_conditioned_on_cluster}) 
where assuming a low clustering error (\Cref{def:clustering_error}) endowed by learning from unlabeled data, the label complexity scales linearly in the number of clusters, up to a logarithmic factor (\Cref{thm:label_complexity_conditioned_on_Xu}).

Since clustering is a rather generic notion that can be facilitated from various perspectives, to better understand spectral clustering as learned by RKD, we compare the clustering mechanism of RKD with that of another common example of cluster-aware SSL---data augmentation consistency (DAC) regularization, while exploring the following question:
\begin{center}
    \emph{What are the similarities and discrepancies between different cluster-aware strategies?}
\end{center}

\paragraph{(RKD and DAC learn clustering from complementary perspectives.)} 
We unify the existing analysis~\citep{yang2023sample} tailored for DAC regularization~\citep{sohn2020fixmatch} into the cluster-aware SSL framework (\Cref{sec:dac}). 
In particular, while facilitating clustering similarly to RKD, DAC reduces the clustering error through an expansion-based mechanism characterized by ``local'' neighborhoods induced by data augmentations (\Cref{thm:dac_clustering_error}). 
This serves as a complementary perspective to RKD which approximates spectral clustering on a graph Laplacian that reflects the underlying ``global'' structure of the population (\Cref{fig:loc_vs_global}).

\section{Related Works}

\paragraph{Relational knowledge distillation.}
Since the seminal work of ~\cite{hinton2015distilling}, knowledge distillation has become a foundational method for time and memory-efficient deep learning. 
Recent years have seen different KD strategies. Some works learn to directly match the output (response)~\citep{hinton2015distilling,chen2017learning,ba2014deep} or intermediate layers (features)~\citep{romero2014fitnets,zagoruyko2016paying,kim2018paraphrasing} of the teacher network. More recently, \cite{park2019relational} introduced the idea of relational knowledge distillation (RKD), and concurrent work~\citep{liu2019knowledge} introduced the ``instance relationship graph'' (IRG). They essentially presented similar ideas to learn the inter-feature relationship between samples instead of matching models' responses or features individually. 
A comprehensive survey of relational (relation-based) KD can be found in \citep{gou2021knowledge}. 

Despite the empirical success stories, the theoretical understanding of KD remained limited. Prior theoretical work is mostly constrained to linear classifier/nets~\citep{phuong2019towards,ji2020knowledge} or neural tangent kernel (NTK) analysis~\citep{harutyunyan2023supervision}. \cite{hsu2021generalization} leverages knowledge distillation to improve the sample complexity analysis of large neural networks with potentially vacuous generalization bounds.

\paragraph{Data augmentation consistency regularization.}
Data augmentation serves as an implicit or explicit regularization to improve sample efficiency and avoid overfitting. Initially, people applied simple transformations to labeled samples and add them to the training set~\citep{krizhevsky2017imagenet,simard2002transformation,simonyan2014very,he2016deep,cubuk2018autoaugment}. Traditional transformations preserve image semantics, including (random) perturbations, distortions, scales, crops, rotations, and horizontal flips. Subsequent augmentations may alter the labels jointly with the input samples, e.g.  Mixup \citep{zhang2017mixup}, Cutout \citep{devries2017improved}, and Cutmix ~\citep{yun2019cutmix}. Theoretical works have demonstrated different functionalities of data augmentation such as incorporating invariances~\citep{mei2021learning}, amplifying strong features~\citep{shen2022data}, or reducing variance during the learning procedure~\citep{chen2020group}.

Recent practices explicitly add consistency regularization to
enforce prediction or representation similarities~\citep{laine2016temporal,bachman2014learning,sohn2020fixmatch}. Not only does this way incorporates unlabeled samples, but \cite{yang2023sample} also theoretically demonstrated other benefits: explicit consistency regularization reduces sample complexity more significantly and handles misspecification better than simply adding augmented data into the training set.

\paragraph{Graph-based learning.}
Although we leverage graphs to model the underlying clustering structure of the population, the RKD algorithm that we study applies to generic data distributions and does not involve any graph constructions over the training data. The latter is crucial for modern learning settings as graph construction can be prohibitively expensive on common large-scale datasets. 

Nevertheless, assuming that explicit graph construction is affordable/available, there exists a rich history of nonparametric graph-based semi-supervised classification models for transductive learning~\citep{zhu_semi-supervised_2003, bertozzi2019graph, zhou2003llgc, belkin2004regularization, belkin2004semi}.
These methods have seen success in achieving high accuracy results in the low-label rate regime~\citep{calder_poisson_2020, bertozzi2019graph}.
Furthermore, assuming inherited graph structures of data, recent progress in graph neural networks (GNN)~\citep{zhou2018graph} and graph convolutional networks (GCN)~\citep{welling2016semi} has provided a connection between these classical graph-based methods and powerful deep learning models for a variety of problem settings~\citep{zhou2018graph}.

\section{Problem Setup}

\paragraph{Notations.}
For any event $e$, let $\iffun{e} = 1$ if $e$ is true, and $\iffun{e} = 0$ otherwise. 
For any positive integers $n, K \in \N$, we denote $[n] = \cbr{1,\dots,n}$ and $\Delta_K \dfeq \csepp{\rbr{p_1,\dots,p_K} \in [0,1]^K}{\sum_{k=1}^K p_k = 1}$. 
For a finite set $\Xcal$, $\abbr{\Xcal}$ denotes the size of $\Xcal$, and $\unif\rbr{\Xcal}$ denotes the uniform distribution over $\Xcal$ (\ie, $\unif(\xb) = 1/\abbr{\Xcal}$ for all $\xb \in \Xcal$).
For a distribution $P$ over $\Xcal$ and any $n \in \N$, $P^n$ represents the joint distribution over $\Xcal^n$ such that $\cbr{\xb_i}_{i \in [n]} \sim P^n$ is a set of $n$ $\iid$ samples.
Given any matrix $\Mb \in \R^{n \times k}$ ($n \ge k$ without loss of generality), let $\sigma\rbr{\Mb} = \csepp{\sigma_i\rbr{\Mb}}{i \in [k]}$ be the singular values of $\Mb$ with $\sigma_1\rbr{\Mb} \ge \dots \ge \sigma_{k}\rbr{\Mb}$. 
While for any symmetric $\Mb \in \R^{n \times n}$, let $\lambda\rbr{\Mb} = \csepp{\lambda_i\rbr{\Mb}}{i \in [n]}$ be the eigenvalues of $\Mb$ such that $\lambda_1\rbr{\Mb} \le \dots \le \lambda_{n}\rbr{\Mb}$.
Given any labeling function $y:\Xcal \to [K]$, let $\onehot{y}: \Xcal \to \cbr{0,1}^K$ be its one-hot encoding.

\subsection{Spectral Clustering on Population-induced Graph}
We consider a $K$-class classification problem over an unknown data distribution $P: \Xcal \times \sbr{K} \to [0,1]$ (while overloading $P$ for the probability measure over $\Xcal$ without ambiguity). Let $y_*: \Xcal \to [K]$ be the ground truth classification that introduced a natural partition $\cbr{\Xcal_k}_{k \in [K]}$ where $\Xcal_k \dfeq \csepp{\xb \in \Xcal}{y_*\rbr{\xb}=k}$ such that $\bigcup_{k=1}^K \Xcal_k = \Xcal$ and $\Xcal_k \cap \Xcal_{k'} = \emptyset$ for all $k \neq k'$.
Meanwhile, we specify a function class $\Fcal \subset \cbr{f:\Xcal \to \R^{K}}$ to learn from where each $f \in \Fcal$ is associated with a prediction function $y_f\rbr{\xb} \dfeq \argmax_{k \in [K]} f\rbr{\xb}_k$. 

We characterize the geometry of the data population\footnote{
    For demonstration purposes, we assume an arbitrarily large, but finite, population $\abbr{\Xcal} < \infty$ (as in \cite{haochen2021provable}, \eg, $\Xcal = \F^d$ where $\F$ is the set of all floating-point numbers with finite precision). Meanwhile, our analyses can be generalized to any compact set $\Xcal$ via functional analysis tools (\eg, replacing the adjacency matrix $\Wb(\Xcal)$ with an adjacency operator).
}
$\Xcal$ with an undirected weighted graph $G_\Xcal = \rbr{\Xcal, \Wb(\Xcal)}$ such that:
\begin{enumerate*}[label=(\roman*)]
    \item For every vertex pair $\xb, \xb' \in \Xcal$, the edge weight $w_{\xb \xb'} \ge 0$ characterizes the similarity between $\rbr{\xb,\xb'}$. 
    For example, $w_{\xb\xb'}$ between a pair of samples from the {\it same} class is larger than that between a pair from {\it different} classes
    \item $P(\xb) = w_{\xb} \dfeq \sum_{\xb' \in \Xcal} w_{\xb\xb'}$ such that $\sum_{\xb \in \Xcal} \sum_{\xb' \in \Xcal} w_{\xb\xb'} = 1$, intuitively implying that a more representative instance $\xb$ (corresponding to a larger $w_{\xb}$) has a higher probability of being sampled.
\end{enumerate*}
We remark that such population-induced graph $G_\Xcal$ is reminiscent of the population augmentation graph for studying contrastive learning\footnote{
    The key difference between the population augmentation graph~\citep{haochen2021provable} and the population-induced graph described here is that for analyzing RKD, we consider a graph over the original, instead of the augmented, population. 
    It's also worth highlighting that we introduce such graph structure only for the sake of analysis, while our data distribution is generic. Therefore, we focus on learning from a general function class $\Fcal$ without constraining to graph neural networks.
}~\citep{haochen2021provable}.

Let $\Wb\rbr{\Xcal}$ with $\Wb_{\xb\xb'}\rbr{\Xcal} = w_{\xb\xb'}$ be the weighted adjacency matrix of $G_\Xcal$; let  $\Db\rbr{\Xcal}$ be the diagonal matrix of the weighted degrees $\cbr{w_\xb}_{\xb \in \Xcal}$.
The (normalized) graph Laplacian takes the form $\Lb\rbr{\Xcal} = \Ib - \ol\Wb(\Xcal)$ where $\ol\Wb(\Xcal) \dfeq \Db\rbr{\Xcal}^{-1/2} \Wb\rbr{\Xcal} \Db\rbr{\Xcal}^{-1/2}$ is the normalized adjacency matrix. 
Then, the \emph{spectral clustering}~\citep{von_luxburg_tutorial_2007} on $G_\Xcal$ can be expressed as a (rank-$K$) Nystr\"om approximation of $\ol\Wb(\Xcal)$ in terms of the weighted outputs $\Db(\Xcal)^{\frac{1}{2}} f\rbr{\Xcal}$:
\begin{align}\label{eq:function_class_population_laplacian}
    \Flap{\Xcal} \dfeq \argmin_{\substack{f \in \Fcal}} \cbr{\reglappop\rbr{f} \dfeq \nbr{\ol{\Wb}(\Xcal) - \Db(\Xcal)^{\frac{1}{2}} f\rbr{\Xcal} f\rbr{\Xcal}^\top \Db(\Xcal)^{\frac{1}{2}}}_F^2}.
\end{align}

To quantify the alignment between the ground truth class partition and the clustering predicted by $f \in \Fcal$, we introduce the notion of clustering error.
\begin{definition}[Clustering error]\label{def:clustering_error}
    Given any $f \in \Fcal$, we define the majority labeling
    \begin{align}\label{eq:majority_label}
        \wt y_f\rbr{\xb} \dfeq \argmax_{k \in [K]} \PP_{\xb' \sim P(\xb)}\ssepp{y_*(\xb') = k}{y_f(\xb')=y_f(\xb)},
    \end{align}
    along with the minority subsets associated with $f$: $M\rbr{f} \dfeq \csepp{\xb \in \Xcal}{\wt y_f(\xb) \ne y_*\rbr{\xb}}$ such that $P\rbr{M(f)}$ quantifies the clustering error of $f$. For any $\Fcal' \subseteq \Fcal$, let $\mu\rbr{\Fcal'} \dfeq \sup_{f \in \Fcal'} P\rbr{M\rbr{f}}$.
\end{definition}
Intuitively, $M(f)$ characterizes the difference between $K$-partition of $\Xcal$ by the ground truth $y_*$ and by the prediction function $y_f$ associated with $f$; while $\mu\rbr{\Fcal'}$ quantifies the worse-case clustering error of all functions $f \in \Fcal'$.
In \Cref{sec:clustering_population_laplacian}, we will demonstrate that spectral clustering on $G_\Xcal$ (\Cref{eq:function_class_population_laplacian}) leads to provably low clustering error $\mu\rbr{\Flap{\Xcal}}$.

\subsection{Relational Knowledge Distillation}\label{sec:rkd}
For RKD, we assume access to a proper teacher model $\repfun: \Xcal \to \Wcal$ (for a latent feature space $\Wcal$) that induces a graph-revealing kernel: $\repker\rbr{\xb, \xb'} = \frac{w_{\xb\xb'}}{w_{\xb} w_{\xb'}}$.
Then, the spectral clustering on $G_\Xcal$ in \Cref{eq:function_class_population_laplacian} can be interpreted as the \emph{population RKD loss}:
\begin{align*}
    \reglappop\rbr{f} = \E_{\xb,\xb' \sim P(\xb)^2} \sbr{\rbr{\repker\rbr{\xb,\xb'} - f(\xb)^\top f(\xb')}^2}.
\end{align*}

While with only limited unlabeled samples $\Xb^u = \csepp{\xb^u_j}{j \in [N]}\sim P(\xb)^{N}$ in practice, we consider the analogous \emph{empirical RKD loss}:
\begin{align}\label{eq:constrained_empirical_laplacian_learning}
    \Flap{\Xb^u} \dfeq \argmin_{f \in \Fcal} \cbr{\reglapemp{\Xb^u}\rbr{f} \dfeq \frac{2}{N} \sum_{i=1}^{N/2} \rbr{f\rbr{\xb^u_{2i-1}}^\top f\rbr{\xb^u_{2i}} - \repker\rbr{\xb^u_{2i-1}, \xb^u_{2i}}}^2},
\end{align}
where $\reglapemp{\Xb^u}(f)$ serves as an unbiased estimate for $\reglappop(f)$ (\Cref{prop:unbiased_population_laplacian_estimate}).

Further, for \emph{semi-supervised setting} with a small set of labeled samples $\rbr{\Xb,\yb} = \cbr{\rbr{\xb_i, y_i}}_{i \in [n]} \sim P\rbr{\xb,y}^n$ (usually $n \ll N$) \emph{independent of $\Xb^u$}, let $\ell: \R^K \times [K] \to \cbr{0,1}$ be the zero-one loss: $\ell\rbr{f(\xb), y} = \iffun{y_f(\xb) \ne y}$. 
We denote $\losspop\rbr{f} \dfeq \E_{\rbr{\xb, y} \sim P}\sbr{\ell\rbr{f(\xb), y}}$ and $\lossemp\rbr{f} \dfeq \frac{1}{n} \sum_{i=1}^n \ell\rbr{f(\xb_i), y_i}$ as the population and empirical losses, respectively, and consider a proper learning setting with the ground truth $f_* \dfeq \argmin_{f \in \Fcal} \losspop\rbr{f}$. 
Then, SSL with RKD aims to find $\min_{\substack{f \in \Fcal}} \lossemp(f) + \reglapemp{\Xb^u}(f)$. 
Alternatively, for an overparameterized setting\footnote{
    Overparameterization is a ubiquitous scenario when learning with neural networks: $f \in \Fcal$ is parameterized by $\theta \in \Theta$ with $\dim(\Theta) > N + n$ such that the minimization problem has infinitely many solutions.
} with $\rbr{\argmin_{f \in \Fcal} \lossemp\rbr{f}} \cap \Flap{\Xb^u} \ne \emptyset$, SSL with RKD can be formulated as: $\min_{f \in \Flap{\Xb^u}} \lossemp\rbr{f}$.

\section{Relational Knowledge Distillation as Spectral Clustering}\label{sec:clustering}

In this section, we show that minimizing either the population RKD loss (\Cref{sec:clustering_population_laplacian}) or the empirical RKD loss (\Cref{sec:clustering_empirical_laplacian}) leads to low clustering errors $\mu\rbr{\Flap{\Xcal}}$ and $\mu\rbr{\Flap{\Xb^u}}$, respectively.

\subsection{Relational Knowledge Distillation over Population}\label{sec:clustering_population_laplacian}

Starting with minimization of the population RKD loss $f \in \Flap{\Xcal} = \argmin_{\substack{f \in \Fcal}} \reglappop\rbr{f}$ (\Cref{eq:function_class_population_laplacian}), let $\lambda\rbr{\Lb\rbr{\Xcal}} = \rbr{\lambda_1,\dots,\lambda_{\abbr{\Xcal}}}$ be the eigenvalues of the graph Laplacian in the ascending order, $0 = \lambda_1 \le \dots \le \lambda_{\abbr{\Xcal}}$ with an arbitrary breaking of ties. 

A key assumption of RKD is that the teacher model $\repfun$ (with the corresponding graph-revealing kernel $\repker$) is well aligned with the underlying ground truth partition, formalized as below:
\begin{assumption}[Appropriate teacher models]\label{ass:well_separated_classes}
    For $G_\Xcal$ unveiled through the teacher model $\repker\rbr{\cdot,\cdot}$, we assume $\lambda_{K+1} > 0$ (\ie, $G_\Xcal$ has at most $K$ disconnected components); while the ground truth classes $\cbr{\Xcal_k}_{k \in [K]}$ are well-separated by $G_\Xcal$ such that $\fgtice \dfeq \frac{\sum_{k \in [K]} \sum_{\xb \in \Xcal_k} \sum_{\xb' \notin \Xcal_k} w_{\xb\xb'}}{2 \sum_{\xb \in \Xcal} \sum_{\xb' \in \Xcal} w_{\xb\xb'}} \ll 1$ (\ie, the fraction of weights of inter-class edges is sufficiently small).
\end{assumption}
In particular, $\fgtice \in [0,1]$ reflects the extent to which the ground truth classes $\cbr{\Xcal_k}_{k \in [K]}$ are separated from each other, as quantified by the fraction of edge weights between nodes of disparate classes. In the ideal case, $\fgtice = 0$ when the ground truth classes are perfectly separated such that every $\Xcal_k$ is a connected component of $G_\Xcal$, while $\lambda_{K+1} > \lambda_{K} = 0$. 

Meanwhile, to reduce the generic function class $\Fcal$ down to a class of reasonable prediction functions, we introduce the following regularity conditions on the boundedness and margin:
\begin{assumption}[$\beta$-skeleton boundedness and $\gamma$-margin]\label{ass:full_rank_min_sval_prediction}
    For any $f \in \Fcal$ with $P\rbr{M(f) \cap \Xcal_k} \le P\rbr{\Xcal_k}/2$ for all $k \in [K]$, we assume there exists a skeleton subset $\Sb = \sbr{\sb_1;\dots;\sb_K} \in \Xcal^K$ with\footnote{
        Recall the majority labeling (\Cref{eq:majority_label}) and the associated minority subset $M\rbr{f}$ which intuitively describes the difference between clustering of the $y_*$ and that of $y_f$. We denote $f(\xb) = \sbr{f(\xb)_1,\dots,f(\xb)_K}$.
    } $\sb_k = \argmax_{\xb \in \Xcal \backslash M(f)} f(\xb)_k$ such that $y_f\rbr{\sb_k} = k$ for every $k \in [K]$ and $\rank\rbr{f(\Sb)} = K$. 
    \begin{enumerate}[label=(\roman*)]
        \item We say that $f$ is $\beta$-skeleton bounded if $\sigma_1\rbr{f(\Sb)} \le \beta$ for some reasonably small $\beta$.
        \item Let the $k$-th margin of $f$ be $\gamma_k \dfeq f\rbr{\sb_k}_k - \max_{\xb \in M(f): y_f\rbr{\xb} \ne k} f\rbr{\xb}_k$. We say that $f$ has a $\gamma$-margin if $\min_{k \in [K]} \gamma_k > \gamma$ for some sufficiently large $\gamma > 0$.
    \end{enumerate}
\end{assumption}
Intuitively, $\Sb$ can be viewed as a set of $K$ samples where $f$ makes the ``most confident'' prediction in each class. \Cref{ass:full_rank_min_sval_prediction} ensures the boundedness of predictions made on such a skeleton subset $\nbr{f\rbr{\Sb}}_2 \le \beta$, as well as a margin $\gamma$ by which the “most confident” prediction of $f$ in each class $\sb_k$ can be separated from the minority samples predicted to lie in other classes $\csep{\xb \in M(f)}{y_f(\xb) \ne k}$.
As a toy example, when $y_f: \Xcal \to [K]$ is surjective, and the columns in $f\rbr{\Xcal} \in \R^{\abbr{\Xcal} \times K}$ consist of identity vectors of the predicted clusters $f\rbr{\xb}_k = \iffun{y_f\rbr{\xb}=k}$, \Cref{ass:full_rank_min_sval_prediction} is satisfied with $\beta=1$ and $\gamma=1$.
Meanwhile, the following \Cref{remark:limitation_spectral_clustering} highlights that although \Cref{ass:full_rank_min_sval_prediction} cannot be guaranteed with spectral clustering alone, it is generally satisfied in the semi-supervised learning setting with additional supervision/regularization (\cf \Cref{ex:pitfall_spectral_clustering}).

\begin{remark}[Limitation of spectral clustering alone]\label{remark:limitation_spectral_clustering}
    Notice that for a generic function class $\Fcal$, spectral clustering alone is not sufficient to guarantee \Cref{ass:full_rank_min_sval_prediction}, especially the existence of a skeleton subset $\Sb$ or a large enough margin $\gamma$. As counter-exemplified in \Cref{ex:pitfall_spectral_clustering}, \Cref{eq:function_class_population_laplacian} can suffer from large clustering error $\mu\rbr{\Flap{\Xcal}}$ when applied alone and failing to satisfy \Cref{ass:full_rank_min_sval_prediction}.

    To learn predictions with accurate clustering, RKD requires either
    \begin{enumerate*}[label=(\roman*)]
        \item  additional supervision/regularization in end-to-end settings like semi-supervised learning or
        \item further fine-tuning in unsupervised representation learning settings like contrastive learning~\citep{haochen2021provable}.
    \end{enumerate*}
    For end-to-end learning in practice, RKD is generally coupled with standard KD (\ie, feature matching) and weak supervision~\citep{park2019relational,liu2021data,ma2022distilling}, both of which help the learned function $f$ satisfy \Cref{ass:full_rank_min_sval_prediction} with a reasonable margin $\gamma$.
\end{remark}
Throughout this work, we assume $\Fcal$ is sufficiently regularized (with weak supervision in \Cref{sec:label_complexity_conditioned_on_cluster} and consistency regularization in \Cref{sec:dac}) such that \Cref{ass:full_rank_min_sval_prediction} always holds.
Under \Cref{ass:well_separated_classes} and \Cref{ass:full_rank_min_sval_prediction}, the clustering error of RKD over the population $\mu\rbr{\Flap{\Xcal}}$ is guaranteed to be small given a good teacher model $\repfun$ that leads to a negligible $\fgtice$:
\begin{theorem}[RKD over population, proof in \Cref{apx:laplacian_learning_cluster_guarantee}]\label{thm:laplacian_learning_cluster_guarantee}
    Under \Cref{ass:well_separated_classes} and \Cref{ass:full_rank_min_sval_prediction} for every $f \in \Flap{\Xcal}$, the clustering error with the population (\Cref{eq:function_class_population_laplacian}) satisfies 
    \begin{align*}
        \mu\rbr{\Flap{\Xcal}} \le 2 \max\rbr{\frac{\beta^2}{\gamma^2},1} \cdot \frac{\fgtice}{\lambda_{K+1}}.
    \end{align*}
\end{theorem}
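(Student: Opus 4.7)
The plan is to realize $\Flap{\Xcal}$ through the bottom-$K$ eigenspace of $\Lb(\Xcal)$ and then quantify how far that eigenspace sits from the subspace spanned by weighted class indicators, finally converting the resulting subspace bound to a clustering error via \Cref{ass:full_rank_min_sval_prediction}.

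First, I would note that since $\Db(\Xcal)^{1/2} f(\Xcal) f(\Xcal)^\top \Db(\Xcal)^{1/2}$ is PSD of rank at most $K$ for every $f \in \Fcal$, the PSD form of the Eckart--Young theorem identifies the unconstrained minimizer of $\reglappop$ as the rank-$K$ spectral truncation of $\ol\Wb(\Xcal)$ (the top-$K$ eigenvalues of $\ol\Wb(\Xcal)$ are nonnegative when the bottom-$K$ eigenvalues of $\Lb(\Xcal)$ lie in $[0,1]$, which holds under \Cref{ass:well_separated_classes}). Hence any $f \in \Flap{\Xcal}$ that attains this optimum has $\range(\Db(\Xcal)^{1/2} f(\Xcal))$ equal to the top-$K$ eigenspace $\range(\Ub_K)$ of $\ol\Wb(\Xcal)$.

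Second, I would introduce a weighted class-indicator matrix $\Phib \in \R^{\abbr{\Xcal} \times K}$ whose $k$-th column is $\Db(\Xcal)^{1/2}$ applied to the indicator of $\Xcal_k$, normalized to unit $\ell_2$ norm. When $\fgtice = 0$, the graph $G_\Xcal$ decomposes into $K$ connected components and $\range(\Phib) = \range(\Ub_K)$. For $\fgtice > 0$, I would bound $\nbr{\ol\Wb(\Xcal) - \ol\Wb^*(\Xcal)}_F^2 \le 2\fgtice$, where $\ol\Wb^*(\Xcal)$ is the normalized adjacency of $G_\Xcal$ with all inter-class edges removed, and then apply a Davis--Kahan type inequality with eigengap $\lambda_{K+1}$ to obtain
\begin{align*}
\nbr{\Ub_K \Ub_K^\top - \Phib \Phib^\top}_F^2 \;\le\; \frac{2\fgtice}{\lambda_{K+1}}.
\end{align*}

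The critical step, where \Cref{ass:full_rank_min_sval_prediction} enters, is to convert this subspace bound into the pointwise clustering error $P(M(f))$. I would decompose $\Db(\Xcal)^{1/2} f(\Xcal) = \Phib \Cb + \Deltab$ for some $\Cb \in \R^{K \times K}$ with $\nbr{\Deltab}_F^2 \le 2\fgtice/\lambda_{K+1}$ inherited from the subspace bound. On the skeleton $\Sb$, $\sigma_1(f(\Sb)) \le \beta$ controls the overall magnitude of the cluster-indicator block $\Phib \Cb$, while for every $\xb \in M(f)$ the $\gamma$-margin forces the residual row $\Deltab_\xb$ to have squared norm at least of order $w_\xb \cdot (\gamma/\beta)^2$, since otherwise the ordering of coordinates of $f(\xb)$ would place $\xb$ in its majority class. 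Summing these per-point contributions over $M(f)$, using $P(\xb) = w_\xb$, and matching against the Davis--Kahan budget yields $P(M(f)) \cdot (\gamma/\beta)^2 \le 2\fgtice/\lambda_{K+1}$, and combining with the trivial estimate $P(M(f)) \le 1$ produces the stated bound with the $\max(\beta^2/\gamma^2, 1)$ prefactor.

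The Eckart--Young and Davis--Kahan pieces should be largely routine bookkeeping around the degree normalization, once the idealized reference adjacency $\ol\Wb^*(\Xcal)$ is fixed. The main obstacle is the last step: pinning down the correct normalization of $\Phib$ so that the skeleton parameter $\beta$ controls the cluster-indicator scale and the margin parameter $\gamma$ translates directly into a per-point lower bound of order $w_\xb (\gamma/\beta)^2$ on $\nbr{\Deltab_\xb}^2$. This requires careful tracking of the $\Db^{1/2}$-weighting so that the sampling weight $P(\xb)$ appears with the right power on both sides of the final inequality.
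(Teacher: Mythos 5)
Your high-level architecture (Eckart--Young to pin down $\range\rbr{\Db(\Xcal)^{1/2}f(\Xcal)}$, a spectral argument producing $\fgtice/\lambda_{K+1}$, a margin-based conversion to $P(M(f))$) matches the paper's, and the Eckart--Young step is essentially identical to the paper's Claim C.3. But the middle of your argument has two genuine gaps. First, the perturbation bound $\nbr{\ol\Wb(\Xcal)-\ol\Wb^*(\Xcal)}_F^2\le 2\fgtice$ is unjustified and generally false: the inter-class entries of $\ol\Wb(\Xcal)$ are $w_{\xb\xb'}/\sqrt{w_\xb w_{\xb'}}$, so their squared sum is $\sum w_{\xb\xb'}^2/(w_\xb w_{\xb'})$, which is not dominated by $\sum w_{\xb\xb'}=2\fgtice$ unless $w_{\xb\xb'}\le w_\xb w_{\xb'}$ (false for sparse graphs where $w_\xb w_{\xb'}\ll w_{\xb\xb'}$), and removing edges also changes the degree normalization. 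Second, even granting that bound, Davis--Kahan controls $\nbr{\Ub_K\Ub_K^\top-\Phib\Phib^\top}_F$ by $\nbr{E}_F$ divided by the eigengap, so you would get $O\rbr{\fgtice/\lambda_{K+1}^2}$, not $\fgtice/\lambda_{K+1}$. The paper avoids both problems by never introducing an idealized reference graph: it regresses the weighted one-hot labels $\Yb_\Xcal=\Db(\Xcal)^{1/2}\onehot{y_*}(\Xcal)$ onto $\range\rbr{\Fb_\Xcal}=\spn\cbr{\vb_1,\dots,\vb_K}$ and bounds the residual directly by a Rayleigh-quotient (Markov-type) inequality, $\sum_{i>K}\rbr{\vb_i^\top\yb_k}^2\le\frac{1}{\lambda_{K+1}}\sum_{i>K}\lambda_i\rbr{\vb_i^\top\yb_k}^2\le\frac{\yb_k^\top\Lb(\Xcal)\yb_k}{\lambda_{K+1}}$, combined with the exact identity $\sum_k\yb_k^\top\Lb(\Xcal)\yb_k=\fgtice$ (Lemma C.2). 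Note also that the projection direction matters: the paper projects the labels onto the function's range, whereas you project $f$ onto the indicator span; only the former connects cleanly to the eigenvector expansion above.

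Your final conversion step is also not where the paper's constants come from. The factor $2$ arises because the paper replaces the least-squares coefficient by the skeleton-based interpolant $\Fb_S^{-1}\Yb_S$ and shows this interpolative choice is at most twice as suboptimal in Frobenius norm (Lemma C.1). The $\max\rbr{\beta^2/\gamma^2,1}$ does not come from intersecting with $P(M(f))\le 1$ (that would produce a $\min$, and in any case the $\max$ makes the bound \emph{weaker} when $\gamma>\beta$, so it cannot arise from an auxiliary upper bound); it comes from a case analysis on each minority point $\xb$: if $y_*(\xb)$ coincides with the true class of some skeleton point, the $\gamma$-margin and $\sigma_1(f(\Sb))\le\beta$ force a per-point residual of at least $\gamma^2/\beta^2$ (in the $\Db^{1/2}$-weighted norm, contributing $w_\xb\cdot\gamma^2/\beta^2$), while if $y_*(\xb)$ matches no skeleton point the residual is at least $1$; the reciprocal of $\min\rbr{\gamma^2/\beta^2,1}$ is the stated $\max$. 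Your sketch of the per-point lower bound is plausible in spirit but is exactly the part you leave unexecuted, and without the interpolative-decomposition device it is not clear how you would extract a clean per-point contribution from a generic decomposition $\Db(\Xcal)^{1/2}f(\Xcal)=\Phib\Cb+\Deltab$.
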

\Cref{thm:laplacian_learning_cluster_guarantee} suggests that the clustering error over the population is negligible under mild regularity assumptions (\ie, \Cref{ass:full_rank_min_sval_prediction}) when
(i) the ground truth classes are well-separated by $G_\Xcal$ revealed through the teacher model $\repker\rbr{\cdot,\cdot}$ (\ie, $\alpha \ll 1$ in \Cref{ass:well_separated_classes}) and
(ii) the $(K+1)$th eigenvalue $\lambda_{K+1}$ of the graph Laplacian $\Lb(\Xcal)$ is not too small.
As we review in \Cref{apx:sparsest_k_partition}, the existing result \Cref{lemma:eigenvalue_conductance}~\citep{louis2014approximation} unveils the connection between $\lambda_{K+1}$ and the sparsest $K$-partition (\Cref{def:sparsest_k_partition}) of $G_\Xcal$. Intuitively, a reasonably large $\lambda_{K+1}$ implies that the partition of the $K$ ground truth classes is the ``only'' partition of $G_\Xcal$ into $K$ components by removing a \textit{sparse} set of edges (\Cref{coro:laplacian_learning_cluster_guarantee}).

\subsection{Relational Knowledge Distillation on Unlabeled Samples}\label{sec:clustering_empirical_laplacian}

We now turn our attention to a more practical scenario with limited unlabeled samples and bound the clustering error $\mu\rbr{\Flap{\Xb^u}}$ granted by minimizing the empirical RKD loss (\Cref{eq:constrained_empirical_laplacian_learning}). 

To cope with $\Fcal \ni f: \Xcal \to \R^K$, we recall the notion of Rademacher complexity for vector-valued functions from \cite{maurer2016vector}. Let $\rademacher(\rho)$ be the Rademacher distribution (\ie, with uniform probability $\frac{1}{2}$ over $\cbr{-1,1}$) and $\rhob \sim \rademacher^{N \times K}$ be a random matrix with $\iid$ Rademacher entries. Given any $N \in \N$, with $f_k(\xb)$ denoting the $k$-th entry of $f(\xb) \in \R^K$, we define
\begin{align}\label{eq:rademacher_complexity_vector_valued}
    \Rfrak_N\rbr{\Fcal} \dfeq \E_{\substack{\Xb^u \sim P(\xb)^N \\ \rhob \sim \rademacher^{N \times K}}} \sbr{\sup_{f \in \Fcal} \frac{1}{N} \sum_{i=1}^{N} \rho_{ik} \cdot f_k\rbr{\xb^u_i}}
\end{align}
as the Rademacher complexity of the vector-valued function class $\Fcal \ni f: \Xcal \to \R^K$.

Since the empirical RKD loss is an unbiased estimate of its population correspondence (\Cref{prop:unbiased_population_laplacian_estimate}), we can leverage the standard generalization analysis in terms of Rademacher complexity to provide an unlabeled sample complexity for RKD.
\begin{theorem}[Unlabeled sample complexity of RKD, proof in \Cref{apx:unlabeled_sample_complexity}]\label{thm:unlabeled_sample_complexity}
    Assume there exist $\bdf, \bdker > 0$ such that $\nbr{f(\xb)}_2^2 \le \bdf$ and $\repker\rbr{\xb,\xb'} \le \bdker$ for all $\xb, \xb' \in \Xcal,~ f \in \Fcal$. Given any $\flapemp \in \Flap{\Xb^u}$, $\flappop \in \Flap{\Xcal}$, $\delta \in (0,1)$, with probability at least $1-\delta/2$ over $\Xb^u \sim P(\xb)^N$,
    \begin{align*}
        \reglappop\rbr{\flapemp} - \reglappop\rbr{\flappop} \le 16 \sqrt{2 \bdf} \rbr{\bdf + \bdker} \Rfrak_{N/2}\rbr{\Fcal} + 2 \rbr{\bdker + \bdf}^2 \sqrt{\frac{\log\rbr{4/\delta}}{N}}.
    \end{align*}
\end{theorem}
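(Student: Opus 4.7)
The plan is to follow the standard ERM + uniform-convergence recipe, with the main technical work being a polarization trick that tames the bilinear structure of the empirical RKD loss.

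First, since $\flapemp$ minimizes $\reglapemp{\Xb^u}$, I have $\reglapemp{\Xb^u}(\flapemp) \le \reglapemp{\Xb^u}(\flappop)$. Adding and subtracting $\reglapemp{\Xb^u}(\flapemp)$ and $\reglapemp{\Xb^u}(\flappop)$ and using the unbiasedness $\E[\reglapemp{\Xb^u}(f)] = \reglappop(f)$ (Proposition~\ref{prop:unbiased_population_laplacian_estimate}) yields
\[
\reglappop(\flapemp) - \reglappop(\flappop) \le 2 \sup_{f \in \Fcal} \abbr{\reglappop(f) - \reglapemp{\Xb^u}(f)}.
\]
Each per-pair summand $(f(\xb)^\top f(\xb') - \repker(\xb, \xb'))^2$ lies in $[0, (\bdf + \bdker)^2]$, so swapping one of the $N/2$ i.i.d.\ pairs changes $\reglapemp{\Xb^u}(f)$ by at most $\frac{2}{N}(\bdf + \bdker)^2$. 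McDiarmid's inequality then concentrates $\sup_f \abbr{\reglappop(f) - \reglapemp{\Xb^u}(f)}$ around its expectation with deviation proportional to $(\bdf + \bdker)^2 \sqrt{\log(4/\delta)/N}$, which, after the outer factor of $2$, accounts for the second summand in the claimed bound.

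Next, I bound the expected supremum. Standard symmetrization yields at most twice the Rademacher complexity of the loss class $\cbr{(\xb, \xb') \mapsto (f(\xb)^\top f(\xb') - \repker(\xb, \xb'))^2 : f \in \Fcal}$. Since $u \mapsto (u - \repker(\xb, \xb'))^2$ is $2(\bdf + \bdker)$-Lipschitz on $[-\bdf, \bdf]$, Talagrand's contraction lemma reduces the task to bounding the Rademacher complexity of the bilinear class $\cbr{(\xb, \xb') \mapsto f(\xb)^\top f(\xb') : f \in \Fcal}$.

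The main obstacle is precisely this bilinear coupling: naïve contraction fails because $f(\xb)^\top f(\xb')$ depends on $f$ evaluated at two distinct inputs with an ``effective Lipschitz constant'' that itself depends on $f$. To sidestep this, I apply the polarization identity
\[
f(\xb)^\top f(\xb') = \tfrac{1}{2}\left[\nbr{f(\xb)}_2^2 + \nbr{f(\xb')}_2^2 - \nbr{f(\xb) - f(\xb')}_2^2\right],
\]
which expresses the bilinear form as a signed sum of three squared-norm functionals of $f$. Each of the maps $u \mapsto \nbr{u}_2^2$ and $(u, v) \mapsto \nbr{u - v}_2^2$ is Lipschitz with constant proportional to $\sqrt{\bdf}$ on the relevant norm ball. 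Applying Maurer's vector-valued contraction inequality~\citep{maurer2016vector} term-by-term then bounds the Rademacher complexity of each summand by a constant multiple of $\sqrt{2\bdf} \cdot \Rfrak_{N/2}(\Fcal)$. Collecting constants from the ERM factor of $2$, the symmetrization factor of $2$, the Talagrand factor $2(\bdf + \bdker)$, and the three polarization terms yields the leading coefficient $16\sqrt{2\bdf}(\bdf + \bdker)$ on the Rademacher-complexity term in the statement.
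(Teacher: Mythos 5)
Your overall architecture matches the paper's: the paper also reduces to a uniform-convergence bound for the loss class $\cbr{(\xb,\xb') \mapsto (f(\xb)^\top f(\xb') - \repker(\xb,\xb'))^2}$ over the $N/2$ \iid pairs (via its generic Rademacher lemma, which packages your ERM decomposition, McDiarmid, and symmetrization steps), then applies Talagrand's scalar contraction with Lipschitz constant $2(\bdf+\bdker)$, and finally Maurer's vector contraction with the $2\sqrt{\bdf}$-Lipschitzness of $u \mapsto \nbr{u}_2^2$. The second term of the bound and the factors $4$ (ERM + symmetrization) and $2(\bdf+\bdker)$ (Talagrand) come out identically in both arguments. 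The only genuine divergence is how the bilinear term $f(\xb)^\top f(\xb')$ is linearized: the paper uses the one-sided bound $f(\xb)^\top f(\xb') \le \frac{1}{2}(\nbr{f(\xb)}_2^2 + \nbr{f(\xb')}_2^2)$ inside the signed Rademacher sum, yielding exactly two squared-norm terms and hence the coefficient $16\sqrt{2\bdf}(\bdf+\bdker)$, whereas you use the exact polarization identity, which yields three.

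That third term, $\nbr{f(\xb)-f(\xb')}_2^2$, is where your constant bookkeeping breaks down. Viewed as a function of the concatenated vector $(f(\xb), f(\xb')) \in \R^{2K}$, its Lipschitz constant is $2\sqrt{2}\,\nbr{f(\xb)-f(\xb')}_2 \le 4\sqrt{2\bdf}$ — twice that of each of the first two terms — and after Maurer's contraction the resulting complexity is that of the paired class, which costs roughly another factor of two relative to $\Rfrak_{N/2}(\Fcal)$ as defined in \Cref{eq:rademacher_complexity_vector_valued}. So the first two polarization terms alone already exhaust the budget for the stated coefficient $16\sqrt{2\bdf}(\bdf+\bdker)$, and the third term strictly adds to it: your route proves the theorem only with a constant several times larger than $16$. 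To recover the stated constant you would need to discard the cross term as the paper does rather than carry it through contraction. I will note, to your credit, that your exact identity sidesteps a delicacy in the paper's own step — a pointwise inequality $a_i(f) \le b_i(f)$ does not in general survive insertion into $\E_\rhob \sup_f \sum_i \rho_i a_i(f)$ termwise, since the signs $\rho_i$ flip the inequality — so your version is the more careful one at the price of a worse constant; but as a proof of the theorem \emph{as stated}, the claim that the constants ``collect'' to $16$ is not substantiated and is, as far as I can see, false for this decomposition.
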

\Cref{thm:unlabeled_sample_complexity} implies that the unlabeled sample complexity of RKD is dominated by the Rademacher complexity of $\Fcal$. In \Cref{apx:rademacher_complexity_dnn}, we further concretize \Cref{thm:unlabeled_sample_complexity} by instantiating $\Rfrak_{N/2}\rbr{\Fcal}$ via existing Rademacher complexity bounds for neural networks~\citep{golowich2018size}.

With $\Delta \to 0$ as $N$ increasing (\Cref{thm:unlabeled_sample_complexity}), the clustering error of minimizing the empirical RKD loss can be upper bounded as follows.
\begin{theorem}[RKD on unlabeled samples, proof in \Cref{apx:laplacian_learning_cluster_guarantee_empirical}]\label{thm:laplacian_learning_cluster_guarantee_empirical}
    Under \Cref{ass:well_separated_classes} and \Cref{ass:full_rank_min_sval_prediction} for every $f \in \Flap{\Xb^u}$, given any $\delta \in (0,1)$, if there exists $\Delta < \rbr{1-\lambda_K}^2$ such that $\reglappop\rbr{\flapemp} \le \reglappop\rbr{\flappop} + \Delta$ for all $\flapemp \in \Flap{\Xb^u}$ and $\flappop \in \Flap{\Xcal}$ with probability at least $1-\delta/2$ over $\Xb^u \sim P(\xb)^N$, then error of clustering with the empirical graph (\Cref{eq:constrained_empirical_laplacian_learning}) satisfies the follows: for any $K_0 \in [K]$ such that $\lambda_{K_0} < \lambda_{K+1}$ and $C_{K_0} \dfeq \frac{\rbr{1 - \lambda_{K_0}}^2 - \rbr{1 - \lambda_{K}}^2}{\Delta} = O(1)$,
    \begin{align*} 
        \mu\rbr{\Flap{\Xb^u}} \le 2 \max\rbr{\frac{\beta^2}{\gamma^2},1} \cdot \rbr{\frac{\fgtice}{\lambda_{K+1}} + \frac{\rbr{1+\rbr{K-K_0} C_{K_0}} \Delta}{\rbr{1-\lambda_{K_0}}^2 - \rbr{1-\lambda_{K+1}}^2}}.
    \end{align*}
\end{theorem}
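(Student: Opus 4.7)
}
The plan is to reduce the empirical statement to a perturbed version of \Cref{thm:laplacian_learning_cluster_guarantee} via a two-stage argument. First, I would invoke \Cref{thm:unlabeled_sample_complexity} (through the hypothesis that $\reglappop\rbr{\flapemp} \le \reglappop\rbr{\flappop} + \Delta$ holds on the high-probability event) to eliminate the sampling randomness and work exclusively with the population functional $\reglappop$ evaluated on an \emph{approximate} minimizer $\flapemp$. The $\delta/2$ in the theorem statement matches the failure probability of \Cref{thm:unlabeled_sample_complexity}, so the remaining work is entirely deterministic on that event.

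Second, I would recast \Cref{eq:function_class_population_laplacian} via the Eckart--Young theorem: writing the spectral decomposition $\ol{\Wb}(\Xcal) = \sum_{i} \rbr{1-\lambda_i} \vb_i \vb_i^\top$, the minimum of $\reglappop$ equals $\nbr{\ol{\Wb}(\Xcal)}_F^2 - \sum_{i=1}^{K}\rbr{1-\lambda_i}^2$, and any matrix $\Mb \dfeq \Db(\Xcal)^{1/2} f(\Xcal) f(\Xcal)^\top \Db(\Xcal)^{1/2}$ of rank at most $K$ satisfies
\begin{align*}
    \reglappop(f) - \reglappop(\flappop) = \sum_{i=1}^{K} \rbr{1-\lambda_i}^2 - \sum_{i=1}^{K} \sigma_i\rbr{\Mb}^2 + \nbr{\Mb - \Pi_K \ol{\Wb}(\Xcal) \Pi_K}_F^2 + \text{(cross terms)},
\end{align*}
where $\Pi_K$ projects onto the top-$K$ eigenspace. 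The $\Delta$-suboptimality thus controls how much ``mass'' of $\Mb$'s $K$-dimensional column space can lie outside the top-$K_0$ eigenspace of $\ol{\Wb}(\Xcal)$: components of $\Mb$ spanned by $\vb_i$ with $i > K_0$ can cost at most $\rbr{1-\lambda_{K_0}}^2 - \rbr{1-\lambda_i}^2 \ge \rbr{1-\lambda_{K_0}}^2 - \rbr{1-\lambda_{K+1}}^2$ each (via a swap inequality), so at most $K-K_0$ indices can be mis-assigned to eigenvalues $\lambda_i \le \lambda_{K}$, and the remaining leakage is upper bounded by $\Delta$ divided by the spectral gap $\rbr{1-\lambda_{K_0}}^2 - \rbr{1-\lambda_{K+1}}^2$. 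This is the mechanism that introduces $K_0$ and the constant $C_{K_0}$ into the bound: $K_0$ parameterizes a Davis--Kahan-style trade-off between using a coarser but well-gapped eigenspace and paying a penalty for the remaining $K-K_0$ directions allowed to drift.

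Third, I would apply the same skeleton-based argument that drives \Cref{thm:laplacian_learning_cluster_guarantee} (invoking \Cref{ass:full_rank_min_sval_prediction} for $\flapemp$), but replacing the exact Nystr\"om identity used there with its perturbed analogue: the rows of $\Db(\Xcal)^{1/2} \flapemp(\Xcal)$ are $\rbr{\fgtice / \lambda_{K+1}}$-close to piecewise-constant on the ground truth partition (by the same Cheeger-type estimate that yields the first term, as in \Cref{lemma:eigenvalue_conductance}), \emph{up to} an additional Frobenius error of order $\Delta / \bpar{\rbr{1-\lambda_{K_0}}^2 - \rbr{1-\lambda_{K+1}}^2}$ inherited from the previous step. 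Translating Frobenius-norm deviation from piecewise-constant into clustering error by the $\beta$-skeleton-boundedness and $\gamma$-margin conditions (exactly as in the population proof) yields the claimed bound, with the prefactor $2\max\rbr{\beta^2/\gamma^2,1}$ preserved and the two additive terms corresponding respectively to the ground truth separation $\fgtice$ and the empirical approximation budget $\Delta$.

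The main obstacle I anticipate is the eigenspace perturbation bookkeeping. Unlike the population setting where $\Db(\Xcal)^{1/2}\flappop(\Xcal)\flappop(\Xcal)^\top\Db(\Xcal)^{1/2}$ sits exactly in the top-$K$ invariant subspace of $\ol{\Wb}(\Xcal)$, the empirical minimizer's column space is only approximately invariant, so Davis--Kahan does not apply off the shelf when the gap $\lambda_{K+1} - \lambda_K$ may be arbitrarily small. This is precisely why the theorem is stated with a flexible $K_0 \in [K]$: one must certify that \emph{some} choice of $K_0$ gives a genuine gap $\lambda_{K+1} - \lambda_{K_0} > 0$ large enough that $C_{K_0} = O(1)$, and then pay the $\rbr{K-K_0}C_{K_0}\Delta$ surcharge for the $K-K_0$ eigendirections left unresolved. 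Verifying that the skeleton-boundedness assumption remains compatible with this perturbed decomposition, and that cross terms in the expansion of $\reglappop\rbr{\flapemp}$ carry the right sign, will be where the bulk of the technical care is required.
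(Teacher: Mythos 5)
Your high-level architecture matches the paper's: on the high-probability event you work deterministically with the $\Delta$-suboptimality, you reuse the skeleton/margin machinery of \Cref{thm:laplacian_learning_cluster_guarantee} (via the regression-residual lemma) to convert a subspace-alignment quantity into clustering error, and you split that quantity into a component explained by the population term $\fgtice/\lambda_{K+1}$ plus a perturbation term paid against the gap $\rbr{1-\lambda_{K_0}}^2 - \rbr{1-\lambda_{K+1}}^2$. Concretely, the paper reduces everything to bounding $\nbr{\Pb_\Ub^\perp \Vb_K}_F^2$, where $\Pb_\Ub^\perp$ is the projection onto the orthogonal complement of $\range\rbr{\Db(\Xcal)^{1/2}\flapemp(\Xcal)}$, exactly the object your ``leakage'' discussion is circling.

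The genuine gap is that the quantitative core --- how the budget $\Delta$ yields the specific constant $\rbr{1+\rbr{K-K_0}C_{K_0}}\Delta / \bpar{\rbr{1-\lambda_{K_0}}^2-\rbr{1-\lambda_{K+1}}^2}$ --- is asserted but not derived. Your ``swap inequality'' (each misplaced direction costs at least the eigenvalue-squared gap) is only one of three constraints needed; by itself it does not bound $\sum_{i\le K}\nbr{\Pb_\Ub^\perp\vb_i}_2^2$. The paper's mechanism is to set $\xi_i \dfeq \nbr{\Pb_\Ub^\perp \vb_i}_2^2$ and observe three facts: (i) $\sum_i \rbr{1-\lambda_i}^2\xi_i \le \sum_{i>K}\rbr{1-\lambda_i}^2 + \Delta$, obtained from $\nbr{\ol\Wb(\Xcal)-\Ub_\Xcal\Ub_\Xcal^\top}_F^2 \ge \nbr{\Pb_\Ub^\perp\ol\Wb(\Xcal)}_F^2$ (an orthogonal decomposition with no cross terms --- your expansion with ``(cross terms)'' is the wrong route, since their signs are not controlled); (ii) $0\le\xi_i\le1$; and (iii) $\sum_i\xi_i = \abbr{\Xcal}-K$, which requires $\rank\rbr{\Pb_\Ub^\perp}=\abbr{\Xcal}-K$ and is where the hypothesis $\Delta < \rbr{1-\lambda_K}^2$ enters (a rank-deficient $\Ub_\Xcal$ would contradict Eckart--Young optimality up to $\Delta$); your proposal never uses this hypothesis. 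Maximizing $\sum_{i\le K}\xi_i$ subject to (i)--(iii) is a linear program, and the stated constant is read off from an explicit dual feasible point via strong duality. Without this LP step (or an equivalent combinatorial argument combining all three constraints), the claimed dependence on $K_0$ and $C_{K_0}$ does not follow, so the proof as proposed is incomplete at precisely the step you flagged as the anticipated obstacle.
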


\Cref{thm:laplacian_learning_cluster_guarantee_empirical} ensures that given sufficient unlabeled samples (proportional to the Rademacher complexity of $\Fcal$) such that $\Delta \to 0$, the clustering error $\mu\rbr{\Flap{\Xb^u}}$ from empirical RKD is nearly as low as $\mu\rbr{\Flap{\Xcal}}$ from population RKD, up to an additional error term that scales linearly in $\Delta$.

\section{Label Efficiency of Cluster-aware Semi-supervised Learning}\label{sec:label_complexity_conditioned_on_cluster}

In this section, we demonstrate the label efficiency of learning from a function class with low clustering error (\Cref{def:clustering_error}), like the ones endowed by RKD discussed in \Cref{sec:clustering}. 

Specifically, given any cluster-aware function subclass $\Fcal' \subseteq \Fcal$ with low clustering error (\eg, $\Flap{\Xcal}$ and $\Flap{\Xb^u}$), for a set of $n$ $\iid$ labeled samples $\rbr{\Xb,\yb} \sim P(\xb, y)^n$, we have the following generalization guarantee:
\begin{theorem}[Label complexity, proof in \Cref{apx:label_complexity_conditioned_on_Xu}]\label{thm:label_complexity_conditioned_on_Xu}
    Given any cluster-aware $\Fcal' \subseteq \Fcal$ with $\mu(\Fcal') \ll 1$, assuming that $\rbr{\Xb,\yb}$ contains at least one sample per class, for any $\delta \in (0,1)$, with probability at least $1-\delta/2$ over $\rbr{\Xb,\yb} \sim P(\xb,y)^n$, $\wh f \in \argmin_{f \in \Fcal'} \lossemp(f)$ satisfies
    \begin{align}\label{eq:label_complexity_conditioned_on_Xu}
        \losspop\rbr{\wh f} - \losspop\rbr{f_*} \le 4 \sqrt{\frac{2 K \log(2n)}{n} + 2 \mu\rbr{\Fcal'}} + \sqrt{\frac{2 \log\rbr{4/\delta}}{n}},
    \end{align}
    \eg, conditioned on $\Xb^u$, $\hflapemp \in \argmin_{f \in \Flap{\Xb^u}} \lossemp(f)$ satisfies \Cref{eq:label_complexity_conditioned_on_Xu} with $\mu\rbr{\Flap{\Xb^u}}$.
\end{theorem}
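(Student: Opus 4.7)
The plan is to combine a standard excess risk decomposition with a Rademacher/growth-function argument that exploits the cluster-awareness of $\Fcal'$. First, I would decompose
\[
\losspop(\wh f) - \losspop(f_*) = \bigl[\losspop(\wh f) - \min_{f \in \Fcal'} \losspop(f)\bigr] + \bigl[\min_{f \in \Fcal'} \losspop(f) - \losspop(f_*)\bigr],
\]
treating the two terms as estimation and approximation errors respectively. For the approximation term, I would use the low clustering error together with the ``one labeled sample per class'' assumption: for any $f \in \Fcal'$, the majority labeling $\wt y_f$ satisfies $P(\wt y_f \ne y_*) \le \mu(\Fcal')$, and having one labeled example in every class pins down the correct cluster-to-label permutation, so the relabeled predictor (which lives in $\Fcal'$ by virtue of the discrete labeling freedom) achieves population loss at most $\losspop(f_*) + \mu(\Fcal')$.

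Second, for the estimation error, I would establish uniform convergence of the zero-one loss over $\Fcal'$. The pivotal combinatorial step is to show that on any $n$ samples $\xb_1,\ldots,\xb_n$, the restriction $\{(y_f(\xb_1),\ldots,y_f(\xb_n)) : f \in \Fcal'\} \subseteq [K]^n$ has cardinality at most $(2n)^K$. The intuition is that the low clustering error forces the $K$-partition structure of each $f \in \Fcal'$ to be close to that of $y_*$, so $f$ is essentially parameterized by $K$ cluster-to-label assignments plus a mild boundary ambiguity of polynomial-in-$n$ complexity. Massart's finite class lemma then yields $\Rfrak_n(\ell \circ \Fcal') \lesssim \sqrt{2K\log(2n)/n}$.

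Third, to obtain the claimed form with $\sqrt{\cdot + \mu(\Fcal')}$ under a single square root (rather than a sum), I would apply a Talagrand/Bernstein-type refinement: the variance of $\ell(f(\xb),y) \in \{0,1\}$ is bounded by $\losspop(f) \le \losspop(f_*) + \mu(\Fcal')$, so the localized deviation is of the order $\sqrt{(\text{variance}) \cdot (\text{complexity})/n}$. Combining this with McDiarmid's inequality (which contributes the $\sqrt{2\log(4/\delta)/n}$ term outside the square root) and the elementary inequality $\sqrt{a}+\sqrt{b}\le\sqrt{2(a+b)}$ delivers the stated bound.

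The main obstacle is the combinatorial step in the second paragraph: rigorously bounding the growth function of $\Fcal'$ on $n$ samples by $(2n)^K$ requires carefully leveraging both the assumption $\mu(\Fcal') \ll 1$ (to constrain the admissible partition structures) and the $K^K$ discrete labeling freedom. Getting the exponent $K$ in $\log(2n)$ rather than a larger VC-type term, and cleanly handling the interaction between the continuous $f$ and its discrete cluster-label readout $y_f$, is the technical crux of the argument.
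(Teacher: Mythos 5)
Your high-level ingredients are right (Massart's lemma, a $(2n)^K$-type count driven by the $K$ clusters, the clustering error $\mu(\Fcal')$ entering under the square root), but the combinatorial step you flag as the crux is where the argument genuinely breaks, and the paper resolves it differently from what you propose. You want a \emph{deterministic} growth-function bound: on any $n$ points, $\Fcal'$ realizes at most $(2n)^K$ loss patterns. This cannot follow from $\mu(\Fcal') \ll 1$, because $\mu$ is a purely measure-theoretic constraint --- it says each minority set $M(f)$ has small probability mass, but places no restriction on how the functions in $\Fcal'$ behave \emph{on} those sets. An adversarial choice of $n$ points lying inside $\bigcup_{f} M(f)$ could still realize exponentially many patterns, so the worst-case shatter coefficient is not $(2n)^K$ and Massart's lemma applied that way does not give $\sqrt{2K\log(2n)/n}$. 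The paper's fix (Lemma B.1) is to bound the \emph{expected} number of distinct patterns over the random \iid draw $(\Xb,\yb)$: conditioning on the number $\iota$ of samples that fall in a minority set, the majority samples contribute the $(2n)^K$-type count while each minority sample contributes an unconstrained factor of $2$; summing $\binom{n}{\iota}\mu^\iota(1-\mu)^{n-\iota}2^{\iota}$ via the binomial theorem gives $\E[\wh\Sfrak] \le (2n)^K e^{n\mu(\Fcal')}$, and concavity of $\sqrt{\log(\cdot)}$ (Jensen) lets this expectation pass through Massart's bound to yield $\Rfrak_n(\ell\circ\Fcal') \le \sqrt{2K\log(2n)/n + 2\mu(\Fcal')}$. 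This is also why $\mu$ sits \emph{inside} the square root in the final bound: it enters through the complexity of the loss class, not through an approximation term or a variance localization.

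Your other two steps are correspondingly misaligned with the actual mechanism. The approximation/relabeling argument presumes $\Fcal'$ is closed under permuting output coordinates, which is not assumed; and the Bernstein/Talagrand localization you invoke to merge the two square roots would require a full local Rademacher complexity analysis (sub-root function, fixed point) to be rigorous --- none of which is needed once $\mu$ is folded into the expected shattering number. The paper simply applies the standard symmetrization-plus-McDiarmid generalization bound (its Lemma F.1, with $B=1$ for the zero-one loss) to the class $\ell\circ\Fcal'$ and plugs in the Rademacher bound above, giving $4\Rfrak_n(\ell\circ\Fcal') + \sqrt{2\log(4/\delta)/n}$ directly. To repair your proof, replace the deterministic growth bound and the localization step with the expected-shatter-coefficient computation and Jensen's inequality.
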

\Cref{thm:label_complexity_conditioned_on_Xu} implies that with low clustering error (\eg, endowed by unsupervised methods like RKD in \Cref{thm:laplacian_learning_cluster_guarantee}/\Cref{thm:laplacian_learning_cluster_guarantee_empirical}, DAC in \Cref{thm:dac_clustering_error}), the labeled sample complexity is as low as $\wt O\rbr{K}$, linear in the number of clusters and asymptotically optimal\footnote{Notice that assuming at least one labeled sample per class (\ie, $n \ge K$) is necessary for any generalization guarantees. Otherwise, depending on the missing classes, the excess risk in \Cref{eq:label_complexity_conditioned_on_Xu} can be arbitrarily bad.} up to a logarithmic factor. 

\begin{remark}[Class imbalance, elaborated in \Cref{apx:label_complexity_active_acquisition}]\label{remark:iid-couponcollector}
    In \Cref{thm:label_complexity_conditioned_on_Xu}, while the label complexity scales as $n = \wt O(K)$ with $\iid$ sampling, we meanwhile assume that $\rbr{\Xb, \yb}$ contains at least one labeled sample per class. 
    Intuitively, \emph{when the $K$ classes are balanced} (\ie, $\abbr{\Xcal_k} = \abbr{\Xcal}/K$ for all $k \in [K]$), an analogy to the coupon collector problem implies that $n = O\rbr{K \log(K)}$ $\iid$ labeled samples are sufficient in expectation. 
    Nevertheless, \emph{with class imbalance}, the label complexity can be much worse (\eg, when $\abbr{\Xcal_K} \ll \abbr{\Xcal_k}$ for all $k \in [K-1]$, collecting one label $K$ takes $1/P\rbr{\Xcal_K} \gg K$ labeled samples). 
    In \Cref{apx:label_complexity_active_acquisition}, we show that such label inefficiency can be circumvented by leveraging a cluster-aware prediction $f \in \Fcal'$ and drawing $O\rbr{\log(K)}$ labeled samples uniformly from each of the $K$ predicted clusters, instead of $\iid$ from the entire population. 
\end{remark}

\begin{remark}[Coreset selection]\label{remark:coreset}
    While \Cref{thm:label_complexity_conditioned_on_Xu} yields an asymptotically optimal label complexity guarantee with $\iid$ sampling, it is of significant practical value to be more judicious in selecting labeled samples, especially for the low-label-rate regime (\eg, $n = O(K)$). 
    Therefore, in the experiments (\Cref{sec:experiments}), we further investigate---alongside $\iid$ sampling\footnote{
        For the low-label-rate experiments, to ensure consistent performance, we follow the common practice~\citep{calder_poisson_2020} and slightly deviate from the assumption by sampling $\iid$ from each ground truth class instead.
    }---a popular coreset selection method~\citep{bilmes-submodularity-in-ai-ml-2022, krause14survey}, where the coreset $(\Xb, \yb)$ is 
    statistically more representative of the data distribution $P$. 
    
    In particular, coreset selection can be recast as a \textit{facility location problem}~\citep{krause14survey} characterized by the teacher model: $\max_{\Xb \subset \Xcal} \ \sum_{\xb' \in \Xcal} \max_{\xb \in \Xb} \ k_\psi(\xb, \xb')$, whose optimizers can be approximated heuristically via the stochastic greedy~\citep{mirzasoleiman2015lazier} submodular optimization algorithm~\citep{schreiber2020apricot}. 
    Intuitively, the facility location objective encourages the coreset $\Xb$ to be representative of the entire population $\Xcal$ in a pairwise similarity sense, and 
    the coreset approximation identified by the stochastic greedy method is nearly optimal (in the facility location objective) up to a multiplicative constant~\citep{mirzasoleiman2015lazier}.
\end{remark}

\section{Data Augmentation Consistency Regularization as Clustering}\label{sec:dac}

In light of the broadness of the notion of clustering, investigating the discrepancy between different types of cluster-awareness is crucial for understanding the effect of spectral clustering brought by RKD beyond the low clustering error demonstrated in \Cref{sec:clustering}.
In this section, we leverage the existing theoretical tools from \cite{wei2019data,cai2021theory} to unify DAC regularization into the cluster-aware semi-supervised learning framework in \Cref{thm:label_complexity_conditioned_on_Xu}.
Specifically, we demonstrate the effect of DAC as a ``local'' expansion-based clustering~\citep{yang2023sample}, which works in complement with the ``global'' spectral clustering facilitated by RKD.

Start by recalling the formal notion of expansion-based data augmentation (\Cref{def:expansion_based_augmentation}) and data augmentation consistency (DAC) error (\Cref{def:dac_error}) from \cite{wei2021theoretical}:
\begin{definition}[Expansion-based data augmentation~(\citep{wei2021theoretical} Definition 3.1)]\label{def:expansion_based_augmentation}
    For any $\xb \in \Xcal$, we consider a set of \emph{class-invariant} data augmentations $\Acal\rbr{\xb} \subset \Xcal$ such that $\cbr{\xb} \subsetneq \Acal(\xb) \subseteq \Xcal_{y_*(x)}$.
    We say $\xb, \xb' \in \Xcal$ lie in neighborhoods of each other if their augmentation sets have a non-empty intersection: $ \nbhd\rbr{\xb} \dfeq \csepp{\xb' \in \Xcal}{\Acal(\xb) \cap \Acal(\xb') \ne \emptyset}$ for $\xb \in \Xcal$; and $\nbhd(S) \dfeq \bigcup_{\xb \in S} \nbhd\rbr{\xb}$ for $S \subseteq \Xcal$.
    Then, we quantify \emph{strength} of such data augmentations via the \emph{$c$-expansion property} ($c > 1$): for any $S \subseteq \Xcal$ such that $P\rbr{S \cap \Xcal_k} \le P(\Xcal_k)/2 ~\forall~ k \in [K]$, 
    \begin{align*}
        P\rbr{\nbhd(S) \cap \Xcal_k} > \min\cbr{c \cdot P\rbr{S \cap \Xcal_k}, P(\Xcal_k)} ~\forall~ k \in [K].
    \end{align*}
\end{definition}

\begin{definition}[DAC error~\citep{wei2021theoretical} (3.3)]\label{def:dac_error}
    For any $f \in \Fcal$ and $\Fcal' \subseteq \Fcal$, let
    \begin{align*}
        \nu\rbr{f} \dfeq \PP_{\xb \sim P(\xb)}\sbr{\exists~ \xb' \in \Acal(\xb) ~\st~ y_f(\xb) \ne y_f(\xb')}, \quad
        \nu\rbr{\Fcal'} \dfeq \sup_{f \in \Fcal'} \nu(f).
    \end{align*}
\end{definition}

We adopt the theoretical framework in \cite{wei2021theoretical} for generic neural networks with smooth activation function $\phi$ and consider 
\begin{align*}
    \Fcal = \csepp{f(\xb) = \Ab_p \phi\rbr{\cdots \Ab_2 \phi(\Ab_1 \xb) \cdots}}{\Ab_\iota \in \R^{d_\iota \times d_{\iota-1}} ~\forall~ \iota \in [p],~ d = \max_{\iota=0,\cdots,p} d_\iota}.
\end{align*} 
With such function class $\Fcal$ of $p$-layer neural networks with maximum width $d$ and weights $\cbr{\Ab_\iota}_{\iota=1}^p$, we recall the notion of \emph{all-layer margin} from \cite{wei2021theoretical} Appendix C.2. 
By decomposing $f = f_{2p-1} \circ \cdots \circ f_1$ such that $f_{2\iota-1}(\zb) = \Ab_\iota \zb$ for all $\iota \in [p]$ and $f_{2\iota}(\zb) = \phi(\zb)$ for all $\iota \in [p-1]$, we consider a perturbation $\deltab = \rbr{\deltab_1, \cdots, \deltab_{2p-1}}$ (where $\deltab_{2\iota-1}, \deltab_{2\iota} \in \R^{d_\iota}$) to each layer of $f$:
\begin{align*}
    &f_1\rbr{\xb, \deltab} 
    = f_1\rbr{\xb, \deltab_1} 
    = f_1(\xb) + \deltab_1 \nbr{\xb}_2, \\
    &f_\iota\rbr{\xb, \deltab} 
    = f_\iota\rbr{\xb, \deltab_1, \cdots, \deltab_\iota} 
    = f_{\iota}\rbr{f_{\iota-1}\rbr{\xb, \deltab}} + \deltab_\iota \nbr{f_{\iota-1}\rbr{\xb, \deltab}}_2
\end{align*}
such that $f(\xb, \deltab) = f_{2p-1}\rbr{\xb, \deltab}$.
Then, the all-layer margin $m:\Fcal \times \Xcal \times [K] \to \R_{\ge 0}$ is defined as the minimum norm of $\deltab$ that is sufficient to perturb the classification of $f(\xb)$:
\begin{align}\label{eq:all_layer_margin}
    m\rbr{f, \xb, y} \dfeq \min_{\deltab} \sqrt{\sum_{\iota=1}^{2p-1} \nbr{\deltab_\iota}_2^2}
    \quad \st \quad 
    \argmax_{k \in [K]} f\rbr{\xb, \deltab}_k \ne y.
\end{align}
Moreover, \cite{wei2021theoretical} introduced the \emph{robust margin} for the expansion-based data augmentation $\Acal$ (\Cref{def:expansion_based_augmentation}), which intuitively quantifies the worst-case preservation of the label-relevant information in the augmentations of $\xb$, measured with respect to $f$:
\begin{align}\label{eq:robust_margin}
    m_\Acal\rbr{f, \xb} \dfeq \min_{\xb' \in \Acal(\xb)} m\rbr{f, \xb', y_f(\xb)}.
\end{align}
We say the expansion-based data augmentation $\Acal$ is \emph{margin-robust} with respect to the ground truth $f_*$ if $\min_{\xb \in \Xcal} m_\Acal\rbr{f_*, \xb} > 0$ is reasonably large.

Then, we cast DAC regularization\footnote{
    In \Cref{apx:dac_regularization}, we further bridge the conceptual notion of DAC regularization in \Cref{eq:dac_function_class} with common DAC regularization algorithms in practice like FixMatch~\citep{sohn2020fixmatch}.
} as reducing the function class $\Fcal$ via constraints on the robust margins at the $N$ unlabeled samples $\Xb^u$: for some $0 < \tau < \min_{\xb \in \Xcal} m_\Acal\rbr{f_*, \xb}$,
\begin{align}\label{eq:dac_function_class}
    \Fdac{\Xb^u} \dfeq \csepp{f \in \Fcal}{m_\Acal\rbr{f, \xb_i^u} \ge \tau ~\forall~ i \in [N]}.
\end{align}

Leverage the existing unlabeled sample complexity bound for DAC error from \cite{wei2021theoretical}:
\begin{proposition}[Unlabeled sample complexity of DAC~(\citep{wei2021theoretical} Theorem 3.7)]\label{prop:dac_unlabeled_sample_complexity}
    Given any $\delta \in (0,1)$, with probability at least $1-\delta/2$ over $\Xb^u$,
    \begin{align*}
        \nu\rbr{\Fdac{\Xb^u}} 
        \le \wt O\rbr{\frac{\sum_{\iota=1}^p \sqrt{d} \nbr{\Ab_\iota}_F}{\tau \sqrt{N}}} 
        + O\rbr{\sqrt{\frac{\log\rbr{1/\delta} + p \log\rbr{N}}{N}}},
    \end{align*}
    where $\wt O\rbr{\cdot}$ suppresses polylogarithmic factors in $N$ and $d$.
\end{proposition}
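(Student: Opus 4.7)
The plan is to follow the standard margin-based generalization recipe, adapted to the robust/all-layer margin framework. First I would define a surrogate indicator that captures the failure mode underlying $\nu(f)$, namely the event that some augmentation $\xb' \in \Acal(\xb)$ flips the prediction of $f$. By \Cref{eq:robust_margin}, this event is incompatible with $m_\Acal(f,\xb) > 0$, so the robust margin serves as a natural margin variable: if $m_\Acal(f,\xb) \ge \tau$, then no small perturbation $\deltab$ can flip $y_f$ on any $\xb' \in \Acal(\xb)$, while if the event occurs then $m_\Acal(f,\xb) = 0$. Hence I would replace $\iffun{\exists \xb' \in \Acal(\xb):\, y_f(\xb) \ne y_f(\xb')}$ by the Lipschitz ramp loss $\phi_\tau(m_\Acal(f,\xb)) \dfeq \min\{1, \max\{0, 1 - m_\Acal(f,\xb)/\tau\}\}$, which upper bounds the indicator while being $1/\tau$-Lipschitz in $m_\Acal$.

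Next I would invoke a uniform deviation bound. On the left side, $\nu(f) \le \E_{\xb \sim P}[\phi_\tau(m_\Acal(f,\xb))]$ for every $f$; on the right side, every $f \in \Fdac{\Xb^u}$ satisfies $m_\Acal(f,\xb_i^u) \ge \tau$, hence $\phi_\tau(m_\Acal(f,\xb_i^u)) = 0$ for all $i \in [N]$. So the empirical surrogate loss vanishes on $\Xb^u$ and what remains is to bound $\sup_{f \in \Fdac{\Xb^u}} (\E - \wh\E)[\phi_\tau \circ m_\Acal(f, \cdot)]$ by the standard McDiarmid/Rademacher machinery. This produces a $\sqrt{\log(1/\delta)/N}$ concentration term plus a Rademacher complexity term equal to $\frac{2}{\tau}\Rfrak_N(\Mcal)$, where $\Mcal \dfeq \{ \xb \mapsto m_\Acal(f,\xb) : f \in \Fcal\}$.

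The main obstacle, and the step that actually uses the neural-net structure, is bounding $\Rfrak_N(\Mcal)$. Here I would invoke the all-layer-margin covering bound of \cite{wei2021theoretical}: because $m(f,\xb,y)$ (and hence $m_\Acal(f,\xb)$) is $1$-Lipschitz in each $\Ab_\iota$ in Frobenius norm along any single-layer perturbation, the $\veps$-covering number of $\Mcal$ in $L^\infty(\Xb^u)$ can be bounded by a product of per-layer covers, each of scale $\sum_\iota \nbr{\Ab_\iota}_F$. Applying Dudley's entropy integral to this cover, together with the $\sqrt{d}$ dimension factor per layer, yields $\Rfrak_N(\Mcal) \le \wt O\bigl(\sum_{\iota=1}^p \sqrt{d}\, \nbr{\Ab_\iota}_F / \sqrt{N}\bigr)$ up to polylogarithmic factors in $N$ and $d$, with an additional $p\log(N)/N$ term coming from discretizing the margin threshold. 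Combining the three displays gives the claimed bound.

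The delicate step is the Lipschitz-in-weights property of the all-layer margin, which is what makes the covering-number argument work uniformly across $\Fcal$; everything else is off-the-shelf empirical-process theory, so I would cite \cite{wei2021theoretical} for that technical lemma rather than reprove it.
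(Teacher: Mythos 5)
The paper does not prove this proposition at all: it is imported verbatim as Theorem~3.7 of \cite{wei2021theoretical}, and the appendix contains no argument for it. Your sketch is a faithful reconstruction of the proof in that reference --- the $1/\tau$-Lipschitz ramp surrogate applied to the robust all-layer margin $m_\Acal$, the observation that the empirical surrogate vanishes identically on $\Fdac{\Xb^u}$ by construction of the constraint set, and the covering-number/Dudley bound on the margin class driven by the Lipschitzness of the all-layer margin in the per-layer weights (which is exactly the technical lemma one should cite rather than reprove) --- so there is nothing to correct relative to the source.
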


The clustering error of \Cref{eq:dac_function_class} is as low as its DAC error, up to a constant factor that depends on the augmentation strength, characterized by the $c$-expansion (\Cref{def:expansion_based_augmentation}):
\begin{theorem}[Expansion-based clustering, proof in \Cref{apx:dac_clustering_error}]\label{thm:dac_clustering_error}
    For \Cref{eq:dac_function_class} with $\Acal$ admitting $c$-expansion (\Cref{def:expansion_based_augmentation}), the clustering error is upper bounded by the DAC error in \Cref{prop:dac_unlabeled_sample_complexity}:
    \begin{align*}
        \mu\rbr{\Fdac{\Xb^u}} \le \max\cbr{\frac{2}{c-1}, 2} \cdot \nu\rbr{\Fdac{\Xb^u}}.
    \end{align*}
\end{theorem}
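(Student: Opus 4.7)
I would fix any $f \in \Fdac{\Xb^u}$ and establish the per-function bound $P(M(f)) \le \max\cbr{2/(c-1),\, 2}\cdot\nu(f)$; taking the supremum over $f \in \Fdac{\Xb^u}$ then yields the stated inequality. Set $M \dfeq M(f)$ and $B \dfeq \csepp{\xb \in \Xcal}{\exists\, \xb' \in \Acal(\xb),~ y_f(\xb) \ne y_f(\xb')}$, so that $P(B) = \nu(f)$. Under the regularity imposed by \Cref{ass:full_rank_min_sval_prediction} (which the paper maintains throughout), the majority labeling $\wt y_f$ is a permutation of $[K]$, so relabeling predicted clusters lets me assume without loss of generality that $\wt y_f = y_f$, and hence $M = \csepp{\xb \in \Xcal}{y_f(\xb) \ne y_*(\xb)}$. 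The core idea is to decompose $M = M^+ \cup M^-$ with $M^+ \dfeq M \cap B$ and $M^- \dfeq M \setminus B$: the set $M^+$ is trivially controlled by $P(M^+) \le \nu(f)$, while the $c$-expansion machinery handles $M^-$.

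\textbf{Key containment.} The heart of the argument is the claim $\nbhd(M^-) \subseteq M \cup B$. For any $\xb' \in \nbhd(M^-)$, pick $\xb \in M^-$ together with $\xb'' \in \Acal(\xb) \cap \Acal(\xb')$; class-invariance of $\Acal$ (\Cref{def:expansion_based_augmentation}) forces $y_*(\xb) = y_*(\xb'') = y_*(\xb')$, and $\xb \notin B$ gives $y_f(\xb'') = y_f(\xb)$. If additionally $\xb' \notin B$, then $y_f(\xb') = y_f(\xb'') = y_f(\xb) \ne y_*(\xb) = y_*(\xb')$, placing $\xb' \in M$. Hence either $\xb' \in B$ or $\xb' \in M$, establishing the containment. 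Since $M^+ \subseteq B$, inclusion--exclusion then gives $P(\nbhd(M^-)) \le P(M \cup B) = P(M^-) + \nu(f)$.

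\textbf{Applying expansion.} The regularity of $\Fcal$ guarantees $P(M^- \cap \Xcal_k) \le P(M \cap \Xcal_k) \le P(\Xcal_k)/2$ for every $k$, so the $c$-expansion property applies to $M^-$, yielding $P(\nbhd(M^-) \cap \Xcal_k) > \min\cbr{c\, P(M^- \cap \Xcal_k),~ P(\Xcal_k)}$. Summing over $k$, observe that saturation of the minimum requires $P(M^- \cap \Xcal_k) > P(\Xcal_k)/c$, which, combined with the precondition $P(M^- \cap \Xcal_k) \le P(\Xcal_k)/2$, forces $c > 2$; in that regime the saturated contribution is at least $P(\Xcal_k) \ge 2\, P(M^- \cap \Xcal_k)$, while non-saturated terms exceed $c\, P(M^- \cap \Xcal_k) \ge 2\, P(M^- \cap \Xcal_k)$. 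Combining the two regimes gives $P(\nbhd(M^-)) > \min\cbr{c,\, 2}\cdot P(M^-)$. Substituting into the containment bound yields $(\min\cbr{c,\, 2} - 1)\, P(M^-) < \nu(f)$, and therefore $P(M) \le P(M^+) + P(M^-) \le \nu(f)\cdot \min\cbr{c,\, 2}/(\min\cbr{c,\, 2} - 1)$. This simplifies to $c/(c-1) \le 2/(c-1)$ when $c \le 2$ and to $2$ when $c \ge 2$, giving $P(M) \le \max\cbr{2/(c-1),~ 2}\cdot \nu(f)$ in all cases.

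\textbf{Main obstacle.} The cleanest step is the containment $\nbhd(M^-) \subseteq M \cup B$. I expect the main subtleties to lie in (i) justifying that $\wt y_f$ is a bijection, so that relabeling reduces to $\wt y_f = y_f$, and (ii) the saturation case in the expansion sum. Both are absorbed into \Cref{ass:full_rank_min_sval_prediction}: the saturation analysis exploits exactly its precondition $P(M \cap \Xcal_k) \le P(\Xcal_k)/2$, which is precisely what forces the constant $2$ to take over in the bound once $c \ge 2$, producing the two-sided maximum $\max\cbr{2/(c-1),\, 2}$ in the final estimate.
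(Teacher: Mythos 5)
Your proof is correct, but it takes a genuinely different route from the paper's. The paper first converts $c$-expansion into $(q,\xi)$-constant expansion with $\xi = 2\nu(f)$ and $q = \frac{2\nu(f)}{c-1}$ (\Cref{lemma:multi_constant_expansion}), applies that to the \emph{whole} minority set $M(f)$, and separately bounds the boundary mass $P\rbr{\nbhd(M(f))\setminus M(f)} \le 2\nu(f)$ by a union-bound/pairing argument over which of the two endpoints of an overlapping augmentation witnesses the inconsistency; the constant $\max\cbr{2/(c-1),2}$ then falls out of $\max\cbr{q, 2\nu(f)}$. You instead split $M = M^+\cup M^-$ with $M^+ = M\cap B$, apply $c$-expansion directly to the consistent part $M^-$, and replace the paper's measure estimate on the boundary by the set containment $\nbhd(M^-)\subseteq M\cup B$, which holds precisely because points of $M^-$ are DAC-consistent, so their augmentation neighbors inherit their prediction. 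This sidesteps both the constant-expansion detour and the paper's somewhat delicate factor-of-two union bound; in your version the ``2'' re-emerges from the saturation threshold $P(S\cap\Xcal_k)\le P(\Xcal_k)/2$ in the expansion sum and from adding back $P(M^+)\le\nu(f)$. Both routes land on the same constant, and your bookkeeping of where the DAC error enters is arguably cleaner.

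Two minor caveats, neither fatal. First, your justification for assuming $\wt y_f = y_f$ --- that \Cref{ass:full_rank_min_sval_prediction} makes $\wt y_f$ a permutation of the predicted labels --- is not what that assumption actually guarantees, and in general the induced map on cluster labels need not be injective. But the relabeling is unnecessary: your containment argument only uses that $\wt y_f$ factors through $y_f$ (i.e., $y_f(\xb')=y_f(\xb)$ implies $\wt y_f(\xb')=\wt y_f(\xb)$), which holds by the definition of the majority labeling, so the conclusion ``$\xb'\in M$'' goes through with $\wt y_f$ in place of $y_f$. Second, like the paper's own proof, you need $P\rbr{M(f)\cap\Xcal_k}\le P(\Xcal_k)/2$ for every $k$ as a standing hypothesis in order to invoke expansion; this is assumed rather than derived in both arguments, so you are on equal footing there. (Also, saturation of the minimum in the expansion bound forces $c\ge 2$ rather than $c>2$, which changes nothing downstream.)
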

In \Cref{thm:dac_clustering_error}, 
\begin{enumerate*}[label=(\roman*)]
    \item $c$ characterizes the augmentation strength (\ie, the perturbation on label-irrelevant information), whereas
    \item $\tau$ quantifies the margin robustness (\ie, the preservation of label-relevant information) for the expansion-based data augmentation. 
\end{enumerate*}
Ideally, we would like both $c$ and $\tau$ to be reasonably large, while there exist trade-offs between the augmentation strength and margin robustness (\eg, overly strong augmentations inevitably perturb label-relevant information).

\begin{remark}[Clustering with DAC v.s. RKD]\label{remark:clustering_dac_rkd}
    As \Cref{fig:loc_vs_global} demonstrated, in contrast to RKD that unveils the spectral clustering of a population-induced graph from a ``global'' perspective, DAC alternatively learns a ``local'' clustering structure through the expansion of neighborhoods $\nbhd\rbr{\cdot}$ characterized by sets of data augmentations $\Acal(\cdot)$ (\Cref{def:expansion_based_augmentation}).

    Therefore, DAC and RKD provide complementary perspectives for each other. 
    On one hand, the ``local'' clustering structure revealed by DAC effectively ensures a reasonable margin in \Cref{ass:full_rank_min_sval_prediction} (\Cref{remark:limitation_spectral_clustering}). 
    On the other hand, when the augmentation strength $c$ is insufficient (\ie, $\Acal(\cdot)$ is not expansive enough) for DAC to achieve a low clustering error $\mu\rbr{\Fdac{\Xb^u}}$ (\Cref{thm:dac_clustering_error}), the supplementary ``global'' perspective of RKD connects the non-overlapping neighborhoods in the same classes and brings better classification accuracies, as we empirically verified in \Cref{sec:experiments}.
\end{remark}
\section{Discussions, Limitations, and Future Directions}

This work provides a theoretical analysis of relational knowledge distillation (RKD) in the semi-supervised classification setting from a clustering perspective.
Through a cluster-aware semi-supervised learning (SSL) framework characterized by a notion of low clustering error, we demonstrate that RKD achieves a nearly optimal label complexity (linear in the number of clusters) by learning the underlying geometry of the population as revealed by the teacher model via spectral clustering.
With a unified view of data augmentation consistency (DAC) regularization in the cluster-aware SSL framework, we further illustrate the complementary ``global'' and ``local'' perspectives of clustering learned by RKD and DAC, respectively.

As an appealing future direction, domain adaptation in knowledge distillation is a common scenario in practice where the training data of the teacher and student models come from different distributions. Although distributional shifts can implicitly reflect the alignment between the ground truth and the teacher model (\Cref{ass:well_separated_classes}) in our analysis, an explicit and specialized study on RKD in the domain adaptation setting may lead to better theoretical guarantees and deeper insights.

Another avenue for future exploration involves RKD on different (hidden) layers of the neural network. In this work, we consider RKD on the output layer following the standard practice~\citep{park2019relational}, whereas RKD on additional hidden layers has been shown to further improve the performance~\citep{liu2019knowledge}. While the analysis in \Cref{sec:clustering} remains valid for hidden-layer features of low dimensions ($\approx K$), \Cref{ass:well_separated_classes} tends to fail (\ie, $\alpha$ can be large) for high-dimensional features, which may require separate and careful consideration.

\section*{Acknowledgement}
YD was supported in part by the Office of Naval Research N00014-18-1-2354, NSF DMS-1952735, DOE ASCR DE-SC0022251, and UT Austin Graduate School Summer Fellowship. KM was supported by the Peter J. O'Donnell Jr. Postdoctoral Fellowship at the Oden Institute at the University of Texas, Austin. QL acknowledges the support of NYU Research Catalyst Prize and Whitehead Fellowship for Junior Faculty in Biomedical and Biological Sciences. RW was supported in part by AFOSR MURI FA9550-19-1-0005, NSF DMS-1952735, NSF
IFML grant 2019844, NSF DMS-N2109155, and NSF 2217033.
The authors wish to thank IFML for generously providing computational resources, as well as the anonymous reviewers for their helpful comments and suggestions.

\bibliography{ref}
\endgroup

\clearpage
\appendix
\section{Experiments} \label{sec:experiments}

In this section, we present experimental results on CIFAR-10/100~\citep{cifar} to demonstrate the efficacy of combining DAC and RKD (\ie, the ``local'' and ``global'' perspectives of clustering) for semi-supervised learning in the low-label-rate regime. The experiment code can be found at \href{https://github.com/dyjdongyijun/Semi_Supervised_Knowledge_Distillation}{https://github.com/dyjdongyijun/Semi\_Supervised\_Knowledge\_Distillation}.

We consider the (data-free) knowledge distillation setting, wherein the student model cannot access the training data of the teacher model. That is, the student models only have access to limited training samples, with very few labels, drawn from a potentially different data distribution than which the teacher model is pretrained on. 
More precisely, given a labeled sample set $\rbr{\Xb,\yb}$ of size $n$ and an unlabeled sample set $\Xb^u$ of size $N$ ($n \ll N$)\footnote{
  It is worth mentioning that, in practice, we relax the assumption of the independence between $\rbr{\Xb,\yb}$ and $\Xb^u$ and recycle $\Xb$ in $\Xb^u$.
}, we train a student model (WideResNet~\citep{zagoruyko2016wide}) in the low-label-rate regime with as few as $4$ labeled samples per class (\ie, $n=4K$, or $n=40$ for CIFAR-10 and $n=400$ for CIFAR-100), along with $N=25000, 40000, \text{and } 50000$ unlabeled samples (\ie, $50\%, 80\%, 100\%$ of the entire CIFAR-10/100 training set).

Then, combining DAC and RKD, we solve the following objective in the experiments:
\begin{equation*}
  \min_{\substack{f \in \Fcal}} 
  \underbrace{\lossce{\rbr{\Xb,\yb}}(f)}_{\t{Supervised cross-entropy}} + \underbrace{\lambda_\dac \cdot \regdac{\Xb^u}(f)}_{\t{Unsupervised DAC loss}} + \underbrace{\lambda_\rkd \cdot \reglapemp{\Xb^u}(f)}_{\t{empirical RKD loss}},
\end{equation*}
where $\lossce{\rbr{\Xb,\yb}}(f) = \frac{1}{n} \sum_{i=1}^n \ell_\ce\rsep{y_i}{f\rbr{\xb_i}}$ denotes the supervised cross-entropy loss with $\ell_\ce\rsep{y}{f\rbr{\xb}} = -\log\rbr{\softmax\rbr{f(\xb)}_y}$. The DAC and RKD losses are as described in \Cref{eq:fixmatch} and \Cref{eq:constrained_empirical_laplacian_learning}, respectively. 
For the regularization hyper-parameters, we have $\lambda_\dac = 1$ and $\lambda_\rkd = 0$ for DAC (only), while $\lambda_\rkd = 0.001$ for DAC + RKD.

\paragraph{DAC via FixMatch.} 
We take FixMatch~\citep{sohn2020fixmatch}--a state-of-the-art algorithm in the low-label-rate regime--as the semi-supervised learning baseline. By combining RKD with FixMatch, we show that RKD brings additional boosts to the classification accuracy, especially with unsuitable data augmentations and/or limited unlabeled data.

Following the formulation and implementation in \cite{sohn2020fixmatch}, FixMatch involves a pair of weak and strong data augmentations $\rbr{\xb^w, \xb^s}$ for each unlabeled sample $\xb^u$. For a current model $f$, the DAC loss given by FixMatch can be expressed as 
\begin{align}\label{eq:fixmatch}
\begin{split}
  &\regdac{\Xb^u}(f) \dfeq \\
  &\mean\rbr{\csepp{\ell_\ce\rsep{y_f\rbr{\xb^w_i}}{f\rbr{\xb^s_i}}}{\max\rbr{\softmax\rbr{\frac{f\rbr{\xb^w_i}}{T}}} \ge \tau_\dac,~ i \in [N]}},
\end{split}
\end{align}
where $T=1$ is a fixed temperature hyper-parameter. $\tau_\dac$ is the threshold below which the pseudo-label gauged on $\xb^w_i$ is considered ``with low confidence'' and therefore discarded. We set $\tau_\dac=0.95$ for CIFAR-10 and $\tau_\dac=0.8$ for CIFAR-100.

In FixMatch, the weak augmentations include random flips and crops, whereas the strong augmentations additionally involve RandAugment~\citep{cubuk2020randaugment}. 
To investigate the scenarios with unsuitable data augmentations and/or limited unlabeled data, we conduct experiments with various
\begin{enumerate*}[label=(\roman*)]
  \item augmentation strength (\ie, magnitude of transformations) $m = 2, 6, 10$ in RandAugment~\citep{cubuk2020randaugment} and
  \item unlabeled sample size $N = 25000, 40000, 50000$ (\ie, $50\%, 80\%, 100\%$ of the entire CIFAR-10/100 training set).
\end{enumerate*}

\paragraph{RKD.}
We warm up with the \emph{in-distribution} case in the CIFAR-10 experiments (\Cref{tab:cifar10}) where the teacher and student models are (pre)trained on the same dataset (\ie, CIFAR-10). Concretely, the teacher model $\repfun: \Xcal \to \Wcal$ is a Densenet 161~\citep{huang2017densely,huy_phan_2021_4431043} pretrained on the entirety of the CIFAR-10 training set (via supervised ERM).

Taking a step further, we consider an \emph{out-of-distribution} scenario in the CIFAR-100 experiments (\Cref{tab:cifar100}) where the teacher model is pretrained on ImageNet~\citep{deng2009imagenet} via an unsupervised contrastive learning method--SwAV~\citep{caron2020unsupervised}. Intuitively, we choose teacher models pretrained via SwAV as it encourages both data clustering and consistency of cluster assignment under data augmentations without involving supervision, which tends to provide representations that generalize better in a cluster-aware semi-supervised learning setting.

Given the teacher models, the output (response) of each sample is taken as the corresponding teacher feature such that $\Wcal=\R^{K}$.
In light of the empirical success of RKD based on simple kernel functions~\citep{park2019relational, liu2019knowledge}, we focus on the (shifted) cosine kernel, $\repker\rbr{\xb,\xb'} = 1 + \frac{\repfun(\xb)^\top \repfun(\xb')}{\nbr{\repfun(\xb)}_2 \nbr{\repfun(\xb')}_2}$, in the experiments. 

With data augmentation (\eg, when coupling with FixMatch), we assume that weak augmentations (\ie, random flips and crops) bring little perturbations on responses of the teacher model (\ie, the teacher features), intuitively as the predictions made by a good teacher model ought to be invariance under weak augmentations. Thereby, we can considerably reduce memory burden and accelerate the RKD process by storing the teacher features $\repfun\rbr{\xb}$ of the original data $\xb \in \Xcal$ offline, instead of inferring every augmented sample $\xb' \in \Acal(\xb)$ with the large teacher model $\repfun\rbr{\xb'}$ online.

\paragraph{Labeled data selection.} 
As alluded to in \Cref{remark:coreset}, the success of low-label-rate (i.e., $n \ll N$) semi-supervised classification naturally depends on the quality of the (extremely) limited amount of labeled data to represent the underlying population. We perform experiments in which the selection of labeled data is non-adaptive (Uniform) or adaptive (StochasticGreedy).
The non-adaptive setting, Uniform, selects labeled data uniformly in each ground truth class\footnote{
  For fair comparison and consistent performance among different configurations in the experiments, we use a separate random number generator with a fixed random seed to ensure that the same set of labeled samples is selected for all experiments.
}. This contrasts with the strictly $\iid$ sampling assumption used in our theoretical results but constitutes a reasonable alternative that has been used in other low-label rate semi-supervised classification works (\eg, \citep{calder_poisson_2020}). 

We also utilize a coreset selection strategy (StochasticGreedy) for identifying ``representative'' inputs to use as labeled data in an adaptive manner. This labeled set is selected via a stochastic greedy~\citep{mirzasoleiman2015lazier} optimization problem to (approximately) minimize a facility location objective\footnote{We use the Apricot~\citep{schreiber2020apricot} submodular optimization codebase in Python.}. An in-depth discussion of coreset selection methods is outside the scope of the current work, but we refer the interested reader to the enlightening references~\citep{bilmes-submodularity-in-ai-ml-2022, krause14survey, mirzasoleiman2015dissertation} regarding submodular optimization methods in machine learning.

\begin{table}[!ht]
  \centering
  \caption{
    Top-1 accuracy on CIFAR-10. 
  }
  \label{tab:cifar10}
  \begin{adjustbox}{width=1\textwidth}
  \begin{tabular}{ccc|ccc}
    \toprule
    \multirow{2}{*}{Label selection} & \multirow{2}{*}{Sample sizes $(n, N)$} & \multirow{2}{*}{SSL algorithm} & \multicolumn{3}{c}{Augmentation Strength} \\
    & & & \textit{High} & \textit{Medium} & \textit{Weak} \\
    \midrule
    \multirow{6}{*}{Uniform} & \multirow{2}{*}{(40, 50000)} & FixMatch & 89.47 $\pm$ 3.83 & 87.89 $\pm$ 1.43 & 81.36 $\pm$ 1.41 \\
    & & FixMatch + RKD & \b{89.88 $\pm$ 0.41} & \b{89.10 $\pm$ 1.14} & \b{84.61 $\pm$ 1.23} \\
    \cmidrule{2-6}
    & \multirow{2}{*}{(40, 40000)} & FixMatch & 87.60 $\pm$ 1.21 & 85.30 $\pm$ 1.35 & 83.74 $\pm$ 3.07 \\
    & & FixMatch + RKD & \b{90.19 $\pm$ 0.19} & \b{88.65 $\pm$ 1.81} & \b{86.68 $\pm$ 1.06} \\
    \cmidrule{2-6}
    &  \multirow{2}{*}{(40, 25000)} & FixMatch & 81.73 $\pm$ 1.27 & 81.43 $\pm$ 2.98 & 76.84 $\pm$ 1.98 \\
    & & FixMatch + RKD & \b{87.06 $\pm$ 2.04} & \b{87.24 $\pm$ 0.39} & \b{82.75 $\pm$ 2.35} \\
    \midrule
    \multirow{6}{*}{StochasticGreedy} & \multirow{2}{*}{(40, 50000)} & FixMatch & 85.48 $\pm$ 1.67 & 89.39 $\pm$ 0.82 & 78.98 $\pm$ 0.52 \\
    & & FixMatch + RKD & \b{91.19 $\pm$ 0.26} & \b{90.00 $\pm$ 0.64} & \b{87.11 $\pm$ 4.33} \\ 
    \cmidrule{2-6}
    & \multirow{2}{*}{(40, 40000)} & FixMatch & 84.51 $\pm$ 1.34 & 84.88 $\pm$ 6.68 & 70.70 $\pm$ 5.32 \\
    & & FixMatch + RKD & \b{85.68 $\pm$ 1.19} & \b{88.31 $\pm$ 1.36} & \b{85.12 $\pm$ 3.49} \\ 
    \cmidrule{2-6}
    & \multirow{2}{*}{(40, 25000)} & FixMatch & 79.74 $\pm$ 8.02 & 73.97 $\pm$ 1.98 & 73.86 $\pm$ 3.29 \\
    & & FixMatch + RKD & \b{83.72 $\pm$ 4.73} & \b{82.09 $\pm$ 3.27} & \b{80.65 $\pm$ 3.67} \\
    \bottomrule
  \end{tabular}
  \end{adjustbox}
\end{table}

\begin{table}[!ht]
  \centering
  \caption{Top-1 accuracy on CIFAR-100.}
  \label{tab:cifar100}
  \begin{adjustbox}{width=1\textwidth}
  \begin{tabular}{ccc|ccc}
    \toprule
    \multirow{2}{*}{Label selection} & \multirow{2}{*}{Sample sizes $(n, N)$} & \multirow{2}{*}{SSL algorithm} & \multicolumn{3}{c}{Augmentation Strength} \\
    & & & \textit{High} & \textit{Medium} & \textit{Weak} \\
    \midrule
    \multirow{6}{*}{Uniform} & \multirow{2}{*}{(400, 50000)} & FixMatch & 46.85 $\pm$ 1.68 & 45.67 $\pm$ 0.51 & 45.84 $\pm$ 0.35 \\
    & & FixMatch + RKD & \b{49.33 $\pm$ 0.42} & \b{47.97 $\pm$ 0.37} & \b{47.65 $\pm$ 0.17} \\
    \cmidrule{2-6}
    & \multirow{2}{*}{(400, 40000)} & FixMatch & 44.11 $\pm$ 0.51 & 42.98 $\pm$ 0.56 & 42.74 $\pm$ 0.42 \\
    & & FixMatch + RKD & \b{46.24 $\pm$ 0.64} & \b{45.78 $\pm$ 0.94} & \b{45.13 $\pm$ 1.08} \\ 
    \cmidrule{2-6}
    &  \multirow{2}{*}{(400, 25000)} & FixMatch & 36.60 $\pm$ 0.03 & 35.86 $\pm$ 0.51 & 33.59 $\pm$ 1.54 \\
    & & FixMatch + RKD & \b{37.87 $\pm$ 0.98} & \b{36.69 $\pm$ 0.41} & \b{35.69 $\pm$ 1.51} \\
    \midrule
    \multirow{6}{*}{StochasticGreedy} & \multirow{2}{*}{(400, 50000)} & FixMatch & 47.45 $\pm$ 0.30 & 48.64 $\pm$ 0.35 & 46.79 $\pm$ 1.05 \\
    & & FixMatch + RKD & \b{50.41 $\pm$ 1.41} & \b{50.67 $\pm$ 1.62} & \b{49.09 $\pm$ 1.31} \\ 
    \cmidrule{2-6}
    & \multirow{2}{*}{(400, 40000)} & FixMatch & 45.93 $\pm$ 0.21 & 44.78 $\pm$ 1.51 & 44.62 $\pm$ 0.72 \\
    & & FixMatch + RKD & \b{47.62 $\pm$ 0.67} & \b{47.28 $\pm$ 0.56} & \b{46.07 $\pm$ 0.86} \\ 
    \cmidrule{2-6}
    & \multirow{2}{*}{(400, 25000)} & FixMatch & 39.40 $\pm$ 0.31 & 39.01 $\pm$ 0.68 & 38.22 $\pm$ 0.42 \\
    & & FixMatch + RKD & \b{40.88 $\pm$ 0.61} & \b{40.73 $\pm$ 0.46} & \b{39.20 $\pm$ 0.73} \\
    \bottomrule
  \end{tabular}
  \end{adjustbox}
\end{table}

\paragraph{Results.}
From \Cref{tab:cifar10} and \Cref{tab:cifar100}, we observe that the incorporation of RKD with FixMatch (\ie, adding the ``global'' perspective of RKD to the ``local'' perspective of DAC) generally brings additional improvements to the classification accuracy, especially when the data augmentation strength is inappropriate and/or the number of unlabeled data is limited.

\paragraph{Training details.}
Throughout the experiments, we used weight decay $0.0005$.
We train the student model via stochastic gradient descent (SGD) with Nesterov momentum $0.9$ for $2^{17}$ iterations (batches) with a batch size $64 * 8 = 2^9$ (consisting of $64$ labeled samples and $64*7$ unlabeled samples). The initial learning rate is $0.03$, decaying with a cosine scheduler. The test accuracies are evaluated $2^7$ times in total evenly throughout the $2^{17}$ iterations (\ie, one test evaluation after every $2^{10}$ iterations) on an EMA model with a decay rate $0.999$. The average and standard deviation of the best test accuracy (\ie, early stopping with the maximum patience $128$) are reported for each experiment over $3$ arbitrary random seeds.
Both CIFAR-10/100 experiments are conducted on one NVIDIA A40 GPU.

\section{Proofs and Discussions for \Cref{sec:label_complexity_conditioned_on_cluster}}\label{apx:label_complexity_well_clustered}

\subsection{Proof of \Cref{thm:label_complexity_conditioned_on_Xu}}\label{apx:label_complexity_conditioned_on_Xu}

\begin{lemma}[Rademacher complexity with cluster-aware functions (generalization of \cite{yang2023sample} Theorem 8)]\label{lemma:label_loss_rademacher_complexity_upper_bound}
    For a fixed $\Flap{\Xb^u}$, let
    \begin{align*}
        \ell \circ \Flap{\Xb^u} = \csepp{\ell(f(\cdot), \cdot): \Xcal \times \Ycal \to \cbr{0,1}}{f \in \Flap{\Xb^u}}
    \end{align*}
    be the loss function class. Then, its Rademacher complexity can be upper-bounded by
    \begin{align}\label{eq:label_complexity_conditioned_on_Xu_Rad_upper}
        \Rfrak_n\rbr{\ell \circ \Flap{\Xb^u}} \le \sqrt{\frac{2 K \log(2n)}{n} + 2 \mu\rbr{\Flap{\Xb^u}}}. 
    \end{align}
\end{lemma}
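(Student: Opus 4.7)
The plan is to decompose the $0$-$1$ loss of each $f \in \Flap{\Xb^u}$ into a cluster-structure component (depending on the sample only through the predicted cluster labels $y_f$) and a residual bounded by the minority indicator $\iffun{\xb \in M(f)}$, then bound each Rademacher term separately. Starting with the decomposition: by the definition of $\wt y_f$ (\Cref{eq:majority_label}), $\wt y_f$ is constant on each predicted cluster $\{y_f = k\}$, so there is a map $\pi_f : [K] \to [K]$ with $\wt y_f = \pi_f \circ y_f$; moreover $\wt y_f(\xb) = y_*(\xb)$ for all $\xb \in \Xcal \setminus M(f)$ by the definition of $M(f)$. A direct case analysis (on whether $y_f(\xb) = \wt y_f(\xb)$ and on whether $\xb \in M(f)$) then gives, for any $(\xb, y)$ with $y = y_*(\xb)$,
\[
\ell(f(\xb), y) \;=\; \iffun{y_f(\xb) \ne \wt y_f(\xb)} \;+\; \phi_f(\xb, y), \qquad |\phi_f(\xb, y)| \;\le\; \iffun{\xb \in M(f)}.
\]
The first summand depends on $(\xb, y)$ only through the pair $(y_f(\xb), \pi_f)$, while the second is absolutely bounded by the minority indicator.

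Next, by subadditivity of the supremum in the definition of $\Rfrak_n$, I would split
$\Rfrak_n(\ell \circ \Flap{\Xb^u}) \le \Rfrak_n(\Gcal_1) + \Rfrak_n(\Gcal_2)$
with $\Gcal_1 = \cbr{\iffun{y_f \ne \wt y_f} : f \in \Flap{\Xb^u}}$ and $\Gcal_2 = \cbr{\phi_f : f \in \Flap{\Xb^u}}$. For $\Gcal_1$, the restriction to the sample of any $g \in \Gcal_1$ is determined by the $K$-partition $(y_f(\xb_i))_{i=1}^n \in [K]^n$ together with the finite-cardinality map $\pi_f \in [K]^{[K]}$. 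A Sauer--Shelah-type counting for $[K]$-valued classifiers bounds the number of distinct partitions realized by $\Flap{\Xb^u}$ on the sample by $(2n)^K$, and there are at most $K^K$ choices of $\pi_f$; Massart's finite-class lemma then gives $\Rfrak_n(\Gcal_1) \lesssim \sqrt{K \log(2n)/n}$. For $\Gcal_2$, combining the pointwise bound $|\phi_f| \le \iffun{\xb \in M(f)}$ with $P(M(f)) \le \mu(\Flap{\Xb^u})$ for every $f \in \Flap{\Xb^u}$ (by definition of $\mu$), a Khintchine / Cauchy--Schwarz argument on the empirical $\ell_2$-norm of $\phi_f$ yields $\Rfrak_n(\Gcal_2) \lesssim \sqrt{\mu(\Flap{\Xb^u})}$.

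Finally, I would fold the two contributions under a single square root using the elementary inequality $\sqrt{a} + \sqrt{b} \le \sqrt{2(a+b)}$, recovering
$\Rfrak_n(\ell \circ \Flap{\Xb^u}) \le \sqrt{ \tfrac{2 K \log(2n)}{n} + 2 \mu(\Flap{\Xb^u}) }$.
The main obstacle is the growth-function bound $(2n)^K$ on the cluster-structure class $\Gcal_1$: this requires a careful Sauer--Shelah-type counting that exploits the effective cluster dimension $K$ of $\Flap{\Xb^u}$ and is delicate for function classes of high VC dimension (\eg, deep networks) whose complexity is controlled only through the $K$-ary partition they induce rather than their full parameterization. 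A secondary subtlety is extracting the correct $\sqrt{\mu}$ (rather than $\mu$) dependence for the minority term, which is needed so that the two bounds combine to match the target constants $\tfrac{2 K \log(2n)}{n} + 2 \mu(\Flap{\Xb^u})$ precisely.
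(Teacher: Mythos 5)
Your decomposition of the zero-one loss into $\iffun{y_f(\xb)\ne\wt y_f(\xb)}$ plus a residual bounded by the minority indicator is valid (on $\Xcal\setminus M(f)$ one has $\wt y_f=y_*$, so the residual vanishes there), and your overall architecture---a $\sqrt{K\log(2n)/n}$ term from a finite-class count, a $\sqrt{2\mu}$ term from the minority mass, merged via $\sqrt{a}+\sqrt{b}\le\sqrt{2(a+b)}$---parallels the paper's. The paper, however, never splits the class: it bounds the number $\wh\Sfrak_{(\Xb,\yb)}$ of distinct \emph{loss patterns} of $\ell\circ\Flap{\Xb^u}$ on the sample, applies Massart's finite lemma, pushes the expectation inside via concavity of $\sqrt{\log(\cdot)}$, and then shows $\E[\wh\Sfrak_{(\Xb,\yb)}]\le(2n)^K e^{n\mu}$ by summing over the number $\iota$ of minority samples with binomial weights $\binom{n}{\iota}\mu^\iota(1-\mu)^{n-\iota}$; both terms then emerge under a single square root with no further combination step.

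The genuine gap is exactly where you flag it, and it is not a technicality that ``careful counting'' will repair in the form you propose. There is no Sauer--Shelah-type bound of order $(2n)^K$ on the number of $K$-ary partitions $(y_f(\xb_1),\dots,y_f(\xb_n))$ realized by $\Flap{\Xb^u}$: Sauer--Shelah controls growth through the VC (or Natarajan) dimension of the underlying class, which for the function classes of interest (\eg deep networks) is unrelated to $K$ and typically enormous, so the realized partition count can be exponential in $n$. Consequently $\Rfrak_n(\Gcal_1)$ cannot be bounded by $\sqrt{K\log(2n)/n}$ through a growth-function argument on partitions, and your restriction map $(y_f(\xb_i))_i\mapsto$ pattern does not have polynomial range. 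What the paper's argument counts is loss patterns, which are far fewer than partitions precisely because on majority samples the loss $\iffun{y_f(\xb_i)\ne y_i}$ is pinned to the observed label $y_i$ through the majority labeling $\wt y_f=\pi_f\circ y_f$, so many distinct partitions collapse to the same pattern; your class $\Gcal_1$ discards the labels and retains the full partition dependence, which is where the count blows up. A secondary gap: the claim $\Rfrak_n(\Gcal_2)\lesssim\sqrt{\mu(\Flap{\Xb^u})}$ via Cauchy--Schwarz requires controlling $\E\bigl[\sup_f \tfrac1n\sum_i\iffun{\xb_i\in M(f)}\bigr]$, which is itself a uniform-convergence statement over the family of sets $\{M(f)\}_f$ and does not follow from the pointwise bound $P(M(f))\le\mu(\Flap{\Xb^u})$ alone; the paper absorbs this into the single expected-shattering-number computation, where averaging over the sample yields the $e^{n\mu}$ factor.
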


\begin{proof}[Proof of \Cref{lemma:label_loss_rademacher_complexity_upper_bound}]
    Given a set of $\iid$ samples $\rbr{\Xb,\yb} \sim P(\xb,y)^n$, let
    \begin{align*}
        \wh \Sfrak_{\rbr{\Xb,\yb}}\rbr{\ell \circ \Flap{\Xb^u}} \dfeq
        \abbr{\csepp{\rbr{\ell\rbr{f(\xb_1), y_1}, \dots, \ell\rbr{f(\xb_n), y_n}}}{f \in \Flap{\Xb^u}}}.
    \end{align*}
    denote the number of distinct patterns over $\rbr{\Xb,\yb}$ in $\ell \circ \Flap{\Xb^u}$.
    Then, by Massart's finite lemma, the empirical Rademacher complexity with respect to $\rbr{\Xb,\yb}$ is upper bounded by
    \begin{align*}
        \wh{\Rfrak}_{\rbr{\Xb,\yb}}\rbr{\ell \circ \Flap{\Xb^u}} \leq \sqrt{\frac{2 \log\rbr{\wh \Sfrak_{\rbr{\Xb,\yb}}\rbr{\ell \circ \Flap{\Xb^u}}}}{n}}.
    \end{align*}
    By the concavity of $\sqrt{\log\rbr{\cdot}}$, we know that, 
    \begin{align}
    \label{eq:label_complexity_conditioned_on_Xu_rademacherupper_from_shatter}
    \begin{split}
        \Rfrak_n\rbr{\ell \circ \Flap{\Xb^u}} 
        = &\E_{\rbr{\Xb,\yb}} \sbr{\wh{\Rfrak}_{\rbr{\Xb,\yb}}\rbr{\ell \circ \Flap{\Xb^u}}} \\
        \leq &\E_{\rbr{\Xb,\yb}} \sbr{\sqrt{\frac{2 \log\rbr{\wh \Sfrak_{\rbr{\Xb,\yb}}\rbr{\ell \circ \Flap{\Xb^u}}}}{n}}} \\
        \leq &\sqrt{\frac{2 \log\rbr{\E_{\rbr{\Xb,\yb}}\sbr{\wh \Sfrak_{\rbr{\Xb,\yb}}\rbr{\ell \circ \Flap{\Xb^u}}}} }{n}}.
    \end{split}
    \end{align}    
    Since $P\rbr{M(f)} \leq \mu\rbr{\Flap{\Xb^u}} \leq \frac{1}{2}$ for all $f \in \Flap{\Xb^u}$, by union bound, we have
    \begin{align}\label{eq:label_complexity_conditioned_on_Xu_shatter}
    \begin{split}
        &\E_{\rbr{\Xb,\yb}}\sbr{\wh \Sfrak_{\rbr{\Xb,\yb}}\rbr{\ell \circ \Flap{\Xb^u}}} \\
        \leq &\sum_{\iota=0}^n \binom{n}{\iota} \mu\rbr{\Flap{\Xb^u}}^\iota \rbr{1-\mu\rbr{\Flap{\Xb^u}}}^{n-\iota} \cdot {\binom{n-\iota+1}{\min\cbr{K, n-\iota}-1} 2^{K+\iota}} \\ 
        \leq &(2n)^K \sum_{\iota=0}^n \binom{n}{\iota} \rbr{2 \mu\rbr{\Flap{\Xb^u}}}^\iota \rbr{1-\mu\rbr{\Flap{\Xb^u}}}^{n-\iota} \\ 
        = &(2n)^K \rbr{1-\mu\rbr{\Flap{\Xb^u}} + 2\mu\rbr{\Flap{\Xb^u}}}^n \\
        \leq &(2n)^K e^{n \mu\rbr{\Flap{\Xb^u}}},
    \end{split}
    \end{align}    
    where $\iota$ counts the number of minority samples in the $n$ samples.
    Plugging in this last line into \Cref{eq:label_complexity_conditioned_on_Xu_rademacherupper_from_shatter} yields \Cref{eq:label_complexity_conditioned_on_Xu_Rad_upper}.
\end{proof}

\begin{proof}[Proof of \Cref{thm:label_complexity_conditioned_on_Xu}]\label{pf:label_complexity_conditioned_on_Xu}
    Since $\ell \circ \Flap{\Xb^u}$ is $1$-bounded with the zero-one loss $\ell$, \Cref{lemma:generalization_Rademacher} implies that with probability at least $1-\delta/2$ over $\rbr{\Xb,\yb}$,
    \begin{align*}
         \losspop\rbr{\hflapemp} - \losspop\rbr{f_*} \le 4 \Rfrak_n\rbr{\ell \circ \Flap{\Xb^u}} + \sqrt{\frac{2 \log\rbr{4/\delta}}{n}}.
    \end{align*}
    Then, incorporating the upper bound of $\Rfrak_n\rbr{\ell \circ \Flap{\Xb^u}}$ from \Cref{lemma:label_loss_rademacher_complexity_upper_bound} completes the proof.
\end{proof}

\subsection{Cluster-wise Labeling with Cluster-aware Predictions}\label{apx:label_complexity_active_acquisition}

Recall the notions of majority labeling $\wt y_f(\cdot)$ (\Cref{eq:majority_label}) and minority subset $M\rbr{f}$ (\Cref{def:clustering_error}). We first specify the cluster-aware predictions that are useful for labeling with the following non-degeneracy assumption.
\begin{assumption}[$(m_0,c_0)$-non-degenerate predictions]\label{ass:non_degenerate_prediction}
    For any prediction function $f \in \Fcal$, we say $f$ is $(m_0,c_0)$-non-degenerate if the following properties are satisfied:
    \begin{enumerate}[label=(\roman*)]
        \item $y_f: \Xcal \to [K]$ is {\it surjective} (\ie, $\csepp{y_f\rbr{\xb}}{\xb \in \Xcal} = [K]$)
        \item $\abbr{\Xcal^f_k} \ge m_0$ for all $k \in [K]$, where we define $\Xcal^f_k \dfeq \csepp{\xb \in \Xcal}{\wt y_f(\xb)=k}$ 
        \item There exists some $c_0 \ge 2$ such that $c_0 \cdot {P\rbr{M(f) \cap \Xcal^f_k}} \le P\rbr{\Xcal^f_k}$ for all $k \in [K]$.
    \end{enumerate}
\end{assumption}

It is worth pointing out that although the majority labeling $\wt y_f(\cdot)$ depends on the ground truth and therefore remains unknown without label acquisition, under \Cref{ass:non_degenerate_prediction}, the associated partition $\cbr{\Xcal^f_k}_{k \in [K]}$ over $\Xcal$ depends only on $f$ and is known without labeling. This is because $y_f(\cdot)$ and $\wt y_f(\cdot)$ induce the same partition over $\Xcal$ when both $y_f(\cdot)$ and $\wt y_f(\cdot)$ are non-degenerate (\ie, being surjective functions onto $[K]$, a necessary condition of \Cref{ass:non_degenerate_prediction} (i) and (ii)).
We also notice that \Cref{ass:non_degenerate_prediction} (iii) can be generally satisfied with a reasonably large $c$ when learning via \Cref{eq:constrained_empirical_laplacian_learning} with a sufficiently small $\mu\rbr{\Flap{\Xb^u}}$ (\cf \Cref{thm:laplacian_learning_cluster_guarantee} or \Cref{thm:laplacian_learning_cluster_guarantee_empirical}). For example, when $c_0 \cdot \mu\rbr{\Flap{\Xb^u}} \le \min_{k \in K} P\rbr{\Xcal^f_k}$, \Cref{ass:non_degenerate_prediction} (iii) is automatically satisfied.

With the cluster-aware predictions in \Cref{ass:non_degenerate_prediction}, we show that $O\rbr{\log(K)}$ $\iid$ labeled samples per predicted cluster are sufficient to ensure at least one labeled sample per class. That is, the cluster-wise labeling with $(m_0,c_0)$-non-degenerate predictions guarantees a label complexity of $O\rbr{K \log(K)}$ even with class imbalance.
\begin{proposition}[Cluster-wise labeling]\label{prop:iid_labeling_by_clustering}
    Given any $(m_0,c_0)$-non-degenerate prediction function $f \in \Fcal$ (\Cref{ass:non_degenerate_prediction}) with sufficiently large $m_0 \gg \log_{c_0}\rbr{2K}$ and $c_0 \ge 2$, for each $k \in [K]$, acquire labels of $m$ ($m \le m_0$) unlabeled samples $\cbr{\xb^u_{ki}}_{i \in [m]} \sim \unif\rbr{\Xcal^f_k}^m$ drawn $\iid$ from $\Xcal^f_k$. Then, for any $\delta \in \rbr{\frac{2K}{c_0^{m_0}}, 1}$, when $m \ge \log_{c_0}\rbr{\frac{2K}{\delta}}$, $\cbr{y_*\rbr
    {\xb^u_{ki}}}_{k \in [K], i \in [m]} = [K]$ have at least one ground truth label per class with probability at least $1-\delta$.
\end{proposition}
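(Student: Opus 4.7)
\textbf{Proof proposal for \Cref{prop:iid_labeling_by_clustering}.}

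The plan is to reduce the claim to a union bound over the $K$ predicted clusters, combined with independence of the draws within each cluster. The key structural observation is that, by the very definition of the majority labeling in \Cref{eq:majority_label}, the majority of samples in $\Xcal^f_k$ carries the ground-truth label $k$, so to collect at least one sample per class it suffices to extract a non-minority sample from each $\Xcal^f_k$ for $k \in [K]$.

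First I would translate \Cref{ass:non_degenerate_prediction}(iii) into a per-sample bound: a sample $\xb^u \sim \unif(\Xcal^f_k)$ lies in the minority set $M(f) \cap \Xcal^f_k$ with probability at most $1/c_0$, so with probability at least $1 - 1/c_0$ we have $y_*(\xb^u) = \wt y_f(\xb^u) = k$. Here one has to carefully reconcile the $P$-weighted bound in condition (iii) with the uniform sampling stated in the proposition; this is the one genuine technical point, and the cleanest way to handle it is to observe that condition (iii) is maintained (up to constants absorbed into $c_0$) under the uniform marginal over $\Xcal^f_k$ by the non-degeneracy of the underlying weights $w_\xb$, and that the size lower bound $\abbr{\Xcal^f_k} \ge m_0$ ensures the $m \le m_0$ $\iid$ draws without collapse.

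Next I would use independence of the $m$ draws within cluster $k$ to get
\begin{align*}
    \PP\sbr{y_*\rbr{\xb^u_{ki}} \ne k ~\forall~ i \in [m]} \le c_0^{-m}.
\end{align*}
A union bound over the $K$ clusters then yields $\PP[\exists~ k ~\st~ y_*\rbr{\xb^u_{ki}} \ne k ~\forall~ i] \le K \cdot c_0^{-m}$. Since $\wt y_f$ assigns class $k$ to every $\xb \in \Xcal^f_k$ by construction, on the complementary (good) event we obtain at least one labeled sample with $y_* = k$ for every $k \in [K]$, hence $\cbr{y_*\rbr{\xb^u_{ki}}}_{k,i} \supseteq [K]$.

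Finally I would solve $K \cdot c_0^{-m} \le \delta$ for $m$ to get the stated threshold $m \ge \log_{c_0}(2K/\delta)$ (the factor of $2$ absorbs the slack from the uniform vs.\ $P$-weighted translation in the first step). The condition $\delta > 2K/c_0^{m_0}$ is exactly what guarantees that the required $m$ is at most $m_0$, so that the uniform $\iid$ draws from $\Xcal^f_k$ are feasible given $\abbr{\Xcal^f_k} \ge m_0$. The main obstacle is the uniform-vs.-$P$ reconciliation; after that is dispatched, the rest is a routine iid tail bound and union bound.
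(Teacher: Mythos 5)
Your proposal is correct and follows essentially the same route as the paper: reduce to drawing at least one non-minority sample from each predicted cluster, bound the per-draw minority probability by $1/c_0$ via \Cref{ass:non_degenerate_prediction}(iii), and combine the $K$ per-cluster failure probabilities $c_0^{-m}$ (you via a union bound, the paper via the product of independent per-cluster probabilities and the estimate $(1-q)^K \ge e^{-2qK}$, which is where its factor of $2$ actually comes from, not from the uniform-vs.-$P$ translation). The uniform-versus-$P$-weighted identification you flag is made silently in the paper's own proof, which equates $\PP_{\xb \sim \unif(\Xcal^f_k)}\sbr{\xb \in M(f)}$ with $P\rbr{M(f)\cap\Xcal^f_k}/P\rbr{\Xcal^f_k}$, so you are if anything more careful on that point.
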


\begin{proof}[Proof of \Cref{prop:iid_labeling_by_clustering}]
    We first observe that 
    \begin{align*}
        \PP\sbr{\cbr{y_*\rbr{\xb^u_{ki}}}_{k \in [K], i \in [m]} = [K]} 
        = &\PP\sbr{\forall~ k' \in [K],~ \exists~ k \in [K], i \in [m] ~\mathrm{s.t.}~ y_*\rbr{\xb^u_{ki}}=k'} \\
        \ge &\PP\sbr{\forall~ k \in [K],~ \exists~ i \in [m] ~\mathrm{s.t.}~ y_*\rbr{\xb^u_{ki}} = k} \\
        = &\prod_{k=1}^K \PP_{\cbr{\xb^u_{ki}}_{i \in [m]} \sim \unif\rbr{\Xcal^f_k}^m}\sbr{\exists~ i \in [m] ~\mathrm{s.t.}~ y_*\rbr{\xb^u_{ki}} = k},
    \end{align*}
    where
    \begin{align*}
        \PP\sbr{\exists~ i \in [m] ~\mathrm{s.t.}~ y_*\rbr{\xb^u_{ki}} = k} 
        = &1 - \PP\sbr{\forall~ i \in [m],~ y_*\rbr{\xb^u_{ki}} \ne k} \\
        = &1 - \rbr{\PP_{\xb \sim \unif\rbr{\Xcal^f_k}}\sbr{y_*\rbr{\xb} \ne \wt y_f\rbr{\xb}}}^m \\
        = &1 - \rbr{\PP_{\xb \sim \unif\rbr{\Xcal^f_k}}\sbr{\xb \in M(f)}}^m \\
        = &1 - \rbr{\frac{P\rbr{M(f) \cap \Xcal^f_k}}{P\rbr{\Xcal^f_k}}}^m.
    \end{align*}
    Since \Cref{ass:non_degenerate_prediction} implies ${P\rbr{M(f) \cap \Xcal^f_k}}\Big/{P\rbr{\Xcal^f_k}} \le \frac{1}{c_0}$ for all $k \in [K]$, we have
    \begin{align*}
        \PP\sbr{\cbr{y_*\rbr{\xb^u_{ki}}}_{k \in [K], i \in [m]} = [K]}
        \ge \rbr{1 - \rbr{\frac{1}{c_0}}^m}^K 
        \ge \exp\rbr{-\frac{2K}{c_0^m}}
        \ge 1 - \frac{2K}{c_0^m},
    \end{align*}
    where the second and the third inequalities follow from the facts that $1-q \ge \exp\rbr{-2q}$ for all $0 \le q \le \frac{1}{2}$ and $\exp\rbr{-q} \ge 1 - q$ for all $0 \le q \le 1$, respectively.
    It is straightforward to observe that when $m \ge \log_{c_0}\rbr{\frac{2K}{\delta}}$, we have $\frac{2K}{c_0^m} \le \delta$. With $\min_{k \in [K]} \abbr{\Xcal^f_k} = m_0$, taking up to $m \le m_0$ unlabeled samples for each $k \in [K]$, we can achieve $\delta$ as small as $\delta_{\min} = \frac{2K}{c_0^{m_0}}$.
\end{proof}

\section{Proofs and Discussions for \Cref{sec:clustering_population_laplacian}}\label{apx:clustering_population_laplacian}

In \Cref{apx:laplacian_learning_cluster_guarantee}, we proceed with the proof of low clustering error guarantee (\Cref{thm:laplacian_learning_cluster_guarantee}) for minimizing the population RKD loss (\Cref{eq:function_class_population_laplacian}). 
In \Cref{apx:pitfall_spectral_clustering}, we extend the discussion in \Cref{remark:limitation_spectral_clustering} on the limitation of spectral clustering alone in the end-to-end setting via an illustrative toy counter-example.
In \Cref{apx:sparsest_k_partition}, we clarify the connection between our results and corresponding spectral graph theoretic notions of sparsest $k$-partitions. 

\subsection{Proof of \Cref{thm:laplacian_learning_cluster_guarantee}}\label{apx:laplacian_learning_cluster_guarantee}

Given any labeling function $y:\Xcal \to [K]$, let $\onehot{y}: \Xcal \to \cbr{0,1}^K$ be the one-hot encoded labeling function such that for $y\rbr{\Xcal} \in [K]^{\abbr{\Xcal}}$, $\onehot{y}\rbr{\Xcal} \in \cbr{0,1}^{\abbr{\Xcal} \times K}$. Further, given $f \in \Fcal$, we denote $\Ub_\Xcal \dfeq \Db\rbr{\Xcal}^{1/2} \onehot{\wt y_f}\rbr{\Xcal}$, $\Fb_\Xcal \dfeq \Db\rbr{\Xcal}^{1/2} f\rbr{\Xcal}$, and $\Yb_\Xcal \dfeq \Db\rbr{\Xcal}^{1/2} \onehot{y_*}(\Xcal)$ such that $\Ub_\Xcal, \Fb_\Xcal, \Yb_\Xcal \in \R^{\abbr{\Xcal} \times K}$.

Since $\Lb(\Xcal) = \Ib - \ol\Wb(\Xcal)$, the classical argument with Gershgorin circle theorem~\citep[Theorem 7.2.1]{golub2013} implies that $0 \aleq \Lb(\Xcal) \aleq 2 \Ib$, whereas $\repker$ being positive semi-definite further implies that $\ol\Wb(\Xcal) \ageq 0$ and therefore $0 \aleq \Lb(\Xcal) \aleq \Ib$ (see the proof of \Cref{claim:population_laplacian_find_evec}). Consider the spectral decomposition of $\Lb\rbr{\Xcal} = \Ib - \ol\Wb(\Xcal)$,
\begin{align}\label{eq:spectral_decomposition_laplacian}
    \Lb\rbr{\Xcal} = \sum_{i = 1}^{\abbr{\Xcal}} \lambda_i \cdot \vb_i \vb_i^\top
    \quad \t{s.t.} \quad
    \Db\rbr{\Xcal}^{-1/2} \Wb\rbr{\Xcal} \Db\rbr{\Xcal}^{-1/2} = \sum_{i = 1}^{\abbr{\Xcal}} \rbr{1-\lambda_i} \cdot \vb_i \vb_i^\top,
\end{align}
where $\cbr{\vb_i \in \R^{\abbr{\Xcal}}}_{i \in [\abbr{\Xcal}]}$ are the orthonormal eigenvectors associated with $0 = \lambda_1 \le \dots \le \lambda_{\abbr{\Xcal}} \le 1$, respectively, that form a basis of $\R^{\abbr{\Xcal}}$.

\begin{lemma}\label{lemma:minority_prob}
    Given any $f \in \Fcal$ that satisfies \Cref{ass:full_rank_min_sval_prediction}, we have 
    \begin{align*}
        P\rbr{M(f)} 
        = \E_{\xb \sim P} \sbr{\iffun{\wt y_f(\xb) \ne y_*(\xb)}} 
        \le 2 \max\rbr{\frac{\beta^2}{\gamma^2},1} \cdot \min_{\Zb \in \R^{K \times K}} \nbr{\Yb_\Xcal - \Fb_\Xcal \Zb}_F^2.
    \end{align*}
\end{lemma}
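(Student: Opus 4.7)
The plan is to introduce the intermediate matrix $\Ub_\Xcal$ (the scaled one-hot majority labeling) as a bridge between $\Yb_\Xcal$ and $\Fb_\Xcal \Zb$, and then use the skeleton and margin to control the resulting error terms. First, I would establish the identity $P(M(f)) = \tfrac{1}{2}\|\Yb_\Xcal - \Ub_\Xcal\|_F^2$ by a row-by-row computation: the $\xb$-th row of $\Yb_\Xcal - \Ub_\Xcal$ equals $w_\xb^{1/2}(\onehot{y_*(\xb)} - \onehot{\wt y_f(\xb)})$, so its squared norm is $0$ whenever $\wt y_f(\xb) = y_*(\xb)$ and equals $2 w_\xb$ whenever $\xb \in M(f)$ (the two one-hot vectors differing in exactly two coordinates by $\pm 1$).

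For an arbitrary $\Zb \in \R^{K \times K}$, the triangle inequality followed by the elementary bound $(a+b)^2 \le 2a^2 + 2b^2$ then yields
\[
2 P(M(f)) \;=\; \|\Yb_\Xcal - \Ub_\Xcal\|_F^2 \;\le\; 2\,\|\Yb_\Xcal - \Fb_\Xcal \Zb\|_F^2 \;+\; 2\,\|\Fb_\Xcal \Zb - \Ub_\Xcal\|_F^2.
\]
The whole proof then reduces to showing that one can choose $\Zb$ (or bound uniformly over $\Zb$) so that $\|\Fb_\Xcal \Zb - \Ub_\Xcal\|_F^2 \le \bigl(2\max(\beta^2/\gamma^2,1) - 1\bigr)\,\|\Yb_\Xcal - \Fb_\Xcal \Zb\|_F^2$; combined with the previous inequality this produces the advertised constant. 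Equivalently, one needs a uniform lower bound $\|\Yb_\Xcal - \Fb_\Xcal \Zb\|_F^2 \ge P(M(f))/(2\max(\beta^2/\gamma^2,1))$ for every $\Zb$.

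To leverage \Cref{ass:full_rank_min_sval_prediction}, I would parameterize through the skeleton by writing $g(\xb) = \Zb^\top f(\xb)$ and using the reverse triangle inequality
\[
\|\onehot{y_*(\xb)} - g(\xb)\|_2 \;\ge\; \|g(\sb_{y_*(\xb)}) - g(\xb)\|_2 \;-\; \|\onehot{y_*(\xb)} - g(\sb_{y_*(\xb)})\|_2,
\]
for each $\xb \in M(f)$. After a permutation of labels so that $\wt y_f(\sb_k) = k$ (justified because the clusters of $y_f$ and $\wt y_f$ coincide), the $\gamma$-margin gives $f(\sb_k)_k - f(\xb)_k \ge \gamma$ for $\xb \in M(f)$ with $y_*(\xb) = k$, and the $\beta$-skeleton bound $\sigma_1(f(\Sb)) \le \beta$ controls the operator norm of $\Zb$ (or equivalently the behavior of $f(\Sb)^{-1}$). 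Summing the squared inequality over minority $\xb$ and adding the skeleton contributions $\sum_k w_{\sb_k} \|\eb_k - g(\sb_k)\|_2^2$ to $\|\Yb_\Xcal - \Fb_\Xcal \Zb\|_F^2$ would yield the desired lower bound on the Frobenius residual, with the ratio $\beta^2/\gamma^2$ arising as an effective condition number of $f(\Sb)$ and the factor of $2$ from the squared triangle inequality; the $\max(\cdot, 1)$ comes from separating the regimes $\beta \le \gamma$ and $\beta > \gamma$.

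The main obstacle is the transfer step: the $\gamma$-margin only bounds the $k$-th coordinate of $f(\sb_k) - f(\xb)$ from below, yet $\|g(\sb_k) - g(\xb)\|_2 = \|\Zb^\top(f(\sb_k) - f(\xb))\|_2$ could in principle be small if $\Zb$ happened to annihilate that coordinate. Resolving this requires using the full-rank skeleton hypothesis to tie $\Zb$ to $f(\Sb)^{-1}$ (so that $\Zb$ cannot have a degenerate null direction along the coordinates singled out by the margin), and then performing a careful case analysis depending on whether $\beta/\gamma \ge 1$ to arrive at the stated constant. I expect this interplay between the margin (which isolates a single coordinate) and the skeleton bound (which controls the linear map $\Zb$ globally) to be the most technical part of the argument.
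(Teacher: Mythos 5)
Your opening identity $P\rbr{M(f)} = \tfrac{1}{2}\nbr{\Yb_\Xcal - \Ub_\Xcal}_F^2$ is correct, and you correctly reduce the lemma to a lower bound on $\nbr{\Yb_\Xcal - \Fb_\Xcal \Zb}_F^2$ that must hold \emph{uniformly over} $\Zb$ (equivalently, at the minimizer). But that reduction is where the proposal stalls: the mechanism you sketch for the uniform lower bound does not go through. The reverse triangle inequality anchored at the skeleton points needs $\nbr{g(\sb_{y_*(\xb)}) - g(\xb)}_2$ with $g = \Zb^\top f$ to be bounded below, and as you yourself observe, an arbitrary $\Zb$ can annihilate the single coordinate that the $\gamma$-margin controls. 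Your proposed fix --- ``tie $\Zb$ to $f(\Sb)^{-1}$'' --- is not available, because $\Zb$ ranges over all of $\R^{K\times K}$; and your fallback of ``adding the skeleton contributions $\sum_k w_{\sb_k}\nbr{\eb_k - g(\sb_k)}_2^2$'' does not compensate, since the probability weights $w_{\sb_k}$ of the skeleton points can be arbitrarily small, so a bad $\Zb$ can incur a huge pointwise residual at $\Sb$ while contributing almost nothing to the Frobenius norm. The assumption $\sigma_1(f(\Sb)) \le \beta$ controls $f(\Sb)$, not $\Zb$, so there is no a priori bound on $\nbr{\Zb}$ either. This is a genuine gap, not a routine technicality.

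The paper closes exactly this gap by reversing the order of the two steps. It first shows, for the \emph{specific} interpolative choice $\Zb = \Fb_S^{-1}\Yb_S$ (where $\Fb_S, \Yb_S$ are the skeleton rows), that
\begin{align*}
    \nbr{\Yb_\Xcal - \Fb_\Xcal \Fb_S^{-1}\Yb_S}_F^2 \le 2\, \min_{\Zb \in \R^{K\times K}}\nbr{\Yb_\Xcal - \Fb_\Xcal \Zb}_F^2,
\end{align*}
via a purely geometric argument: decomposing along the orthogonal projection $\Pb_F = \Fb_\Xcal\Fb_\Xcal^\pinv$ and the oblique projection $\Pb_S = \Fb_\Xcal\rbr{\Pib_S^\top\Fb_\Xcal}^{-1}\Pib_S^\top$, and using $\Pb_S\Pb_F = \Pb_F$. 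This is where the factor $2$ in the lemma actually comes from (not from $(a+b)^2 \le 2a^2+2b^2$). Only then is the margin/skeleton assumption invoked, and only at this one choice of $\Zb$, where the interpolation property gives $g(\sb_k) = \onehot{y_*}(\sb_k)$ exactly and the residual at each $\xb \in M(f)$ factors as $f(\Sb)^{-\top}\rbr{f(\Sb)^\top\onehot{y_*}(\Sb)\onehot{y_*}(\xb) - f(\xb)}$, so that $\sigma_K\rbr{f(\Sb)^{-\top}} \ge 1/\beta$ and the $\gamma$-margin yield a per-point lower bound of $\min\rbr{\gamma^2/\beta^2, 1}$, and restricting the sum to $M(f)$ gives the claim. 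If you want to salvage your route, you would need to first justify why it suffices to consider a $\Zb$ of this interpolative form --- which is precisely the paper's projection lemma.
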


\begin{proof}[Proof of \Cref{lemma:minority_prob}]
    Without ambiguity, we overload the notation of each sample $\xb \in \Xcal$ as the corresponding index in $[\abbr{\Xcal}]$ such that, given any sample subset $S = \cbr{\sb_i}_{i \in [\abbr{S}]} \subset \Xcal$, following the MATLAB notation for matrix indexing, $\Pib_S \dfeq \Ib_{\abbr{\Xcal}}\rbr{:, S} \in \R^{\abbr{\Xcal} \times \abbr{S}}$ where $\Ib_{\abbr{\Xcal}}$ denotes the $\abbr{\Xcal} \times \abbr{\Xcal}$ identity matrix.
    With respect to $S$, we also denote $\Fb_S \dfeq \Fb_\Xcal\rbr{S,:} = \Pib_S^\top \Fb_\Xcal$ and $\Yb_S \dfeq \Yb_\Xcal\rbr{S,:} = \Pib_S^\top \Yb_\Xcal$.

    We first show that for any $\Sb = \sbr{\sb_1;\dots;\sb_K} \in \Xcal^K$ that satisfies $\rank\rbr{\Fb_S} = K$,
    \begin{align}\label{eq:pf_minority_prob_interpolative_suboptimality}
        \nbr{\Yb_\Xcal - \Fb_\Xcal \Fb_S^{-1} \Yb_S}_F^2 \le 2 \min_{\Zb \in \R^{K \times K}} \nbr{\Yb_\Xcal - \Fb_\Xcal \Zb}_F^2.
    \end{align}
    To see this, we start by replacing $\Zb$ on the right-hand side of \Cref{eq:pf_minority_prob_interpolative_suboptimality} with the corresponding least square solution such that 
    \begin{align*}
        \min_{\Zb \in \R^{K \times K}} \nbr{\Yb_\Xcal - \Fb_\Xcal \Zb}_F^2 = \nbr{\Yb_\Xcal - \Fb_\Xcal \Fb_\Xcal^\pinv \Yb_\Xcal}_F^2
    \end{align*}
    where $\Fb_\Xcal^\pinv = \rbr{\Fb_\Xcal^\top \Fb_\Xcal}^{-1} \Fb_\Xcal^\top$. Then, we can decompose the left-hand side of \Cref{eq:pf_minority_prob_interpolative_suboptimality} via the orthogonal projection $\Fb_\Xcal \Fb_\Xcal^\pinv$:
    \begin{align*}
        \nbr{\Yb_\Xcal - \Fb_\Xcal \Fb_S^{-1} \Yb_S}_F^2 
        = &\nbr{\rbr{\Ib - \Fb_\Xcal \Fb_\Xcal^\pinv} \rbr{\Yb_\Xcal - \Fb_\Xcal \Fb_S^{-1} \Yb_S}}_F^2 + \nbr{{\Fb_\Xcal \Fb_\Xcal^\pinv} \rbr{\Yb_\Xcal - \Fb_\Xcal \Fb_S^{-1} \Yb_S}}_F^2 \\
        = &\nbr{\Yb_\Xcal - \Fb_\Xcal \Fb_\Xcal^\pinv \Yb_\Xcal}_F^2 + \nbr{\rbr{\Pb_F - \Pb_S} \Yb_\Xcal}_F^2,
    \end{align*}
    where $\Pb_F \dfeq \Fb_\Xcal \Fb_\Xcal^\pinv = \Fb_\Xcal \rbr{\Fb_\Xcal^\top \Fb_\Xcal}^{-1} \Fb_\Xcal^\top$ and $\Pb_S \dfeq \Fb_\Xcal \rbr{\Pib_S^\top \Fb_\Xcal}^{-1} \Pib_S^\top$.
    By observing that
    \begin{align*}
        \Pb_S \Pb_F = \Fb_\Xcal \rbr{\Pib_S^\top \Fb_\Xcal}^{-1} \Pib_S^\top \Fb_\Xcal \rbr{\Fb_\Xcal^\top \Fb_\Xcal}^{-1} \Fb_\Xcal^\top = \Pb_F = \Pb_F \Pb_F,
    \end{align*}
    we can upper bound $\nbr{\rbr{\Pb_F - \Pb_S} \Yb_\Xcal}_F^2$ such that 
    \begin{align*}
        \nbr{\rbr{\Pb_F - \Pb_S} \Yb_\Xcal}_F^2
        = &\nbr{\rbr{\Pb_F - \Pb_S} \rbr{\Ib - \Pb_F} \Yb_\Xcal}_F^2 \\
        \le &\nbr{\rbr{\Ib - \Pb_F} \Yb_\Xcal}_F^2 \\
        = &\nbr{\Yb_\Xcal - \Fb_\Xcal \Fb_\Xcal^\pinv \Yb_\Xcal}_F^2,
    \end{align*}
    which leads to \Cref{eq:pf_minority_prob_interpolative_suboptimality}.

    Now, we consider the sample subset $\Sb \in \Xcal^K$ described in \Cref{ass:full_rank_min_sval_prediction}. For given $f \in \Fcal$, partitioning $\Xcal$ into the majority and minority subsets, $\Xcal \backslash M(f)$ and $M(f)$, respectively yields
    \begin{align*}
        &\nbr{\Yb_\Xcal - \Fb_\Xcal \Fb_S^{-1} \Yb_S}_F^2 \\
        = &\rbr{1 - P(M(f))} \cdot \E_{\xb \sim P(\xb)} \ssepp{\nbr{\onehot{y_*}(\xb) - \onehot{y_*}(\Sb)^\top f(\Sb)^{-\top} f(\xb)}_2^2}{\xb \notin M(f)} \\
        &\quad~~~ + P(M(f)) \cdot \E_{\xb \sim P(\xb)} \ssepp{\nbr{\onehot{y_*}(\xb) - \onehot{y_*}(\Sb)^\top f(\Sb)^{-\top} f(\xb)}_2^2}{\xb \in M(f)} \\
        \ge &P(M(f)) \cdot \E_{\xb \sim P(\xb)} \ssepp{\nbr{\onehot{y_*}(\xb) - \onehot{y_*}(\Sb)^\top f(\Sb)^{-\top} f(\xb)}_2^2}{\xb \in M(f)}.
    \end{align*}
    Since $\sb_k \notin M(f)$, we have $\onehot{y_*}(\Sb) = \onehot{\wt y_f}(\Sb)$ whose rows lie in the canonical bases of $\cbr{\eb_k \in \R^K}_{k \in [K]}$; whereas for $\xb \in M(f)$, we have $y_*(\xb) \ne \wt y_f(\xb)$. Therefore, in the case when $y_*(\xb) = y_*(\sb_k)$ for some $k \in [K]$,
    \begin{align*}
        &\nbr{\onehot{y_*}(\xb) - \onehot{y_*}(\Sb)^\top f(\Sb)^{-\top} f(\xb)}_2^2 \\
        = &\nbr{f(\Sb)^{-\top} \rbr{f(\Sb)^{\top} \onehot{y_*}(\Sb) \onehot{y_*}(\xb) - f(\xb)}}_2^2 \\
        \ge &\sigma_K\rbr{f(\Sb)^{-\top}}^2 \cdot \nbr{f(\xb) - \sum_{k \in [K]: y_*(\xb)=y_*(\sb_k)} f(\sb_k)}_2^2,
    \end{align*}
    where $\sigma_K\rbr{f(\Sb)^{-\top}} = 1/\beta$, and with $\wt y_f(\sb_k) = y_*(\sb_k) = y_*(\xb) \ne \wt y_f(\xb)$ implying $y_f(\sb_k) \ne y_f(\xb)$,
    \begin{align*}
        \nbr{f(\xb) - \sum_{k \in [K]: y_*(\xb)=y_*(\sb_k)} f(\sb_k)}_2^2 \ge \gamma^2,
    \end{align*}
    we have 
    \begin{align*}
        \nbr{\onehot{y_*}(\xb) - \onehot{y_*}(\Sb)^\top f(\Sb)^{-\top} f(\xb)}_2^2 
        \ge \frac{\gamma^2}{\beta^2}.
    \end{align*}
    Meanwhile, in the case when $y_*(\xb) \ne y_*(\sb_k)$ for all $k \in [K]$, we have $\onehot{y_*}(\Sb) \onehot{y_*}(\xb) = \b{0}$, and therefore
    \begin{align*}
        \nbr{\onehot{y_*}(\xb) - \onehot{y_*}(\Sb)^\top f(\Sb)^{-\top} f(\xb)}_2^2 
        \ge \nbr{\onehot{y_*}(\xb)^\top \rbr{\onehot{y_*}(\xb) - \onehot{y_*}(\Sb)^\top f(\Sb)^{-\top} f(\xb)}}_2^2 
        = 1.
    \end{align*}
    Overall, we've shown that 
    \begin{align*}
        \E_{\xb \sim P(\xb)} \ssepp{\nbr{\onehot{y_*}(\xb) - \onehot{y_*}(\Sb)^\top f(\Sb)^{-\top} f(\xb)}_2^2}{\xb \in M(f)} 
        \ge \min\rbr{\frac{\gamma^2}{\beta^2},1},
    \end{align*}
    which leads to 
    \begin{align}\label{eq:pf_minority_prob_interpolative_lower}
        P(M(f)) \le \max\rbr{\frac{\beta^2}{\gamma^2},1} \nbr{\Yb_\Xcal - \Fb_\Xcal \Fb_S^{-1} \Yb_S}_F^2.
    \end{align}
    Finally, combining \Cref{eq:pf_minority_prob_interpolative_suboptimality} and \Cref{eq:pf_minority_prob_interpolative_lower} completes the proof.
\end{proof}

\begin{lemma}\label{lemma:majority_labeling_small_laplacian_norm}
    With $\yb_k \in [0,1]^{\abbr{\Xcal}}$ denoting the $k$-th column of $\Yb_\Xcal \dfeq \Db\rbr{\Xcal}^{1/2} \onehot{y_*}(\Xcal) \in [0,1]^{\abbr{\Xcal} \times K}$, 
    \begin{align*}
        \sum_{k \in [K]} \yb_k^\top \Lb\rbr{\Xcal} \yb_k
        = \fgtice.
    \end{align*}
\end{lemma}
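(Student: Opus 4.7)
The plan is to reduce the claim to a direct algebraic identity by unpacking $\Lb(\Xcal) = \Ib - \Db(\Xcal)^{-1/2}\Wb(\Xcal)\Db(\Xcal)^{-1/2}$ and exploiting the fact that $\yb_k(\xb) = \sqrt{w_\xb}\cdot \iffun{\xb \in \Xcal_k}$. First I would write
\begin{align*}
    \yb_k^\top \Lb(\Xcal)\yb_k
    = \yb_k^\top \yb_k - \yb_k^\top \Db(\Xcal)^{-1/2}\Wb(\Xcal)\Db(\Xcal)^{-1/2}\yb_k,
\end{align*}
observe that the first term collapses to $\sum_{\xb \in \Xcal_k} w_\xb$ by the indicator structure, and note that the $\Db^{-1/2}$ factors in the second term cancel the $\sqrt{w_\xb}$ and $\sqrt{w_{\xb'}}$ appearing in $\yb_k$, leaving $\sum_{\xb,\xb' \in \Xcal_k} w_{\xb\xb'}$.

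Second, I would invoke the degree identity $w_\xb = \sum_{\xb' \in \Xcal} w_{\xb\xb'}$ to split the first term into a within-class contribution $\sum_{\xb \in \Xcal_k}\sum_{\xb' \in \Xcal_k} w_{\xb\xb'}$ and a cross-class contribution $\sum_{\xb \in \Xcal_k}\sum_{\xb' \notin \Xcal_k} w_{\xb\xb'}$. The within-class piece exactly cancels the second term from the previous step, yielding the clean cut identity
\begin{align*}
    \yb_k^\top \Lb(\Xcal)\yb_k = \sum_{\xb \in \Xcal_k} \sum_{\xb' \notin \Xcal_k} w_{\xb\xb'}.
\end{align*}
An equivalent route is the standard Dirichlet-form identity $\yb^\top \Lb(\Xcal)\yb = \tfrac{1}{2}\sum_{\xb,\xb'} w_{\xb\xb'}\bigl(\yb(\xb)/\sqrt{w_\xb} - \yb(\xb')/\sqrt{w_{\xb'}}\bigr)^2$ applied to $\yb_k$, whose squared difference reduces to $\iffun{\iffun{\xb \in \Xcal_k} \ne \iffun{\xb' \in \Xcal_k}}$.

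Finally, summing the cut identity over $k \in [K]$ produces precisely the numerator of $\fgtice$ as defined in \Cref{ass:well_separated_classes}, and combining with the normalization $\sum_{\xb,\xb' \in \Xcal} w_{\xb\xb'} = 1$ of the population graph delivers the claim. There is no substantive obstacle here: the argument is a two-line calculation. The only care needed is bookkeeping of ordered versus unordered pairs --- each cross-class pair $(\xb,\xb')$ is counted in the summation over $k$ twice (once when $k = y_*(\xb)$ and once when $k = y_*(\xb')$), and this double-counting must be reconciled with the factor of $2$ in the denominator of $\fgtice$ to land on the stated equality.
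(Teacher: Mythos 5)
Your proof follows essentially the same route as the paper's: expand $\Lb(\Xcal) = \Ib - \Db(\Xcal)^{-1/2}\Wb(\Xcal)\Db(\Xcal)^{-1/2}$, use the degree identity $w_\xb = \sum_{\xb'} w_{\xb\xb'}$ to obtain the per-class cut value $\yb_k^\top \Lb(\Xcal)\yb_k = \sum_{\xb\in\Xcal_k}\sum_{\xb'\notin\Xcal_k} w_{\xb\xb'}$, and sum over $k$. The one step you flag but defer --- reconciling the double counting of cross-class pairs with the factor of $2$ in the denominator of $\fgtice$ --- is in fact where the accounting does not close: summing your cut identity over $k$ yields $\sum_{k}\sum_{\xb\in\Xcal_k}\sum_{\xb'\notin\Xcal_k} w_{\xb\xb'}$, which is precisely the numerator of $\fgtice$ as written in \Cref{ass:well_separated_classes}, so with the normalization $\sum_{\xb,\xb'} w_{\xb\xb'}=1$ one lands on $2\fgtice$ rather than $\fgtice$. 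The discrepancy is the paper's, not yours: its proof silently equates $\sum_{\xb,\xb'} w_{\xb\xb'}\,\iffun{y_*(\xb)\ne y_*(\xb')}$ with $\tfrac{1}{2}\sum_{k}\sum_{\xb\in\Xcal_k}\sum_{\xb'\notin\Xcal_k} w_{\xb\xb'}$, conflating ordered and unordered pair counts (those two sums are equal with no factor of $\tfrac12$). The constant is harmless downstream, but your instinct to audit the pair bookkeeping was correct, and a complete write-up should either state the lemma as $2\fgtice$ or read the numerator of $\fgtice$ as a sum over unordered cross-class pairs.
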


\begin{proof}[Proof of \Cref{lemma:majority_labeling_small_laplacian_norm}]
    It is sufficient to observe that, by construction, for every $k \in [K]$,
    \begin{align*}
        \yb_k^\top \Lb\rbr{\Xcal} \yb_k 
        = &\yb_k^\top \yb_k - \yb_k^\top \Db(\Xcal)^{-1/2} \Wb(\Xcal) \Db(\Xcal)^{-1/2} \yb_k \\
        = &\sum_{\xb \in \Xcal} w_\xb \cdot \iffun{y_*(\xb)=k} - \sum_{\xb \in \Xcal} \sum_{\xb' \in \Xcal} w_{\xb \xb'} \cdot \iffun{y_*(\xb)=k} \cdot \iffun{y_*(\xb')=k} \\
        = &\sum_{\xb \in \Xcal} \iffun{y_*(\xb)=k} \rbr{w_{\xb} - \sum_{\xb' \in \Xcal} w_{\xb \xb'} \cdot \iffun{y_*(\xb)=y_*(\xb')}} \\
        = &\sum_{\xb \in \Xcal} \iffun{y_*(\xb)=k} \sum_{\xb' \in \Xcal} w_{\xb \xb'} \cdot \iffun{y_*(\xb) \ne y_*(\xb')}.
    \end{align*}
    Meanwhile, since $\sum_{\xb \in \Xcal} \sum_{\xb' \in \Xcal} w_{\xb\xb'} = 1$, we have $\fgtice = \frac{1}{2} \sum_{k \in [K]} \sum_{\xb \in \Xcal_k} \sum_{\xb' \notin \Xcal_k} w_{\xb\xb'}$ and 
    \begin{align*}
        \sum_{k \in [K]} \yb_k^\top \Lb\rbr{\Xcal} \yb_k 
        = &\sum_{\xb \in \Xcal} \sum_{\xb' \in \Xcal} w_{\xb\xb'} \cdot \iffun{y_*(\xb) \ne y_*(\xb')} \sum_{k \in [K]} \iffun{y_*(\xb) = k}  \\
        = &\sum_{\xb \in \Xcal} \sum_{\xb' \in \Xcal} w_{\xb\xb'} \cdot \iffun{y_*(\xb) \ne y_*(\xb')} \\
        = &\frac{1}{2} \sum_{k \in [K]} \sum_{\xb \in \Xcal_k} \sum_{\xb' \notin \Xcal_k} w_{\xb\xb'}
        = \fgtice.
    \end{align*}
\end{proof}

\begin{claim}\label{claim:population_laplacian_find_evec}
    For all $f \in \Flap{\Xcal}$ minimizing \Cref{eq:function_class_population_laplacian}, $\range\rbr{\Db(\Xcal)^{\frac{1}{2}} f\rbr{\Xcal}} = \spn\cbr{\vb_1,\dots,\vb_K}$.
\end{claim}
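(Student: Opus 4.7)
The plan is to recast the population RKD loss \Cref{eq:function_class_population_laplacian} as a best rank-$K$ symmetric positive-semidefinite (PSD) approximation problem and invoke an Eckart--Young--Mirsky-style argument. With $\Fb_\Xcal \dfeq \Db(\Xcal)^{1/2} f(\Xcal) \in \R^{\abbr{\Xcal} \times K}$, the objective reads $\reglappop(f) = \nbr{\ol{\Wb}(\Xcal) - \Fb_\Xcal \Fb_\Xcal^\top}_F^2$, and $\Fb_\Xcal \Fb_\Xcal^\top$ ranges over the set of PSD matrices of rank at most $K$. Provided $\Fcal$ is expressive enough that every such PSD rank-$K$ matrix is realizable through some $f(\Xcal)$ (implicit in treating $\Flap{\Xcal}$ as a nonempty argmin), minimizing $\reglappop$ over $\Fcal$ is equivalent to the best rank-$K$ PSD approximation of $\ol{\Wb}(\Xcal)$ in Frobenius norm.

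Next, using the spectral decomposition \Cref{eq:spectral_decomposition_laplacian}, I would write $\ol{\Wb}(\Xcal) = \sum_{i=1}^{\abbr{\Xcal}} (1-\lambda_i) \vb_i \vb_i^\top$ with all $1 - \lambda_i \ge 0$ (since $\ol{\Wb}(\Xcal) \ageq 0$ and $\lambda_i \in [0,1]$), ordered \emph{descendingly} in $i$. Setting $\Mb \dfeq \Fb_\Xcal \Fb_\Xcal^\top$ and expanding $\nbr{\ol{\Wb}(\Xcal) - \Mb}_F^2 = \nbr{\ol{\Wb}(\Xcal)}_F^2 - 2 \tr\rbr{\ol{\Wb}(\Xcal) \Mb} + \nbr{\Mb}_F^2$, the Ky Fan trace-maximization characterization of top-$K$ PSD projections identifies the (unique, under the spectral gap $\lambda_K < \lambda_{K+1}$) minimizer as $\Mb^\star \dfeq \sum_{i=1}^K (1-\lambda_i) \vb_i \vb_i^\top$. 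Hence for every $f \in \Flap{\Xcal}$, $\Fb_\Xcal \Fb_\Xcal^\top = \Mb^\star$, and using $\range\rbr{\Fb_\Xcal} = \range\rbr{\Fb_\Xcal \Fb_\Xcal^\top}$ together with $1 - \lambda_i > 0$ for all $i \le K$ yields $\range\rbr{\Db(\Xcal)^{1/2} f(\Xcal)} = \spn\cbr{\vb_1, \dots, \vb_K}$.

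The main obstacle is justifying two nondegeneracies: $\lambda_K < 1$ (so $\Mb^\star$ has full rank $K$ and the range equals, rather than is strictly contained in, $\spn\cbr{\vb_1, \dots, \vb_K}$) and the spectral gap $\lambda_K < \lambda_{K+1}$ (so $\Mb^\star$ is the \emph{unique} minimizer and the range is pinned down rather than merely upper-bounded in dimension). Both are consistent with the setup: \Cref{ass:well_separated_classes} asserts $\lambda_{K+1} > 0$, and \Cref{lemma:majority_labeling_small_laplacian_norm} exhibits $K$ linearly independent vectors---the weighted class indicators $\cbr{\yb_k}_{k \in [K]}$---whose aggregate Rayleigh quotient on $\Lb(\Xcal)$ equals $\fgtice \ll 1$. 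A Courant--Fischer argument then forces $\lambda_K \le \fgtice \ll \lambda_{K+1}$ and in particular $\lambda_K < 1$, closing both gaps. In the boundary case $\lambda_K = \lambda_{K+1}$, the range can only be identified up to the eigenspace of that shared eigenvalue; the well-separation hypothesis $\fgtice \ll 1$ rules this degeneracy out in the regime of interest.
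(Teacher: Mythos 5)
Your proof takes essentially the same route as the paper's: rewrite $\reglappop(f)$ as $\nbr{\ol\Wb(\Xcal) - \Fb_\Xcal\Fb_\Xcal^\top}_F^2$ with $\Fb_\Xcal = \Db(\Xcal)^{1/2}f(\Xcal)$, note $\ol\Wb(\Xcal)\ageq 0$ with eigenvalues $1-\lambda_i$, and apply Eckart--Young--Mirsky (equivalently the Ky Fan trace characterization) to conclude $\range(\Fb_\Xcal) = \spn\cbr{\vb_1,\dots,\vb_K}$. Your added care about the nondegeneracies ($\lambda_K<1$ and the gap $\lambda_K<\lambda_{K+1}$, which the paper handles only by its blanket ``arbitrary breaking of ties'') is welcome; the one small imprecision is that the Courant--Fischer step yields $\lambda_K \le \fgtice/\min_{k}P(\Xcal_k)$ rather than $\lambda_K\le\fgtice$, since the weighted class indicators $\yb_k$ satisfy $\nbr{\yb_k}_2^2 = P(\Xcal_k)$ and are not unit vectors.
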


\begin{proof}[Proof of \Cref{claim:population_laplacian_find_evec}]
    Since $0 \aleq \Lb(\Xcal) \aleq 2 \Ib$, we have $- \Ib \aleq \ol\Wb(\Xcal) \aleq \Ib$. Meanwhile, let $\repker\rbr{\Xcal,\Xcal} \in \R^{\abbr{\Xcal} \times \abbr{\Xcal}}$ be the population kernel matrix such that $\repker\rbr{\Xcal,\Xcal}_{\xb\xb'} = \repker\rbr{\xb,\xb'}$. Then by defintion, $\repker\rbr{\Xcal,\Xcal} = \Db(\Xcal)^{-1} \Wb(\Xcal) \Db(\Xcal)^{-1}$, and therefore
    \begin{align*}
        \Ib \ageq \ol\Wb(\Xcal) = \Db(\Xcal)^{-\frac{1}{2}} \Wb(\Xcal) \Db(\Xcal)^{-\frac{1}{2}}
        = \Db(\Xcal)^{\frac{1}{2}} \repker\rbr{\Xcal,\Xcal} \Db(\Xcal)^{\frac{1}{2}} \ageq 0.
    \end{align*}
    That is, the eigenvalues of $\Lb(\Xcal)$ and those of $\ol\Wb(\Xcal)$ satisfies
    \begin{align*}
        \lambda_i \dfeq \lambda_i\rbr{\Lb(\Xcal)} = 1 - \lambda_{\abbr{\Xcal}-i+1}\rbr{\ol\Wb(\Xcal)}
        \quad \forall~ i \in [\abbr{\Xcal}],
    \end{align*}
    while the eigenvectors are shared such that, with the spectral decomposition of $\Lb(\Xcal)$ in \Cref{eq:spectral_decomposition_laplacian},
    \begin{align*}
        \ol\Wb(\Xcal) = \sum_{i = 1}^{\abbr{\Xcal}} \rbr{1-\lambda_i} \cdot \vb_i \vb_i^\top.
    \end{align*}
    Therefore, the $K$ eigenvalues of $\ol\Wb(\Xcal)$ of maximal modulus correspond exactly to the $K$ eigenvalues of $\Lb(\Xcal)$ that are closest to $0$, the eigenvalues whose eigenspaces are relevant for spectral clustering.
    
    Recalling $\Fb_\Xcal \dfeq \Db\rbr{\Xcal}^{\frac{1}{2}} f\rbr{\Xcal}$, we can rewrite \Cref{eq:function_class_population_laplacian} as 
    \begin{align*}
        \reglappop\rbr{f} = \nbr{\ol\Wb(\Xcal) - \Fb_\Xcal \Fb_\Xcal^\top}_F^2.
    \end{align*}
    It follows directly from the Eckart-Young-Mirsky theorem~\citep{eckart1936} that $\range\rbr{\Fcal_\Xcal}$ is the subspace spanned by the eigenvectors $\Vb_K = \sbr{\vb_1, \dots, \vb_K} \in \R^{\abbr{\Xcal} \times K}$ associated with the maximum $K$ eigenvalues of $\ol\Wb(\Xcal)$:
    \begin{align*}
        \Flap{\Xcal} = \csep{f \in \Fcal}{\exists~ \Zb \in \R^{K \times K} ~\text{s.t.}~ \Db\rbr{\Xcal}^{\frac{1}{2}} f\rbr{\Xcal} = \Vb_K \Zb}.
    \end{align*} 
\end{proof}

\begin{proof}[Proof of \Cref{thm:laplacian_learning_cluster_guarantee}]
    Recall $\Fb_\Xcal \dfeq \Db\rbr{\Xcal}^{1/2} f\rbr{\Xcal}$, $\Yb_\Xcal \dfeq \Db\rbr{\Xcal}^{1/2} \onehot{y_*}(\Xcal)$, and let $\yb_k \in [0,1]^{\abbr{\Xcal}}$ be the $k$-th column of $\Yb_\Xcal$. Leveraging \Cref{lemma:minority_prob}, we have for any $f \in \Fcal$,
    \begin{align*}
        P\rbr{M(f)} 
        \le &2 \max\rbr{\frac{\beta^2}{\gamma^2},1} \min_{\Zb \in \R^{K \times K}} \nbr{\Yb_\Xcal - \Fb_\Xcal \Zb}_F^2 \\
        = &2 \max\rbr{\frac{\beta^2}{\gamma^2},1} \min_{\cbr{\zb_k \in \R^K}_{k \in [K]}} \sum_{k \in [K]} \nbr{\yb_k - \Fb_\Xcal \zb_k}_2^2.
    \end{align*}

    Recall that the orthonormal eigenvectors $\cbr{\vb_i \in \R^{\abbr{\Xcal}}}_{i \in [\abbr{\Xcal}]}$ associated with eigenvalues $0 = \lambda_1 \le \dots \le \lambda_{\abbr{\Xcal}}$ of $\Lb\rbr{\Xcal}$ form a basis of $\R^{\abbr{\Xcal}}$. For every $k \in [K]$, $\rbr{\yb_k - \Fb_\Xcal \zb_k} \in \R^{\abbr{\Xcal}}$ can be decomposed as the linear combination of $\cbr{\vb_i}_{i \in [\abbr{\Xcal}]}$ such that 
    \begin{align*}
        \nbr{\yb_k - \Fb_\Xcal \zb_k}_2^2
        = \sum_{i = 1}^{\abbr{\Xcal}} \rbr{\vb_i^\top \rbr{\yb_k - \Fb_\Xcal \zb_k}}^2
    \end{align*}
    Meanwhile, \Cref{claim:population_laplacian_find_evec} implies that, for $f \in \Flap{\Xcal}$, with $\Vb_K = \sbr{\vb_1, \dots, \vb_K} \in \R^{\abbr{\Xcal} \times K}$, we have $\range\rbr{\Fb_\Xcal} = \range\rbr{\Vb_K}$. That is, $\vb_i^\top \Fb_\Xcal \zb_k = 0$ for all $i > K$, and by taking 
    \begin{align*}
        \zb_k = \rbr{\Vb_K^\top \Fb_\Xcal}^{-1} \Vb_K^\top \yb_k
        \quad \Rightarrow \quad
        \sum_{i = 1}^{K} \rbr{\vb_i^\top \rbr{\yb_k - \Fb_\Xcal \zb_k}}^2 = 0.
    \end{align*}
    Overall, for every $k \in [K]$, there exists $\zb_k \in \R^K$ such that 
    \begin{align*}
        \nbr{\yb_k - \Fb_\Xcal \zb_k}_2^2
        = \sum_{i = K+1}^{\abbr{\Xcal}} \rbr{\vb_i^\top \yb_k}^2 
        \le \frac{1}{\lambda_{K+1}} \sum_{i = K+1}^{\abbr{\Xcal}} \lambda_i \cdot \rbr{\vb_i^\top \yb_k}^2 
        \le \frac{\yb_k^\top \Lb\rbr{\Xcal} \yb_k}{\lambda_{K+1}},
    \end{align*}
    and \Cref{lemma:majority_labeling_small_laplacian_norm} implies that
    \begin{align*}
        \min_{\cbr{\zb_k \in \R^K}_{k \in [K]}} \sum_{k \in [K]} \nbr{\yb_k - \Fb_\Xcal \zb_k}_2^2
        \le \sum_{k \in [K]} \frac{\yb_k^\top \Lb\rbr{\Xcal} \yb_k}{\lambda_{K+1}} 
        = \frac{\fgtice}{\lambda_{K+1}},
    \end{align*}
    which completes the proof.
\end{proof}

\subsection{Limitation of Spectral Clustering Alone}\label{apx:pitfall_spectral_clustering}
To ground the discussion in \Cref{remark:limitation_spectral_clustering}, here, we raise a toy counter-example where spectral clustering alone fails to satisfy \Cref{ass:full_rank_min_sval_prediction} and suffers from a large clustering error. 
Intuitively, such a pitfall of spectral clustering is caused by a suboptimal choice of basis for the predictions $f(\Xcal)$ in \Cref{eq:function_class_population_laplacian} due to the inherited rotation invariance of the minimizers of $R(f)$.
\begin{example}[Pitfall of spectral clustering alone]\label{ex:pitfall_spectral_clustering}
    We consider a binary classification problem with two balanced ground truth classes $\cbr{\Xcal_1, \Xcal_2}$ perfectly partitioned in $G_\Xcal$ as disconnected components. Then, when the prediction $y_f\rbr{\xb} \dfeq \argmax_{k \in [K]} f\rbr{\xb}_k$ is made with a uniformly random break of ties, \Cref{eq:function_class_population_laplacian} alone suffers from $\mu\rbr{\Flap{\Xcal}} \ge \frac{1}{4}$ in expectation, due to the violation of \Cref{ass:full_rank_min_sval_prediction}.

    Nevertheless, with as few as one labeled sample per class $\csepp{\rbr{\xb_i, i}}{i = 1,2}$, weak supervision via the cross-entropy loss
    \begin{align*}
        \min_{f \in \Flap{\Xcal}} \cbr{\wh\Lcal_\ce\rbr{f} \dfeq - \sum_{i=1}^2 \log\rbr{\softmax\rbr{f\rbr{\xb_i}}_i}}
    \end{align*}
    facilitates satisfaction of \Cref{ass:full_rank_min_sval_prediction} with a provably large margin $\gamma$. 
    For instance, assuming $\nbr{f(\xb)}_2 \le \nbr{f(\Sb)}_2 \le \beta$ and $w_\xb = {1}/{\abbr{\Xcal}}$ for all $\xb \in \Xcal$ (simplified for illustration purposes), for any $f \in \Flap{\Xcal}$ under weak supervision such that 
    \begin{align*}
        f \in \csepp{f \in \Flap{\Xcal}}{\log\rbr{1 + e^{-\sqrt{2}\beta}} \le \wh\Lcal_\ce(f) < \log\rbr{1 + e^{-\beta}}},
    \end{align*}
    \Cref{ass:full_rank_min_sval_prediction} is satisfied with 
    \begin{align*}
        \gamma \ge -\log\rbr{e^{\wh\Lcal_\ce(f)}-1} - \sqrt{2 \beta^2 - \log\rbr{e^{\wh\Lcal_\ce(f)}-1}^2}.
    \end{align*}
    In particular, with the minimum feasible cross-entropy loss $\wh\Lcal_\ce(f) = \log\rbr{1 + e^{-\sqrt{2}\beta}}$, we have $\gamma \ge \sqrt{2}\beta$.

    We remark that with DAC regularizations like FixMatch~\citep{sohn2020fixmatch} and MixMatch~\citep{berthelot2019mixmatch}, model predictions are matched against pseudo-labels that are gauged based on data augmentations, usually via the cross-entropy loss. Therefore, DAC has analogous effects on the margins of the learned prediction functions $f$ as the additional supervision described above. 

    Moreover, when coupling \Cref{eq:function_class_population_laplacian} with the conventional KD (\ie, feature matching), the learned prediction functions $f$ inherit the boundedness and margin of the teacher model, which are presumably satisfactory.
\end{example}

\begin{proof}[Rationale for \Cref{ex:pitfall_spectral_clustering}]
    Recall that $\lambda_1 \le \cdots \le \lambda_{\abbr{\Xcal}}$ are the eigenvalues of the graph Laplacian $\Lb\rbr{\Xcal} = \Ib - \ol\Wb\rbr{\Xcal}$, and let $\cbr{\vb_1, \cdots, \vb_{\abbr{\Xcal}}}$ be the associated normalized eigenvectors.
    For a binary classification with $\cbr{\Xcal_1, \Xcal_2}$ perfectly separated in $G_\Xcal$, we can write $\lambda_1 = \lambda_2 = 0$, while $\vb_1, \vb_2$ are the scaled identity vectors of $\Xcal_1, \Xcal_2$, respectively, such that $\vb_i(\xb) = \frac{1}{\sqrt{\abbr{\Xcal_i}}} \iffun{\xb \in \Xcal_i}$ for $i = 1,2$. 

    By denoting $\Vb_2 = \sbr{\vb_1, \vb_2} \in \R^{\abbr{\Xcal} \times 2}$, the Eckart-Young-Mirsky theorem~\citep{eckart1936} implies that, for any orthogonal matrix $\Qb = \sbr{\qb_1; \qb_2} \in \R^{2 \times 2}$, 
    \begin{align*}
        f\rbr{\Xcal} 
        = \Db\rbr{\Xcal}^{-\frac{1}{2}} \Vb_2 \diag\rbr{1-\lambda_1, 1-\lambda_2}^{\frac{1}{2}} \Qb
        = \Db\rbr{\Xcal}^{-\frac{1}{2}} \Vb_2 \Qb
    \end{align*}
    is a minimizer of $R(f)$ in \Cref{eq:function_class_population_laplacian}. That is, $f(\xb) = \frac{1}{\sqrt{w_\xb \abbr{\Xcal_i}}} \qb_i$ for $\xb \in \Xcal_i$, $i = 1,2$.

    \paragraph{Spectral clustering alone.}
    However, by taking $\qb_1 = \frac{1}{\sqrt{2}}\sbr{1;1}$ and $\qb_2 = \frac{1}{\sqrt{2}}\sbr{-1;1}$, with a uniformly random break of ties, the corresponding $f \in \Flap{\Xcal}$ will misclassify half of $\xb \in \Xcal_1$ as $\Xcal_2$ in expectation. Then, since $P\rbr{\Xcal_1} = P\rbr{\Xcal_2} = \frac{1}{2}$, by \Cref{def:clustering_error}, we have $M(f) = \csepp{\xb \in \Xcal_1}{y_f(\xb)=2}$ such that $\mu\rbr{\Flap{\Xcal}} \ge P\rbr{M(f)} = \frac{1}{4}$ in expectation.

    Spectral clustering in \Cref{ex:pitfall_spectral_clustering} fails to yield low clustering error due to the potential violation of \Cref{ass:full_rank_min_sval_prediction}. 
    For example, consider any prediction function $f \in \Flap{\Xcal}$ described in \Cref{ex:pitfall_spectral_clustering}.
    \begin{enumerate}[label=(\roman*)]
        \item When $\argmin_{\xb \in \Xcal \setminus M(f)} w_\xb \subseteq \Xcal_1$, recalling $\sb_k = \argmax_{\xb \in \Xcal \backslash M(f)} f(\xb)_k$ and $f(\xb) = \frac{1}{\sqrt{w_\xb \abbr{\Xcal_i}}} \qb_i$ for $\xb \in \Xcal_i$ such that, since $\abbr{\Xcal_1}=\abbr{\Xcal_2}$,
        \begin{align*}
            \sb_1 \in \Xcal_1,
            \qquad
            \sb_2 \in \argmax_{\xb \in \Xcal \backslash M(f)} f(\xb)_2 
            = \argmax_{\xb \in \Xcal \backslash M(f)} \frac{1}{\sqrt{w_\xb \abbr{\Xcal_i}}} 
            = \argmin_{\xb \in \Xcal \setminus M(f)} w_\xb \subseteq \Xcal_1,
        \end{align*}
        there does not exist a skeleton subset $\Sb = \sbr{\sb_1; \sb_2}$ as described in \Cref{ass:full_rank_min_sval_prediction}.

        \item Otherwise, suppose one can find a skeleton subset $\Sb = \sbr{\sb_1; \sb_2}$ as described in \Cref{ass:full_rank_min_sval_prediction}. 
        Without loss of generality, by taking $\rbr{\argmin_{\xb \in \Xcal} w_\xb} \cap M(f) \ne \emptyset$ in the problem setup, there exists $\xb \in M(f) \subset \Xcal_1$ (with $y_f(\xb) = 2$) such that $w_\xb \le w_{\sb_1}$.
        Therefore,
        \begin{align*}
            \gamma_1 \dfeq f\rbr{\sb_1}_1 - \max_{\xb \in M(f): y_f\rbr{\xb} \ne 1} f\rbr{\xb}_1 
            \le \frac{1}{\sqrt{2 w_{\sb_1} \abbr{\Xcal_1}}} - \frac{1}{\sqrt{2 w_{\xb} \abbr{\Xcal_1}}} \le 0
        \end{align*}
        implies a trivial margin $\gamma \le 0$, which violates $\gamma > 0$ in \Cref{ass:full_rank_min_sval_prediction}.
    \end{enumerate}

    \paragraph{With additional weak supervision.}
    We first observe that for both $i \in [2]$, 
    \begin{align*}
        - \log\rbr{\softmax\rbr{f\rbr{\xb_i}}_i} \le e^{\wh\Lcal_\ce(f)}, 
    \end{align*}
    which implies that
    \begin{align*}
        f\rbr{\xb_1}_1 - f\rbr{\xb_1}_2 \ge -\log\rbr{e^{\wh\Lcal_\ce(f)}-1},
        \quad
        f\rbr{\xb_2}_2 - f\rbr{\xb_2}_1 \ge -\log\rbr{e^{\wh\Lcal_\ce(f)}-1}
    \end{align*}
    where $-\log\rbr{e^{\wh\Lcal_\ce(f)}-1} > 0$ since $\wh\Lcal_\ce(f) < \log\rbr{1 + e^{-\beta}} < \log(2)$.

    Since we assume $\nbr{f(\xb)}_2 \le \beta$ for all $\xb \in \Xcal$, $f\rbr{\xb_i}_1^2 + f\rbr{\xb_i}_2^2 \le \beta^2$ for both $i = 1,2$, 
    we therefore have 
    \begin{align*}
        f\rbr{\xb_1}_1 - f\rbr{\xb_2}_1 \ge -\log\rbr{e^{\wh\Lcal_\ce(f)}-1} - \sqrt{2 \beta^2 - \log\rbr{e^{\wh\Lcal_\ce(f)}-1}^2},
    \end{align*}
    % and 
    This similarly holds for $f\rbr{\xb_2}_2 - f\rbr{\xb_1}_2$.

    Recall now that $f(\xb) = \frac{1}{\sqrt{w_\xb \abbr{\Xcal_i}}} \qb_i$ for $\xb \in \Xcal_i$, $i = 1,2$ and $\xb_i \in \Xcal_i$ for $i=1,2$. Assuming $w_\xb = 1/\abbr{\Xcal}$ for all $\xb \in \Xcal$, we have
    \begin{align*}
        f(\xb) = 
        \begin{cases}
            \sqrt{\frac{w_{\xb_1}}{w_\xb}} f\rbr{\xb_1} = f\rbr{\xb_1}, \quad & \xb \in \Xcal_1 \\
            \sqrt{\frac{w_{\xb_2}}{w_\xb}} f\rbr{\xb_2} = f\rbr{\xb_2}, \quad & \xb \in \Xcal_2
        \end{cases}.
    \end{align*}
    Therefore, $\gamma_1 \ge f\rbr{\xb_1}_1 - f\rbr{\xb_2}_1$ and $\gamma_2 \ge f\rbr{\xb_2}_2 - f\rbr{\xb_1}_2$, which together imply that 
    \begin{align*}
        \gamma \ge -\log\rbr{e^{\wh\Lcal_\ce(f)}-1} - \sqrt{2 \beta^2 - \log\rbr{e^{\wh\Lcal_\ce(f)}-1}^2}.
    \end{align*}
    Now examine the two extremes of $\log\rbr{1 + e^{-\sqrt{2}\beta}} \le \wh\Lcal_\ce(f) < \log\rbr{1 + e^{-\beta}}$. When the cross-entropy loss achieves its minimum feasible value $\wh\Lcal_\ce(f)=\log\rbr{1 + e^{-\sqrt{2}\beta}}$, we have that $\gamma \ge \sqrt{2}\beta$, whereas the margin can be nearly trivial $\gamma \ge \eps_\ce$ with $\eps_\ce \to 0$ when the cross-entropy loss is on the larger side $\wh\Lcal_\ce(f) \to \log\rbr{1 + e^{-\beta}}$.
\end{proof}

\subsection{Connection with Sparsest $k$-partition}\label{apx:sparsest_k_partition}

We now identify the connection between the eigenvalue $\lambda_{K+1}$ and spectral graph theoretic notions of clusteredness of the underlying graph. We recall the notion of sparsest $k$-partitions~\citep{louis2014approximation} based on the Dirichlet conductance of subgraphs.
\begin{definition}[Sparsest $k$-partition, \cite{haochen2021provable} Definition 3.4, \cite{louis2014approximation} Problem 1.1]\label{def:sparsest_k_partition}
    Given a weighted undirected graph $G = (\Xcal, w)$, for any $k \in [\abbr{\Xcal}]$, let $\Scal = \big\{ \cbr{S_i}_{i \in [k]} \big| \cup_{i \in [k]} S_i = \Xcal,~ S_i \cap S_j = \emptyset,~ S_i \ne \emptyset~ \forall i \ne j,~ i,j \in [k] \big\}$ be the set of all $k$-partitions of $G$. The sparsest $k$-partition of $G$ is defined as 
    \begin{align*}
        \phi_k \dfeq \min_{\cbr{S_i}_{i \in [k]} \in \Scal} \max_{i \in [k]} \phi_G(S_i),
    \end{align*}
    where $\phi_G(S) \dfeq \rbr{\sum_{\xb \in S} \sum_{\xb' \notin S} w_{\xb\xb'}}/\rbr{\sum_{\xb \in S} w_{\xb}}$ is the Dirichlet conductance of $S \subseteq \Xcal$.
\end{definition}
It is worth mentioning that $\phi_k$ is a non-decreasing function in $k$~\citep{haochen2021provable}. In particular, $\phi_k = 0$ when the graph has at least $k$ disconnected components; whereas $\lambda_{K+1} > 0$ (\Cref{ass:well_separated_classes}) implies that $\phi_{K+1} > 0$.

Meanwhile, \cite{louis2014approximation} unveils the following connection between the sparsest $k$-partition of the graph and the $k'$th smallest eigenvalue of its graph Laplacian:
\begin{lemma}[\cite{louis2014approximation} Proposition 1.2]\label{lemma:eigenvalue_conductance}
    Given any weighted undirected graph $G = (\Xcal, w)$, let $0 = \lambda_1 \le \dots \le \lambda_{\abbr{\Xcal}}$ be the eigenvalues of the normalized graph Laplacian in the ascending order. For any $k, k' \in [\abbr{\Xcal}]$, $k < k'$, there exists a partition $\cbr{S_i}_{i \in [k]} \in \Scal$ of $\Xcal$ such that, for all $i \in [k]$,
    \begin{align*}
        \phi_G\rbr{S_i} \lesssim \poly\rbr{\frac{k}{k'-k}} \sqrt{\log\rbr{k}\lambda_{k'}}.
    \end{align*}
\end{lemma}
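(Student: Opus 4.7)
The plan is to follow the higher-order Cheeger approach of Louis--Raghavendra--Tetali--Vempala, building on the spectral embedding given by the bottom $k'$ eigenvectors of the normalized Laplacian and then carving the embedded space into $k$ disjoint regions whose induced vertex subsets have small Dirichlet conductance. The key tension is that one must extract \emph{more} eigenvectors ($k'$ of them) than the number of desired sets ($k$), so the ``slack'' $k'-k$ is what allows a spatial decomposition into $k$ pieces of nontrivial mass; this is where the $\poly\rbr{k/(k'-k)}$ overhead is born.

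Concretely, I would proceed in four steps. First, form the spectral embedding $F: \Xcal \to \R^{k'}$ whose $i$-th coordinate is $F(x)_i = \vb_i(x)/\sqrt{w_x}$, where $\cbr{\vb_i}_{i \in [k']}$ are the bottom $k'$ eigenvectors of $\Lb(\Xcal)$ from \Cref{eq:spectral_decomposition_laplacian}. A direct computation using $\vb_i^\top \Lb(\Xcal) \vb_i = \lambda_i$ yields the smoothness and mass bounds
\begin{align*}
    \sum_{x,x' \in \Xcal} w_{xx'} \nbr{F(x) - F(x')}_2^2 \;\le\; 2 \sum_{i=1}^{k'} \lambda_i \;\le\; 2 k' \lambda_{k'},
    \qquad
    \sum_{x \in \Xcal} w_x \nbr{F(x)}_2^2 \;=\; k'.
\end{align*}
These two inequalities encode the essential ``energy vs.\ mass'' trade-off that drives the Cheeger-type analysis.

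Second, reduce the ambient dimension by a Gaussian random projection to $d = O(\log k)$ coordinates. By Johnson--Lindenstrauss, pairwise squared distances among a chosen landmark set of size $\poly(k)$ are preserved up to constant factors with constant probability, while the smoothness bound degrades only by a constant. This is the step that transmutes the $\sqrt{k'}$ scaling of the raw embedding into the $\sqrt{\log k}$ factor in the final bound.

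Third, decompose the $d$-dimensional image into $k$ disjoint regions $\cbr{R_i}_{i \in [k]}$ via a ball-growing (or randomized padded decomposition) scheme: iteratively select a center, grow a ball of carefully chosen radius, and show by a pigeonhole/averaging argument that one can extract $k$ disjoint regions each carrying an $\Omega(1/k)$ fraction of the weighted mass and with boundary edge weight controlled by the embedding smoothness. Fourth, within each region $R_i$, pull back to a vertex subset and apply a Cheeger-type sweep cut to produce $S_i \subseteq \Xcal$ with $\phi_G(S_i) \lesssim \poly\rbr{k/(k'-k)} \sqrt{\log(k) \lambda_{k'}}$.

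The main obstacle is the third step: the gap-dependent region decomposition. One must argue that after the random projection the embedded points are spread out enough to support $k$ disjoint balls of nontrivial mass with small boundary. This demands balancing three quantities simultaneously---mass per ball, boundary edge weight, and radius---and the rate at which one can do so degrades as $k'-k$ shrinks, explaining the $\poly\rbr{k/(k'-k)}$ prefactor. The remaining steps (spectral embedding identities, Johnson--Lindenstrauss, and sweep-cut inside a region) are comparatively standard and primarily contribute constants and the $\sqrt{\log k}$ factor.
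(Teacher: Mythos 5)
The paper itself offers no proof of this statement: \Cref{lemma:eigenvalue_conductance} is imported verbatim as Proposition~1.2 of \citep{louis2014approximation} and used as a black box in \Cref{coro:laplacian_learning_cluster_guarantee}, so there is no internal argument to compare yours against; what you are attempting is a reconstruction of the external result. Judged on its own terms, your sketch correctly identifies the standard higher-order Cheeger machinery (spectral embedding by the bottom $k'$ eigenvectors, the energy/mass bounds, Gaussian projection to $O(\log k)$ dimensions to trade $\sqrt{k'}$ for $\sqrt{\log k}$, and region-wise sweep cuts), and the two displayed identities in your first step are fine.

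There is, however, a genuine gap beyond the admitted vagueness of your third step: the object you construct is not the object the lemma demands. Your ball-growing/padded-decomposition plus sweep-cut pipeline yields $k$ \emph{disjoint} vertex subsets of non-trivial mass and small conductance, which is the conclusion of the earlier ``many sparse cuts'' higher-order Cheeger theorems. But \Cref{lemma:eigenvalue_conductance} requires an element of $\Scal$ as defined in \Cref{def:sparsest_k_partition}, i.e.\ a genuine $k$-partition with $\cup_{i\in[k]} S_i = \Xcal$, and the bound must hold for \emph{every} part simultaneously. Upgrading $k$ disjoint sparse sets to a full partition is not a routine cleanup: the leftover vertices must be absorbed into the parts without inflating the maximum Dirichlet conductance, and a naive assignment can ruin $\phi_G(S_i)$ because adding mass changes both numerator and denominator uncontrollably. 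Handling this is precisely the additional technical content of \citep{louis2014approximation} (a recursive partitioning/merging scheme with a charging argument), and your sketch neither performs it nor flags it; as written, the argument would terminate with disjoint sets $R_1,\dots,R_k$ whose union can miss a constant fraction of $\Xcal$. Relatedly, your claim that each region carries an $\Omega(1/k)$ fraction of the mass by pigeonhole is exactly the part that fails when the spectral embedding is not well spread, which is why the dependence on the slack $k'-k$ must enter through a quantitative isotropy/spreading argument rather than averaging alone; without that, the $\poly\rbr{k/(k'-k)}$ factor is asserted rather than derived.
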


Leveraging the existing result \Cref{lemma:eigenvalue_conductance}~\citep{louis2014approximation} from spectral graph theory, we have the following corollary.
\begin{corollary}[Clustering in terms of sparsest $k$-partitions]\label{coro:laplacian_learning_cluster_guarantee}
    Under \Cref{ass:well_separated_classes} and \Cref{ass:full_rank_min_sval_prediction} for every $f \in \Flap{\Xcal}$, error of clustering with the population (\Cref{eq:function_class_population_laplacian}) satisfies that, if $\alpha > 0$ and $\phi_k > 0$ for all $k \in [K]$, then
    \begin{align*}
        \mu\rbr{\Flap{\Xcal}} \lesssim \max\rbr{\frac{\beta^2}{\gamma^2},1} \cdot \poly\rbr{\frac{k}{K+1-k}} \log(k) \frac{\fgtice}{\phi_{k}^2}.
    \end{align*}
\end{corollary}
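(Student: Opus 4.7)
The plan is to chain Theorem~\ref{thm:laplacian_learning_cluster_guarantee} with the spectral graph-theoretic estimate of Lemma~\ref{lemma:eigenvalue_conductance}, converting the dependence on the eigenvalue $\lambda_{K+1}$ into one on the sparsest $k$-partitions $\phi_k$ of the population-induced graph $G_\Xcal$.

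First, I would apply Theorem~\ref{thm:laplacian_learning_cluster_guarantee} directly under the hypothesized Assumption~\ref{ass:well_separated_classes} and Assumption~\ref{ass:full_rank_min_sval_prediction} to obtain
\[
\mu\rbr{\Flap{\Xcal}} \;\le\; 2\max\rbr{\frac{\beta^2}{\gamma^2},1} \cdot \frac{\alpha}{\lambda_{K+1}},
\]
which reduces the task to a purely graph-theoretic lower bound on $\lambda_{K+1}$ in terms of $\phi_k$ for an arbitrary $k \in [K]$. Next, I would invoke Lemma~\ref{lemma:eigenvalue_conductance} with $k' = K+1$ (admissible since $k \le K < K+1$): the lemma yields a $k$-partition $\{S_i\}_{i \in [k]}$ of $\Xcal$ with $\max_{i \in [k]} \phi_G(S_i) \lesssim \poly\rbr{k/(K+1-k)}\sqrt{\log(k)\,\lambda_{K+1}}$. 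By the minimax definition of $\phi_k$ in Definition~\ref{def:sparsest_k_partition}, this specific partition witnesses the same inequality for $\phi_k$ itself. Squaring both sides and rearranging, and absorbing the squared polynomial factor back into $\poly(\cdot)$, yields
\[
\frac{1}{\lambda_{K+1}} \;\lesssim\; \poly\rbr{\frac{k}{K+1-k}}\,\frac{\log(k)}{\phi_k^2}.
\]
Substituting this estimate into the first display gives the stated bound.

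Since the argument is essentially a two-step composition of previously established facts, there is no genuine obstacle; the only delicate point is bookkeeping the polynomial factor under squaring and verifying that the hypotheses $\alpha > 0$ and $\phi_k > 0$ suffice for the bound to be finite and non-vacuous. In particular, Assumption~\ref{ass:well_separated_classes} already forces $\lambda_{K+1} > 0$, so the right-hand side produced by Lemma~\ref{lemma:eigenvalue_conductance} is strictly positive, and the inversion step is justified for every $k \in [K]$ with $\phi_k > 0$.
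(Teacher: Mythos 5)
Your proposal is correct and follows exactly the paper's argument: apply \Cref{thm:laplacian_learning_cluster_guarantee}, then use \Cref{lemma:eigenvalue_conductance} with $k' = K+1$ together with the minimax definition of $\phi_k$ to get $\phi_k \lesssim \poly\rbr{k/(K+1-k)}\sqrt{\log(k)\lambda_{K+1}}$, and square to convert the eigenvalue dependence into one on $\phi_k^2$. The bookkeeping of the squared polynomial factor and the positivity remarks match the paper's proof as well.
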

We also notice that when $\alpha = 0$, \Cref{thm:laplacian_learning_cluster_guarantee} automatically implies that $\mu\rbr{\Flap{\Xcal}}=0$.
Intuitively, \Cref{coro:laplacian_learning_cluster_guarantee} implies that except for the partition of the $K$ ground truth classes $\cbr{\Xcal}_{k \in [K]}$ (\ie, when $\alpha > 0$), $G_\Xcal$ cannot be partitioned into other $K$ components by removing a sparse set of edges from $G_\Xcal$.

\begin{proof}[Proof of \Cref{coro:laplacian_learning_cluster_guarantee}]
    \Cref{lemma:eigenvalue_conductance} implies that for any $k \in [K]$, there exists a partition $\cbr{S_i}_{i \in [k]} \in \Scal$ of $\Xcal$ such that
    \begin{align*}
        \phi_k \le \max_{i \in [k]} \phi_G\rbr{S_i} \lesssim \poly\rbr{\frac{k}{K+1-k}} \sqrt{\log\rbr{k}\lambda_{K+1}},
    \end{align*}
    and therefore
    \begin{align*}
        \frac{1}{\lambda_{K+1}} \lesssim \poly\rbr{\frac{k}{K+1-k}} \log(k) \frac{1}{\phi_k^2},
    \end{align*}
    which completes the proof when recalling \Cref{thm:laplacian_learning_cluster_guarantee}.
\end{proof}

\section{Proofs and Discussions for \Cref{sec:clustering_empirical_laplacian}}\label{apx:clustering_empirical_laplacian}

Here, we show that $\reglapemp{\Xb^u}(f)$ shares the same minimizer as the population Laplacian regularizer $\reglappop(f)$ in expectation (\Cref{prop:unbiased_population_laplacian_estimate}) and provide generalization bounds for $\reglapemp{\Xb^u}(f)$ (\Cref{thm:unlabeled_sample_complexity}) accordingly.

\subsection{Unbiased Estimations of Population Laplacian}\label{apx:unbiased_estimation_population_laplacian}
\begin{proposition}\label{prop:unbiased_population_laplacian_estimate}
    $\reglapemp{\Xb^u}(f)$ (\Cref{eq:constrained_empirical_laplacian_learning}) 
    serves as an unbiased estimate for $\reglappop(f)$ (\Cref{eq:function_class_population_laplacian}):
    \begin{align*}
        \reglappop\rbr{f} = \E_{\Xb^u \sim P(\xb)^{N}}\sbr{\reglapemp{\Xb^u}\rbr{f}}.
    \end{align*}
\end{proposition}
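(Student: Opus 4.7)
The plan is to first rewrite $\reglappop(f)$ as a double expectation over $P(\xb) \otimes P(\xb)$ whose integrand matches (up to sampling) the summand in $\reglapemp{\Xb^u}(f)$, and then apply linearity of expectation to the i.i.d.\ pairs $\{(\xb^u_{2i-1}, \xb^u_{2i})\}_{i \in [N/2]}$.

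Concretely, I would first expand the squared Frobenius norm in \Cref{eq:function_class_population_laplacian} entry-wise. Using the identities $\ol{\Wb}(\Xcal)_{\xb\xb'} = w_{\xb\xb'} / \sqrt{w_\xb w_{\xb'}}$ and $(\Db(\Xcal)^{1/2} f(\Xcal) f(\Xcal)^\top \Db(\Xcal)^{1/2})_{\xb\xb'} = \sqrt{w_\xb w_{\xb'}}\, f(\xb)^\top f(\xb')$, I would factor $\sqrt{w_\xb w_{\xb'}}$ out of each entry and pair it with the kernel definition $\repker(\xb,\xb') = w_{\xb\xb'}/(w_\xb w_{\xb'})$ from \Cref{sec:rkd}, yielding
\begin{align*}
\reglappop(f)
= \sum_{\xb,\xb' \in \Xcal} w_\xb w_{\xb'} \bigl(\repker(\xb,\xb') - f(\xb)^\top f(\xb')\bigr)^2.
\end{align*}
Since $P(\xb) = w_\xb$ by construction of the population-induced graph $G_\Xcal$, this sum equals $\E_{\xb,\xb' \sim P(\xb)^2}\bigl[(\repker(\xb,\xb') - f(\xb)^\top f(\xb'))^2\bigr]$, matching the RKD-loss form stated in \Cref{sec:rkd}.

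Next, because $\Xb^u \sim P(\xb)^N$ is an i.i.d.\ sample, each disjoint pair $(\xb^u_{2i-1}, \xb^u_{2i})$ is distributed as $P(\xb)^2$, and linearity of expectation gives
\begin{align*}
\E_{\Xb^u \sim P(\xb)^N}\bigl[\reglapemp{\Xb^u}(f)\bigr]
= \frac{2}{N}\sum_{i=1}^{N/2} \E_{\xb,\xb' \sim P(\xb)^2}\bigl[(f(\xb)^\top f(\xb') - \repker(\xb,\xb'))^2\bigr]
= \reglappop(f),
\end{align*}
which is the claim. There is no real obstacle here; the only care needed is in checking that the weights absorbed from $\Db(\Xcal)^{1/2}$ on both sides of the Frobenius difference are exactly what converts $w_{\xb\xb'}$ into $\repker(\xb,\xb')$ and simultaneously produces the sampling density $w_\xb w_{\xb'}$.
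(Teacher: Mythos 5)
Your proposal is correct and follows essentially the same route as the paper's proof: both reduce $\reglappop(f)$ to $\E_{\xb,\xb'\sim P(\xb)^2}\bigl[(\repker(\xb,\xb')-f(\xb)^\top f(\xb'))^2\bigr]$ by absorbing the degree weights (the paper expands the Frobenius norm into three trace terms and re-factors $w_\xb w_{\xb'}$, while you factor $\sqrt{w_\xb w_{\xb'}}$ out of each entry directly---a trivially equivalent computation), and then both conclude by the i.i.d.\ structure of the disjoint pairs and linearity of expectation.
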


\begin{proof}[Proof of \Cref{prop:unbiased_population_laplacian_estimate}]
    We first observe that $\reglappop(f)$ can be expanded as following:
    \begin{align*}
        \reglappop\rbr{f} 
        \dfeq &\nbr{\ol\Wb(\Xcal) - \Db(\Xcal)^{\frac{1}{2}} f\rbr{\Xcal} f\rbr{\Xcal}^\top \Db(\Xcal)^{\frac{1}{2}}}_F^2 \\
        = &\nbr{\Db(\Xcal)^{-\frac{1}{2}} \Wb(\Xcal) \Db(\Xcal)^{-\frac{1}{2}}}_F^2 
        + \nbr{\Db(\Xcal)^{\frac{1}{2}} f\rbr{\Xcal} f\rbr{\Xcal}^\top \Db(\Xcal)^{\frac{1}{2}}}_F^2 \\
        &-2 \tr\rbr{\Wb(\Xcal) f\rbr{\Xcal} f\rbr{\Xcal}^\top} \\
        = &\sum_{\xb \in \Xcal} \sum_{\xb' \in \Xcal} \frac{w_{\xb\xb'}^2}{w_{\xb} w_{\xb'}} -2 w_{\xb\xb'} f(\xb)^\top f(\xb') + w_{\xb} w_{\xb'} \rbr{f(\xb)^\top f(\xb')}^2 \\
        = &\sum_{\xb \in \Xcal} \sum_{\xb' \in \Xcal} w_{\xb} w_{\xb'} \cdot \rbr{\rbr{\frac{w_{\xb\xb'}}{w_{\xb} w_{\xb'}}}^2 - \frac{2 w_{\xb\xb'}}{w_{\xb} w_{\xb'}} f(\xb)^\top f(\xb') + \rbr{f(\xb)^\top f(\xb')}^2}.
    \end{align*}
    Then, with $\repker\rbr{\xb,\xb'} = \frac{w_{\xb\xb'}}{w_{\xb} w_{\xb'}}$, $\reglapemp{\Xb^u}\rbr{f}$ in \Cref{eq:constrained_empirical_laplacian_learning}, $w_{\xb} = P\rbr{\xb}$, and $w_{\xb'} = P\rbr{\xb'}$, we have 
    \begin{align*}
        \reglappop\rbr{f} 
        = &\E_{\xb,\xb' \sim P(\xb)^2} \sbr{\repker\rbr{\xb,\xb'}^2 - 2 \repker\rbr{\xb,\xb'} f(\xb)^\top f(\xb') + \rbr{f(\xb)^\top f(\xb')}^2} \\
        = &\E_{\xb,\xb' \sim P(\xb)^2} \sbr{\rbr{f(\xb)^\top f(\xb') - \repker\rbr{\xb,\xb'}}^2} \\
        = &\E_{\Xb^u \sim P(\xb)^N} \sbr{\frac{2}{N} \sum_{i=1}^{N/2} \rbr{f\rbr{\xb^u_{2i-1}}^\top f\rbr{\xb^u_{2i}} - \repker\rbr{\xb^u_{2i-1}, \xb^u_{2i}}}^2} \\
        = &\E_{\Xb^u \sim P(\xb)^N}\sbr{\reglapemp{\Xb^u}\rbr{f}}.
    \end{align*}
\end{proof}

\subsection{Proof of \Cref{thm:unlabeled_sample_complexity}}\label{apx:unlabeled_sample_complexity}

\begin{proof}[Proof of \Cref{thm:unlabeled_sample_complexity}]
    By defining
    \begin{align*}
        \reglapu{\xb,\xb'}(f) \dfeq \rbr{f(\xb)^\top f(\xb') - \repker\rbr{\xb,\xb'}}^2,
    \end{align*}
    we have $\reglappop(f) = \E_{\xb,\xb' \sim P(\xb)^2}\sbr{\reglapu{\xb,\xb'}(f)}$ and $\reglapemp{\Xb^u}\rbr{f} = \frac{2}{N} \sum_{i=1}^{N/2} \reglapu{\xb^u_{2i-1}, \xb^u_{2i}}(f)$ as
    \begin{align*}
        \reglapemp{\Xb^u}\rbr{f} 
        = &\frac{2}{N} \sum_{i=1}^{N/2} \rbr{f\rbr{\xb^u_{2i-1}}^\top f\rbr{\xb^u_{2i}} - \repker\rbr{\xb^u_{2i-1}, \xb^u_{2i}}}^2,
    \end{align*}
    where $\cbr{\rbr{\xb^u_{2i-1}, \xb^u_{2i}}}_{i=1}^{N/2}$ are $\iid$ given $\Xb^u \sim P(\xb)^N$.
    Consider the function class
    \begin{align*}
        \reglapu{\cdot,\cdot} \circ \Fcal = \csep{\reglapu{\cdot,\cdot}(f): \Xcal \times \Xcal \to \R}{f \in \Fcal}.
    \end{align*}
    Since $0 \le \repker(\xb,\xb') \le \bdker$, and by Cauchy-Schwarz inequality, $f(\xb)^\top f(\xb') \le \nbr{f(\xb)}_2 \nbr{f(\xb')}_2 \le \bdf$, we have $0 \le \reglapu{\xb,\xb'}(f) \le \rbr{\bdker + \bdf}^2$ for all $\xb, \xb' \in \Xcal$, which means that $\reglapu{\cdot,\cdot} \circ \Fcal$ is $\rbr{\bdker + \bdf}^2$-bounded.
    Leveraging \Cref{lemma:generalization_Rademacher} gives that, with probability at least $1-\delta/2$ over $\Xb^u$,
    \begin{align*}
        \reglappop\rbr{\flapemp} - \reglappop\rbr{\flappop} 
        \le 4 \Rfrak_{N/2}\rbr{\reglapu{\cdot,\cdot} \circ \Fcal} + 2 \rbr{\bdker + \bdf}^2 \sqrt{\frac{\log\rbr{4/\delta}}{N}}.
    \end{align*}
    
    Now, it remains to show that $\Rfrak_{N/2}\rbr{\reglapu{\cdot,\cdot} \circ \Fcal} \le 4 \sqrt{2 \bdf} \rbr{\bdf + \bdker} \Rfrak_{N/2}\rbr{\Fcal}$. 
    For this, we recall \Cref{eq:rademacher_complexity_vector_valued} and observe that
    \begin{align*}
        \Rfrak_{N/2}\rbr{\reglapu{\cdot,\cdot} \circ \Fcal} 
        = &\E_{\substack{\Xb^u \sim P(\xb)^N \\ \rhob \sim \rademacher^{\frac{N}{2}}}}\sbr{\sup_{f \in \Fcal} \frac{2}{N} \sum_{i=1}^{N/2} \rho_i \cdot \rbr{f\rbr{\xb^u_{2i-1}}^\top f\rbr{\xb^u_{2i}} - \repker\rbr{\xb^u_{2i-1}, \xb^u_{2i}}}^2}, 
    \end{align*}
    where with Talagrand's lemma~\citep[Theorem 4.12]{ledoux2013} for compositions with scalar Lipschitz functions,
    \begin{align*}
        \Rfrak_{N/2}\rbr{\reglapu{\cdot,\cdot} \circ \Fcal} 
        \le 2 \rbr{\bdf + \bdker} \E_{\Xb^u, \rhob}\sbr{\sup_{f \in \Fcal} \frac{2}{N} \sum_{i=1}^{N/2} \rho_i \cdot f\rbr{\xb^u_{2i-1}}^\top f\rbr{\xb^u_{2i}}}.
    \end{align*}
    Since $f\rbr{\xb^u_{2i-1}}^\top f\rbr{\xb^u_{2i}} \le \frac{1}{2}\rbr{\nbr{f\rbr{\xb^u_{2i-1}}}_2^2 + \nbr{f\rbr{\xb^u_{2i}}}_2^2}$,
    \begin{align*}
        &\E_{\Xb^u, \rhob}\sbr{\sup_{f \in \Fcal} \frac{2}{N} \sum_{i=1}^{N/2} \rho_i \cdot f\rbr{\xb^u_{2i-1}}^\top f\rbr{\xb^u_{2i}}} \\
        \le &\E_{\Xb^u, \rhob}\sbr{\sup_{f \in \Fcal} \frac{2}{N} \sum_{i=1}^{N/2} \rho_i \cdot \frac{1}{2}\rbr{\nbr{f\rbr{\xb^u_{2i-1}}}_2^2 + \nbr{f\rbr{\xb^u_{2i}}}_2^2}} \\
        \le &\frac{1}{2} \rbr{\E_{\Xb^u, \rhob}\sbr{\sup_{f \in \Fcal} \frac{2}{N} \sum_{i=1}^{N/2} \rho_i \cdot \nbr{f\rbr{\xb^u_{2i-1}}}_2^2} + \E_{\Xb^u, \rhob}\sbr{\sup_{f \in \Fcal} \frac{2}{N} \sum_{i=1}^{N/2} \rho_i \cdot \nbr{f\rbr{\xb^u_{2i}}}_2^2}} \\
        = &\E_{\Xb^u, \rhob}\sbr{\sup_{f \in \Fcal} \frac{2}{N} \sum_{i=1}^{N/2} \rho_i \cdot \nbr{f\rbr{\xb^u_{2i}}}_2^2}.
    \end{align*}
    By the vector-contraction inequality for Rademacher complexities~\citep[Corollary 4]{maurer2016vector}, since $\nbr{f\rbr{\xb}}_2^2$ is $\rbr{2\sqrt{\bdf}}$-Lipschitz in $f(\xb)$, with $f_k(\xb)$ denoting the $k$-th entry of $f(\xb) \in \R^K$,
    \begin{align*}
        \E_{\Xb^u, \rhob}\sbr{\sup_{f \in \Fcal} \frac{2}{N} \sum_{i=1}^{N/2} \rho_i \cdot \nbr{f\rbr{\xb^u_{2i}}}_2^2}
        \le &2\sqrt{2 \bdf} \cdot \E_{\substack{\Xb^u \sim P(\xb)^{\frac{N}{2}} \\ \rhob \sim \rademacher^{\frac{N}{2} \times K}}}\sbr{\sup_{f \in \Fcal} \frac{2}{N} \sum_{i=1}^{N/2} \rho_{ik} \cdot f_k\rbr{\xb^u_{i}}} \\
        = &2\sqrt{2 \bdf} \Rfrak_{N/2}\rbr{\Fcal}.
    \end{align*}
    Overall, we have $\Rfrak_{N/2}\rbr{\reglapu{\cdot,\cdot} \circ \Fcal} \le 4 \sqrt{2 \bdf} \rbr{\bdf + \bdker} \Rfrak_{N/2}\rbr{\Fcal}$ which completes the proof.
\end{proof}

\subsection{Unlabeled Sample Complexity with Deep Neural Networks}\label{apx:rademacher_complexity_dnn}
Here, we ground \Cref{thm:unlabeled_sample_complexity} by exemplifying $\Rfrak_{N/2}(\Fcal)$ leveraging the existing generalization bound~\citep{golowich2018size} for deep neural networks.

\begin{claim}[Rademacher complexity of deep neural networks~\citep{golowich2018size}]
    Recall the notion of Rademacher complexity for a vector-valued function class $\Fcal \ni f: \Xcal \to \R^K$ from \Cref{eq:rademacher_complexity_vector_valued}. Adopting the existing results in \cite{golowich2018size} and following the notations in \Cref{sec:dac}, we consider $\Fcal$ as a class of deep neural networks:
    \begin{align*}
        \Fcal = \csepp{f(\xb) = \Ab_p \phi\rbr{\cdots \Ab_2 \phi(\Ab_1 \xb) \cdots}}{\nbr{\Ab_\iota}_F \le B_{\Fcal,\iota} ~\forall~ \iota \in [p]}
    \end{align*}
    with weight matrices $\csepp{\Ab_\iota}{\iota \in [p]}$ and a $1$-Lipschitz, positive-homogeneous activation function $\phi\rbr{\cdot}$ (\eg, ReLU) applied entry-wisely. 
    Let $B_\Fcal = \prod_{\iota=1}^{p} B_{\Fcal,\iota}$ and $\nbr{\xb}_2 \le B_\Xcal$ for all $\xb \in \Xcal$. Then, by \cite{golowich2018size} Theorem 1, we have 
    \begin{align*}
        \Rfrak_{N/2}(\Fcal) \le \frac{\rbr{2 \sqrt{p\log(2)} + \sqrt{2}} K B_\Xcal B_\Fcal}{\sqrt{N}}.
    \end{align*}
\end{claim}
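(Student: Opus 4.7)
The plan is to reduce the vector-valued Rademacher complexity defined in \Cref{eq:rademacher_complexity_vector_valued} to a sum of $K$ scalar-valued Rademacher complexities, each of which is immediately bounded by \citep{golowich2018size} Theorem 1.

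First, I would interpret the output index $k$ in \Cref{eq:rademacher_complexity_vector_valued} as implicitly summed over $[K]$---the standard convention for the vector-valued Rademacher complexity in \cite{maurer2016vector}---and split the double sum by subadditivity of the supremum:
\begin{align*}
    \Rfrak_{N/2}(\Fcal)
    = \E_{\Xb^u, \rhob}\sbr{\sup_{f \in \Fcal} \frac{2}{N}\sum_{i=1}^{N/2}\sum_{k=1}^K \rho_{ik}\, f_k(\xb_i^u)}
    \le \sum_{k=1}^K \E_{\Xb^u, \rhob}\sbr{\sup_{f \in \Fcal} \frac{2}{N}\sum_{i=1}^{N/2} \rho_{ik}\, f_k(\xb_i^u)}.
\end{align*}

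Next, for each fixed $k \in [K]$, I would view the scalar function class $\Fcal_k \dfeq \cbr{f_k : f \in \Fcal}$ as a class of $p$-layer scalar-valued neural networks with weights $\Ab_1, \ldots, \Ab_{p-1}, (\Ab_p)_{k,:}$, where the hidden weights satisfy $\nbr{\Ab_\iota}_F \le B_{\Fcal,\iota}$ by assumption and the final row satisfies $\nbr{(\Ab_p)_{k,:}}_2 \le \nbr{\Ab_p}_F \le B_{\Fcal,p}$. Hence the product of weight norms is uniformly bounded by $B_\Fcal = \prod_{\iota=1}^p B_{\Fcal,\iota}$, and \citep{golowich2018size} Theorem 1 applied to $\Fcal_k$ with $N/2$ samples upper bounds each summand by $\rbr{2\sqrt{p\log 2}+\sqrt{2}}\, B_\Xcal B_\Fcal / \sqrt{N}$ (up to the $\sqrt{N/2}$-to-$\sqrt{N}$ conversion absorbed into the numerical constant).

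Summing the $K$ identical bounds over $k \in [K]$ yields the stated inequality. The main obstacle is merely bookkeeping constants: tracking the $\sqrt{N/2}$ versus $\sqrt{N}$ factor and verifying that selecting a single row in the final layer cleanly inherits the Frobenius bound $B_{\Fcal,p}$ through $\nbr{(\Ab_p)_{k,:}}_2 \le \nbr{\Ab_p}_F$. No new analytical machinery beyond the peeling argument already established in \citep{golowich2018size} and basic subadditivity of the supremum is needed.
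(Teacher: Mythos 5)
Your proposal is correct and is exactly the derivation the paper intends: the paper offers no written proof beyond citing \citep{golowich2018size} Theorem 1, and your coordinate-wise splitting plus the row-norm bound $\nbr{(\Ab_p)_{k,:}}_2 \le \nbr{\Ab_p}_F$ is the natural bookkeeping that makes the citation rigorous. The constants check out precisely, since $\sqrt{2}\,(\sqrt{2p\log 2}+1) = 2\sqrt{p\log 2}+\sqrt{2}$ accounts for the $\sqrt{N/2}$-to-$\sqrt{N}$ conversion and the sum over $k$ produces the factor of $K$.
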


\subsection{Proof of \Cref{thm:laplacian_learning_cluster_guarantee_empirical}}\label{apx:laplacian_learning_cluster_guarantee_empirical}

\begin{proof}[Proof of \Cref{thm:laplacian_learning_cluster_guarantee_empirical}]
    Let $\Ub_\Xcal \dfeq \Db\rbr{\Xcal}^{1/2} \flapemp\rbr{\Xcal}$, $\Fb_\Xcal \dfeq \Db\rbr{\Xcal}^{1/2} \flappop\rbr{\Xcal}$, and $\yb_k \in [0,1]^{\abbr{\Xcal}}$ be the $k$-th column of $\Yb_\Xcal \dfeq \Db\rbr{\Xcal}^{1/2} \onehot{y_*}(\Xcal)$. We denote $\Pb_\Ub \dfeq \Ub_\Xcal \Ub_\Xcal^\pinv$ as the orthogonal projection onto $\range\rbr{\Ub_\Xcal}$ and $\Pb_\Ub^\perp \dfeq \Ib - \Pb_\Ub$ as the orthogonal complement of $\Pb_\Ub$.
    
    Recalling from the proof of \Cref{thm:laplacian_learning_cluster_guarantee}, \Cref{lemma:minority_prob} implies that for all $\flapemp \in \Flap{\Xb^u}$,
    \begin{align*}
        P\rbr{M(\flapemp)} 
        \le &2 \max\rbr{\frac{\beta^2}{\gamma^2},1} \sum_{k \in [K]} \min_{\zb_k \in \R^K} \nbr{\yb_k - \Ub_\Xcal \zb_k}_2^2
        = 2 \max\rbr{\frac{\beta^2}{\gamma^2},1} \sum_{k \in [K]} \nbr{\Pb_\Ub^\perp \yb_k}_2^2,
    \end{align*}
    where the second equality comes from taking $\zb_k = \Ub_\Xcal^\pinv \yb_k$ for every $k \in [K]$.

    Consider the spectral decomposition of $\Lb\rbr{\Xcal}$ in \Cref{eq:spectral_decomposition_laplacian}. Since the orthonormal eigenvectors $\cbr{\vb_i \in \R^{\abbr{\Xcal}}}_{i \in [\abbr{\Xcal}]}$ associated with eigenvalues $0 = \lambda_1 \le \dots \le \lambda_{\abbr{\Xcal}} \le 1$ form a basis of $\R^{\abbr{\Xcal}}$, by denoting $\Vb_K \dfeq \sbr{\vb_1, \dots, \vb_K} \in \R^{\abbr{\Xcal} \times K}$ and $\Vb_{K}^\perp \dfeq \sbr{\vb_{K+1}, \dots, \vb_{\abbr{\Xcal}}} \in \R^{\abbr{\Xcal} \times K}$, for every $k \in [K]$, 
    \begin{align*}
        \nbr{\Pb_\Ub^\perp \yb_k}_2^2
        = \sum_{i = 1}^{\abbr{\Xcal}} \rbr{\vb_i^\top \Pb_\Ub^\perp \yb_k}^2
        = \nbr{\Vb_K^\top \Pb_\Ub^\perp \yb_k}_2^2 + \nbr{\rbr{\Vb_K^\perp}^\top \Pb_\Ub^\perp \yb_k}_2^2.
    \end{align*}
    It is shown in the proof of \Cref{thm:laplacian_learning_cluster_guarantee} that 
    \begin{align*}
        \sum_{k \in [K]} \nbr{\rbr{\Vb_K^\perp}^\top \Pb_\Ub^\perp \yb_k}_2^2 
        \le \sum_{k \in [K]} \nbr{\rbr{\Vb_K^\perp}^\top \yb_k}_2^2 
        \le \frac{\fgtice}{\lambda_{K+1}},
    \end{align*}
    whereas by observing that $\nbr{\yb_k}_2^2 = \sum_{\xb \in \Xcal} w_\xb \cdot \iffun{y_*\rbr{\xb}=k} = P\rbr{\Xcal_k}$,
    \begin{align*}
        \sum_{k \in [K]} \nbr{\Vb_K^\top \Pb_\Ub^\perp \yb_k}_2^2
        \le \sum_{k \in [K]} \nbr{\yb_k}_2^2 \nbr{\Pb_\Ub^\perp \Vb_K}_F^2
        = \nbr{\Pb_\Ub^\perp \Vb_K}_F^2 \sum_{k \in [K]} P\rbr{\Xcal_k}
        = \nbr{\Pb_\Ub^\perp \Vb_K}_F^2.
    \end{align*}
    Overall, we have
    \begin{align}\label{eq:pf_flapemp_minority_prob_upper}
        P\rbr{M(\flapemp)} 
        \le 2 \max\rbr{\frac{\beta^2}{\gamma^2},1} \rbr{\frac{\fgtice}{\lambda_{K+1}} + \nbr{\Pb_\Ub^\perp \Vb_K}_F^2}.
    \end{align}

    \paragraph{Upper bounding $\nbr{\Pb_\Ub^\perp \Vb_K}_F^2$.}
    We first recall from \Cref{eq:spectral_decomposition_laplacian} that $\ol\Wb(\Xcal) = \sum_{i=1}^{\abbr{\Xcal}} \rbr{1-\lambda_i} \vb_i \vb_i^\top$. For $\flappop \in \Flap{\Xcal}$, Eckart-Young-Mirsky theorem~\citep{eckart1936} implies that $\Fb_\Xcal \Fb_\Xcal^\top = \sum_{i=1}^{K} \rbr{1-\lambda_i} \vb_i \vb_i^\top$, and 
    \begin{align*}
        R\rbr{\flappop} = \nbr{\ol\Wb(\Xcal) - \Fb_\Xcal \Fb_\Xcal^\top}_F^2 = \nbr{\sum_{i>K} \rbr{1-\lambda_i} \vb_i \vb_i^\top}_F^2 = \sum_{i>K} \rbr{1-\lambda_i}^2.
    \end{align*}
    Therefore, $\reglappop\rbr{\flapemp} \le \reglappop\rbr{\flappop} + \Delta$ implies that 
    \begin{align}\label{eq:pf_reglappop_reglapemp_delta}
    \begin{split}
        \sum_{i>K} \rbr{1-\lambda_i}^2 + \Delta 
        \ge &\nbr{\ol\Wb(\Xcal) - \Ub_\Xcal \Ub_\Xcal^\top}_F^2 
        = \nbr{\rbr{\ol\Wb(\Xcal) - \Pb_\Ub \ol\Wb(\Xcal)} + \rbr{\Pb_\Ub \ol\Wb(\Xcal) - \Ub_\Xcal \Ub_\Xcal^\top}}_F^2 \\
        = &\nbr{\Pb_\Ub^\perp \ol\Wb(\Xcal) + \Pb_\Ub \rbr{\ol\Wb(\Xcal) - \Ub_\Xcal \Ub_\Xcal^\top}}_F^2 \quad
        \rbr{\t{since $\Pb_\Ub$ and $\Pb_\Ub^\perp$ are orthogonal}} \\
        = &\nbr{\Pb_\Ub^\perp \ol\Wb(\Xcal)}_F^2 + \nbr{\Pb_\Ub \rbr{\ol\Wb(\Xcal) - \Ub_\Xcal \Ub_\Xcal^\top}}_F^2 \\
        \ge &\nbr{\Pb_\Ub^\perp \ol\Wb(\Xcal)}_F^2 
        = \nbr{\sum_{i=1}^{\abbr{\Xcal}} \rbr{1-\lambda_i} \rbr{\Pb_\Ub^\perp \vb_i} \vb_i^\top}_F^2 \\
        = &\tr\rbr{\sum_{i=1}^{\abbr{\Xcal}} \sum_{j=1}^{\abbr{\Xcal}} \rbr{1-\lambda_i} \rbr{1-\lambda_j} \rbr{\Pb_\Ub^\perp \vb_i} \vb_i^\top \vb_j \rbr{\Pb_\Ub^\perp \vb_j}^\top} \\
        = &\sum_{i=1}^{\abbr{\Xcal}} \rbr{1-\lambda_i}^2 \nbr{\Pb_\Ub^\perp \vb_i}_2^2.
    \end{split}
    \end{align}
    
    Meanwhile, the assumption $\Delta < \rbr{1-\lambda_K}^2$ implies that $\rank\rbr{\Pb_\Ub^\perp} = \abbr{\Xcal}-K$, and thus
    \begin{align}\label{eq:pf_sum_xi}
        \sum_{i=1}^{\abbr{\Xcal}} \nbr{\Pb_\Ub^\perp \vb_i}_2^2
        = \nbr{\Pb_\Ub^\perp}_F^2
        = \tr\rbr{\Pb_\Ub^\perp}
        = \rank\rbr{\Pb_\Ub^\perp}
        = \abbr{\Xcal}-K.
    \end{align}
    Otherwise by contradiction, suppose $\rank\rbr{\Pb_\Ub} = \rank\rbr{\Ub_\Xcal} < K$, then Eckart-Young-Mirsky theorem~\citep{eckart1936} implies that
    \begin{align*}
        \min_{\rank\rbr{\Ub_\Xcal} < K} \nbr{\ol\Wb(\Xcal) - \Ub_\Xcal \Ub_\Xcal^\top}_F^2 
        = \sum_{i=K}^{\abbr{\Xcal}} \rbr{1 - \lambda_i}^2 
        > \sum_{i>K} \rbr{1 - \lambda_i}^2 + \Delta
        = \reglappop\rbr{\flappop} + \Delta,
    \end{align*}
    which contradicts $\reglappop\rbr{\flapemp} \le \reglappop\rbr{\flappop} + \Delta$.

    Furthermore, since $\nbr{\vb_i}_2 = 1$ and $\Pb_\Ub^\perp$ is an orthogonal projection, for all $i \in \sbr{\abbr{\Xcal}}$,
    \begin{align}\label{eq:pf_zero_one_bound}
        0 \le \nbr{\Pb_\Ub^\perp \vb_i}_2^2 \le 1.
    \end{align}

    For every $i \in \sbr{\abbr{\Xcal}}$, we denote $\xi_i \dfeq \nbr{\Pb_\Ub^\perp \vb_i}_2^2$. Then, upper bounding $\nbr{\Pb_\Ub^\perp \Vb_K}_F^2 = \sum_{i=1}^{K} \xi_i$ can be recast as a linear programming problem:
    \begin{align}\label{eq:pf_lp_upper_bound_pupv}
    \begin{split}
        \mathrm{(P):} \quad
        \max_{\xi \in \R^{\abbr{\Xcal}}} &\sum_{i=1}^{K} \xi_i \\
        \mathrm{s.t.}\quad
        &0 \le \xi_i \le 1, \\
        &\sum_{i=1}^{\abbr{\Xcal}} \xi_i = \abbr{\Xcal} - K, \\
        &\sum_{i=1}^{\abbr{\Xcal}} \rbr{1-\lambda_i}^2 \xi_i \le \sum_{i>K} \rbr{1-\lambda_i}^2 + \Delta,
    \end{split}
    \end{align}
    whose dual can be expressed as
    \begin{align}\label{eq:pf_lp_upper_bound_pupv_dual}
    \begin{split}
        \mathrm{(D):} \quad
        \min_{\rbr{\omega_1,\cdots,\omega_{\abbr{\Xcal}},\omega_s,\omega_{\Delta}}} &\sum_{i=1}^{\abbr{\Xcal}} \omega_i - \rbr{\abbr{\Xcal} - K} \omega_s + \rbr{\Delta + \sum_{i>K} \rbr{1-\lambda_i}^2} \omega_\Delta \\
        \mathrm{s.t.}\quad
        &\omega_1,\cdots,\omega_{\abbr{\Xcal}},\omega_s,\omega_{\Delta} \ge 0, \\
        &\omega_i - \omega_s + \rbr{1-\lambda_i}^2 \omega_\Delta \ge 1 \quad \forall~ i \le K, \\
        &\omega_i - \omega_s + \rbr{1-\lambda_i}^2 \omega_\Delta \ge 0 \quad \forall~ i > K.
    \end{split}
    \end{align}
    By taking 
    \begin{align*}
        &\omega_i = 0 \quad \forall~ i \le K_0, \\
        &\omega_i = \frac{\rbr{1-\lambda_{K_0}}^2 - \rbr{1-\lambda_{i}}^2}{\rbr{1-\lambda_{K_0}}^2 - \rbr{1-\lambda_{K+1}}^2} \quad \forall~ K_0 < i \le K, \\
        &\omega_i = \frac{\rbr{1-\lambda_{K+1}}^2 - \rbr{1-\lambda_{i}}^2}{\rbr{1-\lambda_{K_0}}^2 - \rbr{1-\lambda_{K+1}}^2} \quad \forall~ K+1 \le i \le \abbr{\Xcal}, \\
        &\omega_s = \frac{\rbr{1-\lambda_{K+1}}^2}{\rbr{1-\lambda_{K_0}}^2 - \rbr{1-\lambda_{K+1}}^2}, \\
        &\omega_\Delta = \frac{1}{\rbr{1-\lambda_{K_0}}^2 - \rbr{1-\lambda_{K+1}}^2},
    \end{align*}
    the dual optimum can be upper bounded by
    \begin{align*}
        \mathrm{(D)} 
        = &\sum_{i=1}^K \omega_i + \Delta \cdot \omega_\Delta \\
        = &\frac{\Delta}{\rbr{1-\lambda_{K_0}}^2 - \rbr{1-\lambda_{K+1}}^2} + \sum_{i=K_0+1}^K \frac{\rbr{1-\lambda_{K_0}}^2 - \rbr{1-\lambda_{i}}^2}{\rbr{1-\lambda_{K_0}}^2 - \rbr{1-\lambda_{K+1}}^2} \\
        \le &\frac{\rbr{1+\rbr{K-K_0} C_{K_0}} \Delta}{\rbr{1-\lambda_{K_0}}^2 - \rbr{1-\lambda_{K+1}}^2},
    \end{align*}
    given $C_{K_0} \dfeq \frac{\rbr{1 - \lambda_{K_0}}^2 - \rbr{1 - \lambda_{K}}^2}{\Delta} = O(1)$.

    With both the primal \Cref{eq:pf_lp_upper_bound_pupv} and the dual \Cref{eq:pf_lp_upper_bound_pupv_dual} being feasible and bounded, the strong duality then implies that
    \begin{align*}
        \nbr{\Pb_\Ub^\perp \Vb_K}_F^2 = \sum_{i=1}^{K} \xi_i = \mathrm{(P)} = \mathrm{(D)} \le \frac{\rbr{1+\rbr{K-K_0} C_{K_0}} \Delta}{\rbr{1-\lambda_{K_0}}^2 - \rbr{1-\lambda_{K+1}}^2},
    \end{align*}
    which, when plugging back into \Cref{eq:pf_flapemp_minority_prob_upper}, completes the proof.
\end{proof}

\section{Proofs and Discussions for \Cref{sec:dac}}\label{apx:dac}

\subsection{Expansion-based Data Augmentation}\label{apx:expansion_based_data_augmentation}

For the detailed analysis, we further recall the following notion of constant expansion and its relation with the concept of $c$-expansion in \Cref{def:expansion_based_augmentation} from \cite{wei2021theoretical}.
\begin{definition}[$(q, \xi)$-constant expansion~\citep{wei2021theoretical}]\label{def:constant_expansion}
    We say the expansion-based data augmentation (\Cref{def:expansion_based_augmentation}) satisfies $(q, \xi)$-constant expansion if for any $S \subseteq \Xcal$ such that $P(S) \ge q$ and $P\rbr{S \cap \Xcal_k} \le P\rbr{\Xcal_k}/2$ for all $k \in [K]$, 
    \begin{align*}
        P\rbr{\nbhd(S)} > \min\cbr{P(S), \xi} + P(S).
    \end{align*}
\end{definition}

\begin{lemma}[$c$-expansion implies constant expansion~\citep{wei2021theoretical}]\label{lemma:multi_constant_expansion}
    The $c$-expansion property (\Cref{def:expansion_based_augmentation}) implies $\rbr{\frac{\xi}{c-1}, \xi}$-constant expansion (\Cref{def:constant_expansion}) for any $\xi > 0$.
\end{lemma}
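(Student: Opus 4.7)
The plan is to reduce the class-wise multiplicative bound supplied by $c$-expansion to the global additive bound required by $(q,\xi)$-constant expansion via a single algebraic identity and one use of the class-balance hypothesis. Fix $\xi > 0$ and let $S \subseteq \Xcal$ satisfy $P(S) \ge \xi/(c-1)$ and $a_k \dfeq P(S \cap \Xcal_k) \le P(\Xcal_k)/2 \dfeq b_k/2$ for every $k \in [K]$; the goal is to show $P(\nbhd(S)) > P(S) + \min\{P(S), \xi\}$.

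Because $\{\Xcal_k\}_{k \in [K]}$ partition $\Xcal$, I would first sum the class-wise $c$-expansion inequality $P(\nbhd(S) \cap \Xcal_k) > \min\{c a_k, b_k\}$ across $k$, and rewrite each term via the identity $\min\{c a_k, b_k\} = a_k + \min\{(c-1) a_k,\, b_k - a_k\}$, obtaining
\[
P(\nbhd(S)) - P(S) \;>\; \sum_{k=1}^{K} \min\{(c-1) a_k,\; b_k - a_k\}.
\]
The class-balance hypothesis $a_k \le b_k/2$ gives $b_k - a_k \ge a_k$ coordinate-wise, so each term is bounded below by $\min\{c-1,\, 1\} \cdot a_k$, and summing yields $P(\nbhd(S)) - P(S) > \min\{c-1,\, 1\} \cdot P(S)$.

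A short case split then finishes the argument. When $c \ge 2$ we have $\min\{c-1,\, 1\} = 1$, so $P(\nbhd(S)) > 2 P(S) \ge P(S) + \min\{P(S), \xi\}$ automatically. When $1 < c < 2$ we have $\min\{c-1,\, 1\} = c-1$, and the hypothesis $P(S) \ge \xi/(c-1)$ gives $(c-1) P(S) \ge \xi \ge \min\{P(S), \xi\}$. The main obstacle to anticipate is that neither regime alone suffices with a uniform constant: the threshold $q = \xi/(c-1)$ is precisely what is needed to absorb the weaker expansion factor $c-1 < 1$ in the regime $c \in (1, 2)$, while the hypothesis $a_k \le b_k/2$ is what rescues the tighter factor $1$ from the $b_k - a_k$ side of the inner minimum when $c \ge 2$, making the two cases compatible under a single choice of $q$.
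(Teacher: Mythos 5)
Your proposal is correct and follows essentially the same route as the paper's proof: sum the class-wise expansion inequality, shift by $P(S\cap\Xcal_k)$ via the identity $\min\{c a_k,b_k\}-a_k=\min\{(c-1)a_k,\,b_k-a_k\}$, use $a_k\le b_k/2$ to lower bound the inner minimum, and split into the cases $c\in(1,2)$ (where $P(S)\ge\xi/(c-1)$ absorbs the factor $c-1$) and $c\ge 2$. The only cosmetic difference is that you work with $\nbhd(S)\cap\Xcal_k$ directly while the paper applies expansion to each $S_k$ and uses $P(\nbhd(S)\setminus S)\ge\sum_k\bigl(P(\nbhd(S_k))-P(S_k)\bigr)$; the substance is identical.
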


\begin{proof}[Proof of \Cref{lemma:multi_constant_expansion}]
    Consider a subset $S \subseteq \Xcal$ such that $P(S) \ge q$ and $P\rbr{S \cap \Xcal_k} \le P\rbr{\Xcal_k}/2$. Let $S_k \dfeq S \cap \Xcal_k$ for every $k \in [K]$.
    \begin{enumerate}[label=(\roman*)]
        \item If $c \in (1,2)$, since $P\rbr{\Xcal_k} - P\rbr{S_k} \ge P\rbr{S_k} \ge (c-1) P\rbr{S_k}$
        \begin{align*}
            P\rbr{\nbhd\rbr{S} \setminus S} 
            \ge &\sum_{k=1}^K P\rbr{\nbhd\rbr{S_k}} - P\rbr{S_k} \\
            > &\sum_{k=1}^K \min\cbr{(c-1) P\rbr{S_k}, P\rbr{\Xcal_k} - P\rbr{S_k}} \\
            \ge &\sum_{k=1}^K (c-1) P(S_k) 
            = (c-1) P(S) 
            \ge (c-1) q 
            = \xi
        \end{align*}

        \item If $c \ge 2$, since $c-1\ge 1$ and $P\rbr{\Xcal_k} - P\rbr{S_k} \ge P\rbr{S_k}$, we have
        \begin{align*}
            P\rbr{\nbhd\rbr{S} \setminus S} 
            > &\sum_{k=1}^K \min\cbr{(c-1) P\rbr{S_k}, P\rbr{\Xcal_k} - P\rbr{S_k}} 
            \ge \sum_{k=1}^K P\rbr{S_k}
            = P(S)
        \end{align*}
    \end{enumerate}
    Overall, we have
    \begin{align*}
        P\rbr{\nbhd(S)} 
        = P\rbr{\nbhd\rbr{S} \setminus S} + P(S)
        > \min\cbr{P(S), \xi} + P(S).
    \end{align*}
\end{proof}

\subsection{Data Augmentation Consistency Regularization}\label{apx:dac_regularization}

To bridge \Cref{eq:dac_function_class} with common DAC regularization algorithms in practice, in \Cref{ex:fixmatch}, we instantiate FixMatch~\citep{sohn2020fixmatch} -- a state-of-the-art semi-supervised learning algorithm that leverages DAC regularization by encouraging similar predictions of proper weak and strong data augmentations $\xb^w, \xb^s \in \Acal(\xb)$ of the same sample $\xb$.  
\begin{example}[FixMatch~\citep{sohn2020fixmatch}]\label{ex:fixmatch}
    FixMatch minimizes the loss between the outputs of the strong augmentation $f(\xb^s)$ and the pseudo-label of the weak augmentation $y_f\rbr{\xb^w}$ when the confidence of such pseudo-labeling is high (\ie, $\max_{k \in [K]} f\rbr{\xb^w}_k$ is larger than a threshold).

    To connect FixMatch with \Cref{eq:dac_function_class}, we consider a fixed (unknown) margin-robust neighborhood $\Fcal' \supseteq \Fdac{\Xb^u} \ni f_*$ where the strong augmentation characterizes the worst-case (minimum) robust margin $\xb^s \in \bigcap_{f \in \Fcal'} \argmin_{\xb' \in \Acal(\xb)} m\rbr{f, \xb', y_f(\xb)}$; while the weak augmentation preserves the classification $y_f\rbr{\xb^w} = y_f\rbr{\xb}$.

    Then, for any $f \in \Fcal'$ during learning, minimizing the cross-entropy loss between $\softmax\rbr{f(\xb^s)}$ and the one-hot label $\onehot{y_f}\rbr{\xb^w}$ is equivalent to enforcing a large enough margin $m\rbr{f, \xb^s, y_f(\xb)} \ge \tau$, whereas $m\rbr{f, \xb', y_f(\xb)} \ge m\rbr{f, \xb^s, y_f(\xb)} \ge \tau$ for all $\xb' \in \Acal(\xb)$ by construction implies $m_\Acal\rbr{f, \xb} \ge \tau$ as required by the constraints in \Cref{eq:dac_function_class}.
\end{example}

\subsection{Proof of \Cref{thm:dac_clustering_error}}\label{apx:dac_clustering_error}
\begin{proof}[Proof of \Cref{thm:dac_clustering_error}]
    We first recall from \Cref{lemma:multi_constant_expansion} that $c$-expansion implies $\rbr{\frac{2 \nu(f)}{c-1}, 2 \nu(f)}$-constant expansion (\Cref{def:constant_expansion}) for any $f \in \Fcal$. 
    Therefore, it is sufficient to show that with $\rbr{q, 2 \nu(f)}$-constant expansion (\ie, $q = \frac{2 \nu(f)}{c-1}$),
    \begin{align}\label{eq:pf_dac_clustering_error_1}
        P\rbr{M(f)} \le \max\cbr{q, 2 \nu(f)}
        \quad \forall~ f \in \Fdac{\Xb^u}.
    \end{align}

    For any $f \in \Fdac{\Xb^u}$, we notice that \Cref{eq:pf_dac_clustering_error_1} is automatically satisfied when $P\rbr{M(f)} < q$. 
    Otherwise, when $P\rbr{M(f)} \ge q$, then along with $P\rbr{S \cap \Xcal_k} \le P\rbr{\Xcal_k}/2$ for all $k \in [K]$, $\rbr{q, 2 \nu(f)}$-constant expansion implies 
    \begin{align}\label{eq:pf_dac_clustering_error_2}
        P\rbr{\nbhd\rbr{M(f)}} > \min\cbr{P\rbr{M(f)}, 2 \nu(f)} + P\rbr{M(f)}.
    \end{align}

    Meanwhile, we observe that by union bound,
    \begin{align}\label{eq:pf_dac_clustering_error_3}
        P\rbr{\nbhd\rbr{M(f)} \setminus M(f)} \le 2 \PP\sbr{\exists~ \xb' \in \Acal(\xb) ~\st~ y_f(\xb) \ne y_f\rbr{\xb'}} = 2 \nu(f)
    \end{align}
    because for any $\xb \in \nbhd\rbr{M(f)} \setminus M(f)$, there exists $\xb' \in M(f)$ such that one can find some $\xb'' \in \Acal(\xb) \cap \Acal\rbr{\xb'}$; and therefore exactly one of the follows must hold
    \begin{enumerate}[label=(\roman*)]
        \item if $\xb'' \in M(f)$, then $\wt y_f\rbr{\xb''} \ne y_*\rbr{\xb} = \wt y_f\rbr{\xb}$ implies $y_f\rbr{\xb''} \ne y_f\rbr{\xb}$ for $\xb'' \in \Acal\rbr{\xb}$;
        \item if $\xb'' \notin M(f)$, then $\wt y_f\rbr{\xb''} = y_*\rbr{\xb} \ne \wt y_f\rbr{\xb'}$ implies $y_f\rbr{\xb''} \ne y_f\rbr{\xb'}$ for $\xb'' \in \Acal\rbr{\xb'}$.
    \end{enumerate}

    Leveraging \Cref{eq:pf_dac_clustering_error_2} and \Cref{eq:pf_dac_clustering_error_3}, we have 
    \begin{align*}
        P\rbr{M(f)} 
        \ge &P\rbr{\nbhd\rbr{M(f)}} - P\rbr{\nbhd\rbr{M(f)} \setminus M(f)} \\
        > &\min\cbr{P\rbr{M(f)}, 2 \nu(f)} + P\rbr{M(f)} - 2 \nu(f),
    \end{align*}
    which leads to $\min\cbr{P\rbr{M(f)}, 2 \nu(f)} < 2 \nu(f)$ and implies $P\rbr{M(f)} < 2 \nu(f)$.
\end{proof}
\section{Technical Lemmas}

Here, we briefly review the standard Rademacher-complexity-based generalization analysis based on McDiarmid's inequality and a classical symmetrization argument \cite{wainwright2019,bartlett2003}.

\begin{lemma}[Generalization guarantee with Rademacher complexity]\label{lemma:generalization_Rademacher}
    Given a distribution over a set $\Zcal$\footnote{Without ambiguity, we overload the notation $\Zcal$ as the set as well as the distribution over the set.}, we consider a set of $\iid$ samples $\cbr{\zb_i}_{i \in [n]} \sim \Zcal^n$ with $\Zb = \sbr{\zb_1,\dots,\zb_n}$ and a $B$-bounded function class $\Hcal = \csep{h: \Zcal \to \R \ }{\ \abbr{h(\zb) - h(\zb')} \le B ~\forall~ \zb,\zb' \in \Zcal}$. By denoting
    \begin{align*}
        H(h) \dfeq \E_{\zb \sim \Zcal}\sbr{h(\zb)} \quad \t{and} \quad
        \wh H_\Zb(h) \dfeq \frac{1}{n} \sum_{i=1}^n h(\zb_i) \quad \t{s.t.} \quad
        \E_{\Zb \sim \Zcal^n}\sbr{\wh H_\Zb(h)} = H(h),
    \end{align*}
    then for $h^* \in \argmin_{h \in \Hcal} H(h)$ and $\wh h \in \argmin_{h \in \Hcal} \wh H_\Zb(h)$, we have that with probability at least $1-\delta/2$ over $\Zb$,
    \begin{align*}
        H(\wh h) - H(h^*) \le 4 \Rfrak_n\rbr{\Hcal} + \sqrt{\frac{2 B^2 \log(4/\delta)}{n}}.
    \end{align*}
    where $\Rfrak_n\rbr{\Hcal}$ is the Rademacher complexity of $\Hcal$,
    \begin{align*}
        \Rfrak_n\rbr{\Hcal} 
        \dfeq \E_{\substack{\Zb \sim \Zcal^n \\ \rhob \sim \rademacher^n}} \sbr{ \sup_{h \in \Hcal}\ \frac{1}{n} \sum_{i=1}^n \rho_i \cdot h(\zb_i)}, \quad
        \rademacher(\rho) = 
        \begin{cases}
            1/2, & \rho = -1 \\
            1/2, & \rho = 1 \\
            0, & \rho \ne \pm 1
        \end{cases}.
    \end{align*}
\end{lemma}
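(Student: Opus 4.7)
The plan is to follow the textbook three-step route for excess-risk bounds via Rademacher complexity: reduce excess risk to uniform deviation, symmetrize to introduce Rademacher variables, then apply McDiarmid to concentrate the uniform deviation.

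First I would establish the excess risk decomposition. Since $\wh h$ minimizes $\wh H_\Zb$ over $\Hcal$, we have $\wh H_\Zb(\wh h) - \wh H_\Zb(h^*) \le 0$, and inserting this gives
\begin{align*}
    H(\wh h) - H(h^*) &= \rbr{H(\wh h) - \wh H_\Zb(\wh h)} + \rbr{\wh H_\Zb(\wh h) - \wh H_\Zb(h^*)} + \rbr{\wh H_\Zb(h^*) - H(h^*)} \\
    &\le 2 \sup_{h \in \Hcal} \abbr{H(h) - \wh H_\Zb(h)}.
\end{align*}
It thus suffices to control $\Phi(\Zb) \dfeq \sup_{h \in \Hcal} \abbr{H(h) - \wh H_\Zb(h)}$ with probability at least $1-\delta/2$.

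Next I would bound $\E_\Zb[\Phi(\Zb)]$ by a standard symmetrization argument. Introducing an independent ghost sample $\Zb' = \sbr{\zb'_1,\dots,\zb'_n} \sim \Zcal^n$ and using Jensen's inequality to pull the expectation over $\Zb'$ inside the supremum, one obtains $\E_\Zb[\Phi(\Zb)] \le \E_{\Zb,\Zb'}\sbr{\sup_{h \in \Hcal} \abbr{\frac{1}{n} \sum_{i=1}^n \rbr{h(\zb'_i) - h(\zb_i)}}}$. Since $h(\zb'_i)-h(\zb_i)$ is symmetric in distribution, multiplying each term by an independent Rademacher $\rho_i$ does not change the distribution, and a triangle inequality then yields $\E_\Zb[\Phi(\Zb)] \le 2 \Rfrak_n(\Hcal)$.

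Then I would apply McDiarmid's bounded differences inequality to $\Phi$. Replacing any single $\zb_i$ with an independent copy changes $\wh H_\Zb(h)$ by at most $B/n$ for every $h \in \Hcal$ (by the $B$-boundedness hypothesis), hence the same holds for $\Phi$. McDiarmid then gives $\PP\sbr{\Phi(\Zb) - \E[\Phi(\Zb)] \ge t} \le \exp\rbr{-2nt^2/B^2}$, so with probability at least $1-\delta/2$,
\begin{align*}
    \Phi(\Zb) \le 2\Rfrak_n\rbr{\Hcal} + B\sqrt{\frac{\log(2/\delta)}{2n}}.
\end{align*}
Doubling through the excess risk decomposition and simplifying $2B\sqrt{\log(2/\delta)/(2n)} = \sqrt{2B^2 \log(2/\delta)/n} \le \sqrt{2B^2 \log(4/\delta)/n}$ yields the claimed bound.

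The main obstacle is essentially bookkeeping: the statement is classical (see \citep{wainwright2019,bartlett2003}, referenced just above the lemma), and the only care required is tracking constants carefully through the symmetrization-plus-concentration pipeline so that the $\delta/2$ failure probability and the $\sqrt{2B^2 \log(4/\delta)/n}$ term line up exactly; no novel technique is needed.
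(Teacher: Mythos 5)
Your overall route is the same as the paper's: excess-risk decomposition into uniform deviations, symmetrization to introduce Rademacher variables, and McDiarmid's inequality for concentration. There is, however, one step that does not go through as written. You apply symmetrization to the \emph{two-sided} quantity $\Phi(\Zb) = \sup_{h \in \Hcal} \abbr{H(h) - \wh H_\Zb(h)}$ and claim that the triangle inequality yields $\E_\Zb\sbr{\Phi(\Zb)} \le 2\Rfrak_n\rbr{\Hcal}$. What the triangle inequality actually produces is $2\,\E\sbr{\sup_{h \in \Hcal} \abbr{\frac{1}{n}\sum_{i=1}^n \rho_i h(\zb_i)}}$, i.e.\ the \emph{absolute-value} Rademacher complexity, which in general strictly exceeds the quantity $\Rfrak_n\rbr{\Hcal}$ as defined in the lemma (without the absolute value inside the supremum); for a function class not closed under negation the two can differ. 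So as stated, the inequality $\E_\Zb\sbr{\Phi(\Zb)} \le 2\Rfrak_n\rbr{\Hcal}$ is unjustified.

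The paper avoids this by never forming the two-sided supremum before concentration: it bounds $\sup_{h}\rbr{H(h) - \wh H_\Zb(h)}$ and $\sup_{h}\rbr{\wh H_\Zb(h) - H(h)}$ \emph{separately}, each of which symmetrizes cleanly to $2\Rfrak_n\rbr{\Hcal}$ (using $\sup_h\sbr{a_h+b_h}\le\sup_h a_h+\sup_h b_h$ and the sign-symmetry of $\rhob$, with no absolute values appearing), applies McDiarmid to each at level $\delta/4$, and union-bounds. This split is exactly why the final constant is $\log(4/\delta)$ rather than your $\log(2/\delta)$. Your proof is repaired by adopting the same split (at the cost of recovering $\log(4/\delta)$ exactly), or by proving separately that the absolute-value Rademacher complexity is controlled for the class at hand; the rest of your argument (the decomposition via $\wh H_\Zb(\wh h) \le \wh H_\Zb(h^*)$, the $B/n$ bounded-difference computation, and the final bookkeeping) matches the paper and is correct.
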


\begin{proof}[Proof of \Cref{lemma:generalization_Rademacher}]
    Let $g\rbr{\zb_1,\dots,\zb_n} \dfeq g(\Zb) \dfeq \sup_{h \in \Hcal} H(h) - \wh H_\Zb(h)$. Then $h \in \Hcal$ being $B$-bounded implies that for any $\zb \sim \Zcal$ and $i \in [n]$,
    \begin{align*}
        \abbr{g(\zb_1,\dots,\zb_i,\dots,\zb_n) - g(\zb_1,\dots,\zb,\dots,\zb_n)} \le \frac{B}{n}.
    \end{align*}
    Therefore, by McDiarmid's inequality~\citep{bartlett2003},
    \begin{align*}
        \PP\sbr{g(\Zb) \geq \E[g(\Zb)] + t} 
        \leq \exp\rbr{-\frac{2 n t^2}{B^2}},
    \end{align*}
    where $\exp\rbr{-\frac{2 n t^2}{B^2}} \le \frac{\delta}{4}$ when $t \ge \sqrt{\frac{B^2 \log(4/\delta)}{2n}}$.

    Meanwhile, by a classical symmetrization argument (e.g., proof of \cite{wainwright2019} Theorem 4.10), we can bound the expectation $\E[g(\Zb)]$: for an independent sample set $\Zb' \sim \Zcal^n$ that is also independent of $\Zb$, 
    \begin{align*}
        \E\sbr{g(\Zb)} 
        = &\E_{\Zb} \sbr{ \sup_{h \in \Hcal}\ H(h) - \wh{H}_{\Zb}(h)} \\
        = &\E_{\Zb} \sbr{ \sup_{h \in \Hcal}\ \E_{\Zb'} \sbr{\wh H_{\Zb'}(h) - \wh H_{\Zb}(h) ~\middle|~ \Zb} } \\
        \le &\E_{\Zb} \sbr{ \E_{\Zb'} \sbr{ \sup_{h \in \Hcal}\ \wh H_{\Zb'}(h) - \wh H_{\Zb}(h) ~\middle|~ \Zb} } \\
        = & \E_{\rbr{\Zb,\Zb'}} \sbr{\sup_{h \in \Hcal}\ \wh H_{\Zb'}(h) - \wh H_{\Zb}(h) },
    \end{align*}
    where in the last line we have used the law of iterated conditional expectation. 
    Then, with Rademacher random variables $\rhob = \rbr{\rho_1,\dots,\rho_n} \sim \rademacher^n$, we have
    \begin{align*}
        \E\sbr{g(\Zb)} 
        \leq & \E_{\rbr{\Zb,\Zb',\rhob}} \sbr{\sup_{h \in \Hcal}\ \frac{1}{n} \sum_{i=1}^n \rho_i \cdot \rbr{h(\zb'_i) - h(\zb_i)} }
        \\
        \leq & 2 \E_{\rbr{\Zb,\rhob}} \sbr{ \sup_{h \in \Hcal}\ \frac{1}{n} \sum_{i=1}^n \rho_i \cdot h(\zb_i) }
        \\
        \leq & 2 \Rfrak_n\rbr{\Hcal}.
    \end{align*}
    Therefore, with probability at least $1-\delta/4$ over $\Zb$, we have
    \begin{align*}
        \sup_{h \in \Hcal} H(h) - \wh H_\Zb(h) \le 2 \Rfrak_n\rbr{\Hcal} + \sqrt{\frac{B^2 \log(4/\delta)}{2n}},
    \end{align*}
    while the same tail bound holds for $\sup_{h \in \Hcal} - H(h)+ \wh H_\Zb(h)$ via an analogous argument.

    By recalling the definition of $\wh h$ and $h^*$, we can decompose the excess risk $H(\wh h) - H(h^*)$ as
    \begin{align*}
    H(\wh{h}) - H(h^*) &= 
    \rbr{H(\wh{h}) - \wh H_\Zb(\wh{h})} + \rbr{\wh H_\Zb(\wh{h}) - \wh H_\Zb(h^*)} + \rbr{\wh H_\Zb(h^*) - H(h^*)} \\
    &\le \rbr{H(\wh{h}) - \wh H_\Zb(\wh{h})} + \rbr{\wh H_\Zb(h^*) - H(h^*)}
    \end{align*}
    where $\wh H_\Zb(\wh{h}) - \wh H_\Zb(h^*) \leq 0$ by the basic inequality. With both $\wh{h}, h^* \in \Hcal$, we have
    \begin{align*}
        H(\wh{h}) - H(h^*) 
        \leq 2 \sup_{h \in \Hcal}\ \abbr{H(h) - \wh H_\Zb(h)},
    \end{align*}
    and therefore
    \begin{align*}
        \PP\sbr{H(\wh{h}) - H(h^*) \geq t} \leq
        \PP\sbr{\sup_{h \in \Hcal} H(h) - \wh H_\Zb(h) \geq \frac{t}{2}} + 
        \PP\sbr{\sup_{h \in \Hcal} -H(h) + \wh H_\Zb(h) \geq \frac{t}{2} }.
    \end{align*}
    Overall, with probability at least $1-\delta/2$ over $\Zb$,
    \begin{align*}
        H(\wh h) - H(h^*) \le 4 \Rfrak_n\rbr{\Hcal} + \sqrt{\frac{2 B^2 \log(4/\delta)}{n}}.
    \end{align*}
\end{proof}

\end{document}